\documentclass[11pt]{article}

%

\usepackage{fullpage}
\usepackage{natbib}
\usepackage{algorithm}
\usepackage[noend]{algorithmic}
\usepackage{amsmath,amsthm,amsfonts,amssymb}
\usepackage{amsmath}
\usepackage{hyperref}
\usepackage{color}
\usepackage{mathrsfs}
\usepackage{enumitem}
\usepackage{bm}
\usepackage{multirow}
\usepackage{booktabs}
\usepackage{makecell}
\usepackage{graphicx}
\usepackage{subfigure}
\usepackage{caption}
\usepackage{thmtools}
\usepackage{thm-restate}
\usepackage{hhline}
\usepackage[table]{xcolor}
\definecolor{light-gray}{gray}{0.9}

\newcommand{\defeq}{\mathrel{\mathop:}=}

\newcommand{\order}[1]{\mathcal{O}(#1)}


\newcommand{\vect}[1]{\ensuremath{\mathbf{#1}}}
\newcommand{\mat}[1]{\ensuremath{\mathbf{#1}}}
\newcommand{\dd}{\mathrm{d}}
\newcommand{\grad}{\nabla}
\newcommand{\hess}{\nabla^2}

\newcommand{\norm}[1]{\|{#1} \|}
\newcommand{\fnorm}[1]{\|{#1} \|_{\text{F}}}

\newcommand{\trans}{^{\top}}
\newcommand{\poly}{\mathrm{poly}}

\newcommand{\proj}{\mathcal{P}}


\newcommand{\tlO}{\mathcal{\tilde{O}}}

\newcommand{\tlTheta}{\tilde{\Theta}}


\newcommand{\N}{\mathbb{N}}
\newcommand{\R}{\mathbb{R}}
\renewcommand{\S}{\mathbb{S}}

\newcommand{\E}{\mathbb{E}}
\renewcommand{\Pr}{\mathbb{P}}

\newcommand{\F}{\mathcal{F}}


\newcommand{\A}{\mat{A}}
\newcommand{\B}{\mat{B}}

\newcommand{\I}{\mat{I}}

\newcommand{\X}{\mat{X}}

\newcommand{\e}{\vect{e}}
\renewcommand{\u}{\vect{u}}
\renewcommand{\v}{\vect{v}}
\newcommand{\w}{\vect{w}}
\newcommand{\x}{\vect{x}}
\newcommand{\y}{\vect{y}}

\newcommand{\g}{\vect{g}}
\newcommand{\zero}{\vect{0}}

\newcommand{\fE}{\mathfrak{E}}


\renewcommand{\H}{\mathcal{H}}
\newcommand{\cN}{\mathcal{N}}
\newcommand{\cD}{\mathcal{D}}

\newcommand{\nn}{\nonumber}

\usepackage{times}

\newtheorem{theorem}{Theorem}
\newtheorem{lemma}[theorem]{Lemma}
\newtheorem{corollary}[theorem]{Corollary}
\newtheorem{remark}[theorem]{Remark}

\newtheorem{proposition}[theorem]{Proposition}
\theoremstyle{definition}
\newtheorem{definition}[theorem]{Definition}
\newtheorem{condition}[theorem]{Condition}
\newtheorem{assumption}{Assumption}

\newcommand\blfootnote[1]{%
	\begingroup
	\renewcommand\thefootnote{}\footnote{#1}%
	\addtocounter{footnote}{-1}%
	\endgroup
}




\begin{document}

\title{\textbf{On Nonconvex Optimization for Machine Learning: Gradients, Stochasticity, and Saddle Points}}



\author{Chi Jin \\ University of California, Berkeley \\ \texttt{chijin@cs.berkeley.edu}
	\and 
	Praneeth Netrapalli \\ Microsoft Research, India \\ \texttt{praneeth@microsoft.com}
	\and
	Rong Ge \\ Duke University \\
	\texttt{rongge@cs.duke.edu}
	\and
	Sham M. Kakade \\ University of Washington, Seattle \\
	\texttt{sham@cs.washington.edu}
	\and
	Michael I. Jordan \\ University of California, Berkeley \\ \texttt{jordan@cs.berkeley.edu}}

\maketitle

\newcommand{\cnote}{\textcolor[rgb]{1,0,0}{C: }\textcolor[rgb]{1,0,1}}

\newcommand{\ball}{\mathbb{B}}
\newcommand{\Ns}{\mathfrak{N}}
\newcommand{\Ms}{\mathfrak{M}}
\newcommand{\ugrad}{\mathscr{G}}
\newcommand{\ufun}{\mathscr{F}}
\newcommand{\uspace}{\mathscr{S}}
\newcommand{\utime}{\mathscr{T}}
\newcommand{\qa}{\vect{q}_{h}}
\newcommand{\qb}{\vect{q}_{sg}}
\newcommand{\qc}{\vect{q}_{p}}
\newcommand{\p}{\vect{p}}
\newcommand{\q}{\vect{q}}
\newcommand{\coef}{\alpha}
\newcommand{\coefB}{\beta}
\newcommand{\logt}{\iota}

\newcommand{\lt}{\chi}
\newcommand{\pmat}[1]{\begin{pmatrix} #1 \end{pmatrix}}
\newcommand{\modify}[1]{#1 '}
\newcommand{\dif}[1]{\hat{#1}}
\newcommand{\la}{\langle}
\newcommand{\ra}{\rangle}
\renewcommand{\Im}{\mathrm{Im}}
\newcommand{\In}{\mathbb{I}}
\newcommand{\cXe}{\mathcal{X}_{\text{escape}}}
\newcommand{\cXs}{\mathcal{X}_{\text{stuck}}}

\newcommand{\subG}{\text{sub-Gaussian}}
\newcommand{\subE}{\text{sub-Exp}}
\newcommand{\nSG}{\text{nSG}}

\newcommand{\EFSP}{$\epsilon$-first-order stationary point}
\newcommand{\ESSP}{$\epsilon$-second-order stationary point}

\begin{abstract}

Gradient descent (GD) and stochastic gradient descent (SGD) are the workhorses of large-scale machine learning.  While classical theory focused on analyzing the performance of these methods in \emph{convex} optimization problems, the most notable successes in machine learning have involved \emph{nonconvex} optimization, and a gap has arisen between theory and practice.  Indeed, traditional analyses of GD and SGD show that both algorithms converge to stationary points efficiently.  But these analyses do not take into account the possibility of converging to saddle points.  More recent theory has shown that GD and SGD can avoid saddle points, but the dependence on dimension in these analyses is polynomial.  For modern machine learning, where the dimension can be in the millions, such dependence would be catastrophic.  We analyze perturbed versions of GD and SGD and show that they are truly efficient---their dimension dependence is only polylogarithmic.  Indeed, these algorithms converge to second-order stationary points in essentially the same time as they take to converge to classical first-order stationary points.
\blfootnote{A preliminary version of this paper, with a subset of the results that are presented here, was presented at ICML 2017 and appeared in the proceedings as~\cite{jin2017escape}.}

\end{abstract}


\section{Introduction}\label{sec:intro}

One of the principal discoveries in machine learning in recent years is an empirical one---that simple algorithms often suffice to solve difficult real-world learning problems. Machine learning algorithms generally arise via formulations as optimization problems, and, despite a massive classical toolbox of sophisticated optimization algorithms and a major modern effort to further develop that toolbox, the simplest algorithms---gradient descent, which dates to the 1840s~\citep{cauchy1847} and stochastic gradient descent, which dates to the 1950s~\citep{robbins1951stochastic}---reign supreme in machine learning.

This empirical discovery is appealing in many ways.  First, at the scale of modern machine learning applications---often involving many millions of data points and millions of parameters---complex algorithms are generally infeasible, so that the only hope is that simple algorithms might be not only feasible but successful.  Second, simple algorithms are easier to implement, debug, and maintain.  Third, as the field of machine learning transforms into a real-world engineering discipline, it will be necessary to develop solid theoretical foundations for entire systems that employ machine learning algorithms at their core, and such an effort seems
less daunting if the basic ingredients are simple.

These developments were presaged and supported by optimization researchers such as Nemirovskii, Nesterov, and Polyak who, from the 1960s until the present day, have pursued an in-depth study of first-order, gradient-based algorithms, developing novel
algorithms and accompanying theory~\citep{nemirovskii1983problem, nesterov1998introductory, polyak1963gradient}.  Their results have included lower bounds and algorithms that achieve those lower bounds.  This line of work has made clear that even simple algorithms require delicate theoretical treatment when they are studied in large-scale settings.  Thus, much of the focus has been on the setting of convex optimization where many of the complexities have been stripped away. This has allowed the development of an elegant theory, and has provided a solid jumping-off point for further analysis that has brought additional computational constraints into play---including distributed platforms, fault tolerance, communication bottlenecks, and asynchronous computation~\citep{recht2011hogwild, zhang2012communication, SmithEtAL2018}.

The most notable machine-learning success stories, however, have generally involved nonconvex optimization formulations, and a gap has arisen between theory and practice.  Attempts to fill this gap include \citet{choromanska2014loss}
in the setting of learning multi-layer neural networks,~\citet{bandeira2016low,mei2017solving}
for synchronization and MaxCut,~\citet{boumal2016non} for smooth semidefinite programs,
~\citet{bhojanapalli2016global} for matrix sensing,~\citet{ge2016matrix} for matrix
completion, and~\citet{ge2017no} for robust principal component analysis.  But there remains a need to develop general theory that relates the convergence of machine learning algorithms to geometry and dynamics.

Most of the theory in the optimization literature has focused on the relationship between the number of iterations of the algorithm and a suitable notion of accuracy. Dimension is often neglected in such analyses, in part because in the convex setting even the simplest algorithms, including gradient descent, are provably independent of dimension. In developing algorithmic theory for nonconvex optimization formulations of machine learning problems, however, it is critically important to study iteration complexity as a function of of dimension, which can be in the millions.  Moreover, we cannot resort to asymptotics---we are interested in problems at all scales.

In the current paper we have three goals.  The first is to show that significant progress has been made in recent years in the theoretical analysis of algorithms for nonconvex machine learning.  The second is to extend that line of analysis to handle both stochastic and non-stochastic algorithms in a single framework.  In both cases we upper bound the iteration complexity as a function of both accuracy and dimension. The third is to exhibit a simple proof that exposes the core of the phenomenon that determines the dimension dependence.

Nonconvex optimization problems are intractable in general.  Progress has been in machine learning by noting that in many problems the principal difficulty is not local minima, either because there are no spurious local minima (we review a list of such problems in Section~\ref{sec:prelims}) or because empirical work has shown that the local minima that are found by local gradient-based algorithms tend to be effective in terms of the ultimate goal of machine learning, which is performance on a test set.  The problem then becomes one of avoiding saddle points, which are ubiquitous in machine learning architectures.  Saddle points slow down gradient-based
algorithms and in millions of dimensions they are potentially a major bottleneck for such algorithms.  The theoretical problem becomes that of characterizing the iteration complexity of avoiding saddle points, as a function of target accuracy
and dimension.

We briefly mention some of the most relevant theoretical context for this problem here, providing a more thorough review of related work in Section~\ref{sec:related} and in the Appendix. \citet{lee2016gradient} showed that gradient descent, under random initialization or
with perturbations, asymptotically avoids saddle points with probability one. ~\cite{ge2015escaping} provided a more quantitative (nonasymptotic) characterization of gradient descent augmented with a suitable perturbation, showing that its convergence rate in the presence of saddle points is upper bounded by an expression of the form $\poly(d, \epsilon^{-1})$, where $d$ is the dimension and $\epsilon$ is the accuracy.  While these convergence results are inspiring, they are significantly worse than the convergence of gradient descent in the convex setting, or its convergence in the nonconvex setting where convergence to a saddle point is
not excluded---in both cases the rate is independent of $d$---and they do not seem to accord with the empirical success of gradient-based methods in high-dimensional problems.  Thus we ask whether these results, which are upper bounds, can be improved.

The current paper provides a positive answer to this question. We shows that suitably-perturbed versions of gradient descent and stochastic gradient descent escape saddle points in a number of iterations that is only polylogarithmic in dimension.  More technically, defining a notion of \emph{$\epsilon$-second-order stationarity} (see Section~\ref{sec:prelims}), which rules out saddle points, to be contrasted with classical \emph{$\epsilon$-first-order stationarity}, which simply means near vanishing of the gradient, and which therefore does not rule out saddle points, we show that:
\begin{itemize}
\item Perturbed gradient descent (PGD) finds $\epsilon$-second-order stationary points in $\tlO(\epsilon^{-2})$ iterations, where $\tlO(\cdot)$ hides only absolute constants and poylogarithmic factors. Compared to the $\order{\epsilon^{-2}}$ iterations required by gradient descent (GD) to find first-order stationary points \citep{nesterov1998introductory}, this involves only additional \emph{polylogarithmic} factors in $d$.
\item In the stochastic setting where stochastic gradients are Lipschitz, perturbed stochastic gradient descent (PSGD) finds $\epsilon$-second-order stationary points in $\tlO(\epsilon^{-4})$ iterations. Compared to the $\order{\epsilon^{-4}}$ iterations required by stochastic gradient descent (SGD) to find first-order stationary points \citep{ghadimi2013stochastic}, this again incurs overhead that is only \emph{polylogarithmic} in $d$.
\item When stochastic gradients are not Lipschitz, PSGD finds $\epsilon$-second-order stationary points in $\tlO(d\epsilon^{-4})$ iterations---this involves an additional \emph{linear} factor in $d$.
\end{itemize}


\renewcommand{\arraystretch}{1.5}
\begin{table*}[t]
	\centering
	\begin{tabular}
	{|>{\centering\arraybackslash}m{1in} |>{\centering\arraybackslash}m{1.9in} |>{\centering\arraybackslash}m{0.8in} | >{\centering\arraybackslash}m{1.8in}|}	
	\hline 
	\textbf{Setting} & \textbf{Algorithm} & \textbf{Iterations} & \textbf{Guarantees}\\
	\hhline{|====|}
	\multirow{2}{1in}{\centering Non-stochastic} & GD \citep{nesterov2000squared} & $\order{\epsilon^{-2}}$ & first-order stationary point\\
	\hhline{|~---|}
	&\cellcolor{light-gray}  \textbf{PGD} & \cellcolor{light-gray}  $\tlO(\epsilon^{-2})$ & \cellcolor{light-gray}  second-order stationary point \\
	\hhline{|====|}
	\multirow{3}{1in}{\centering Stochastic} & SGD \citep{ghadimi2013stochastic} & $\order{\epsilon^{-4}}$ & first-order stationary point\\
	\hhline{|~---|}
	&\cellcolor{light-gray}\textbf{PSGD} (\emph{with} Assumption \ref{assump:SG_Lip}) & \cellcolor{light-gray} $\tlO(\epsilon^{-4})$ & \cellcolor{light-gray} second-order stationary point\\
	\hhline{|~---|}
	&\cellcolor{light-gray}\textbf{PSGD} (\emph{no} Assumption \ref{assump:SG_Lip}) & \cellcolor{light-gray} $\tlO(d\epsilon^{-4})$ & \cellcolor{light-gray} second-order stationary point\\
	\hline
	\end{tabular}
	\caption{A high level summary of the results of this paper and their comparison to prior state of the art for GD and SGD algorithms. This table only highlights the dependences on $d$ and $\epsilon$. See Section~\ref{sec:intro} for a description of these results. See Section \ref{sec:related} and Appendix~\ref{app:table} for a more detailed comparison with other related works.
	}
	\label{tab:results}
\end{table*}

\subsection{Related work}
\label{sec:related}

In this section we discuss related work on convergence guarantees for finding second-order stationary points. Some key comparisons are summarized in Table~\ref{tab:results}, and an augmented table is provided in Appendix \ref{app:table}.

\paragraph{Non-stochastic settings.}
Classical approaches to finding second-order stationary points assume access to second-order information, in particular the Hessian matrix of second derivatives. Examples of such approaches include the cubic regularization method~\citep{nesterov2006cubic} and trust-region methods~\citep{curtis2014trust}, both of which require $\order{\epsilon^{-1.5}}$ queries of gradients and Hessians. This favorable convergence rate is, however, obtained at a high cost per iteration, owing to the fact that Hessian matrices scale quadratically with respect to dimension. In practice researchers have turned to first-order methods, which only utilize gradients and are therefore are substantially cheaper per iteration.

Before turning to pure first-order algorithms, we mention a line of research that is based on the assumption of a Hessian-vector product oracle~\citep{carmon2016accelerated, agarwal2017finding}.  For architectures such as deep neural networks, Hessian-vector products can be computed efficiently via automatic differentiation, and it is thus possible to obtain much of the effect of second-order methods with a complexity closer to that of first-order methods. Indeed, the algorithms have convergence rates of $\tlO(\epsilon^{-1.75})$ gradient queries~\citep{carmon2016accelerated, agarwal2017finding}. Their implementation, however, involved nested loops, and accordingly a concern with the setting of hyperparameters. Thus despite the favorable convergence rate, these algorithms have not yet found their way into practical implementations.

Given the preference among practitioners for simple, single-loop algorithms, and the striking empirical successes obtained with such algorithms, it is  important to pin down the theoretical properties of such algorithms.  In such analyses, the results for second-order and Hessian-vector algorithms serve as baselines. Of key interest is the convergence rate not merely as a function of the accuracy $\epsilon$ but also as a function of the dimension $d$.  Indeed, while second-order algorithms can use the structure of the Hessian to readily avoid unhelpful directions even in high-dimensional spaces. Without the Hessian there is a concern that algorithms may scale poorly as a function of dimension.

\citet{ge2015escaping} and \citet{levy2016power} studied simple variants of gradient descent, and found that while it has favorable scaling in terms of $\epsilon$, it requires $\poly(d)$ gradient queries to find second-order stationary points.  These results are only upper bounds, however.  The practical success of gradient descent suggests that the $\poly(d)$ scaling may be overly pessimistic.  Indeed, in an early version of the results presented here,~\citet{jin2017escape} showed that a simple perturbed version of gradient descent finds second-order stationary points in $\tlO(\epsilon^{-2})$ gradient queries, paying only a logarithmic overhead compared to the rate associated with finding first-order stationary points. This result is summarized in the first two lines of Table~\ref{tab:results}.  In followup work, \citet{jin2017accelerated} show that a perturbed version of celebrated Nesterov's accelerated gradient descent \citep{nesterov1983method} enjoys a faster convergence rate of $\tlO(\epsilon^{-1.75})$, again with logarithmic dimension dependence.

\paragraph{Stochastic setting with Lipschitz  gradient.}
 We now turn to the setting in which the learning algorithm only has access to stochastic gradients and where the stochastic is extrinsic; i.e., not under the control of the algorithm. Most existing work assumes that the stochastic gradients are Lipschitz (or equivalently that the underlying functions are gradient-Lipschitz, see Assumption \ref{assump:SG_Lip}). Under this assumption, and an additional \emph{Hessian-vector product} oracle,~\cite{allen2018natasha,zhou2018finding,tripuraneni2018stochastic} designed algorithms that have an iteration complexity of $\tlO(\epsilon^{-3.5})$.~\cite{xu2018first,allen2017neon2} obtain similar results without the requirement of a Hessian-vector product oracle. The sharpest rates in this category have been obtained by \citet{fang2018spider} and \citet{zhou2019stochastic}, who show that the iteration complexity can be further reduced to $\tlO(\epsilon^{-3})$. Again, however, this line of works consists of double-loop algorithms, and it remains unclear whether they will have an impact on practice.

 Among single-loop algorithms that are simple variants of stochastic gradient descent (SGD), \cite{ge2015escaping} showed that a particular variant has an iteration complexity for finding second-order stationary points that is upper bounded by $d^4 \poly(\epsilon^{-1})$. \cite{daneshmand2018escaping} presented an alternative variant of SGD and showed that---if the variance of the stochastic gradient along the escaping direction of saddle points is at least $\gamma$ for all saddle points---then the algorithm finds second-order stationary points in $\tlO(\gamma^{-4}\epsilon^{-5})$ iterations. In general, however, $\gamma$ scales as $1/d$, which implies a complexity of $\tlO(d^4\epsilon^{-5})$. 
 
 In the current paper, we demonstrate that a simple perturbed version of SGD achieves a convergence rate of $\tlO(\epsilon^{-4})$, which matches the speed of SGD to find a first-order stationary point up to polylogarithmic factors in dimension. Concurrent to our work, \cite{fang2019} analyzed SGD with averaging over last few iterates, and obtained a faster convergence rate of $\tlO(\epsilon^{-3.5})$.



\paragraph{General stochastic setting.}
There is significantly less work in the general setting in which the stochastic gradients are no longer guaranteed to be Lipschitz. In fact, only the results of~\citet{ge2015escaping} and~\citet{daneshmand2018escaping} apply here, and both of them require at least $\Omega(d^4)$ gradient queries to find second-order stationary points. The current paper brings this dependence down to linear dimension dependence.  See the last three lines in Table~\ref{tab:results} for a summary of the results in the stochastic case.

\paragraph{Other settings.} Finally, there are also several recent results in the setting in which objective functions can be written as a finite sum of individual functions.  We refer readers to \citet{reddi2017generic,allen2017neon2} and \citet{lei2018scsg} and the references therein for further reading.

\subsection{Organization}
In Section~\ref{sec:prelims}, we review some algorithmic and mathematical preliminaries. Section~\ref{sec:sosp} presents several examples of nonconvex problems in machine learning, demonstrating how second-order stationarity can ensure approximate global optimality. In Section~\ref{sec:result}, we present the algorithms that we analyze and present our main theoretical results for perturbed GD and SGD. In Section~\ref{sec:proof}, we present the proof for the non-stochastic case (perturbed GD), which illustrates some of our key ideas. The proof for the stochastic setting is presented in the Appendix. We conclude in Section~\ref{sec:conc}.

\section{Background}
\label{sec:prelims}

In this section, we introduce our notation and present definitions and assumptions.  We also overview existing results in nonconvex optimization, in both the deterministic and stochastic settings.

\subsection{Notation}
We use bold upper-case letters $\A, \B$ to denote matrices and bold lower-case letters $\x, \y$ to denote vectors. For vectors we use $\norm{\cdot}$ to denote the $\ell_2$-norm, and for matrices we use $\norm{\cdot}$ and $\fnorm{\cdot}$ to denote spectral (or operator) norm and Frobenius norm respectively. We use $\lambda_{\min}(\cdot)$ to denote the smallest eigenvalue of a matrix.
For a function $f: \R^{d} \rightarrow \R$, we use $\grad f$ and $\hess f$ to denote the gradient and Hessian, and $f^\star$ to denote the global minimum of function $f$. We use notation $\mathcal{O}(\cdot), \Theta(\cdot), \Omega(\cdot)$ to hide only absolute constants which do not depend on any problem parameter, and notation $\tlO(\cdot), \tilde{\Theta}(\cdot), \tilde{\Omega}(\cdot)$ to hide absolute constants and factors that are only polylogarithmically dependent on all problem parameters. 


\subsection{Nonconvex optimization and gradient descent}
In this paper, we are interested in solving general unconstrained optimization problems of the form:
\begin{align*}
	\min_{\x \in \R^d} f(\x),
\end{align*}
where $f$ is a smooth function which can be nonconvex. In particular we assume that $f$ has Lipschitz gradients and Lipschitz Hessians, which ensures that the gradient and Hessian can not change too rapidly.

\begin{definition}\label{def:smooth}
	A differentiable function $f$ is \textbf{$\ell$-gradient Lipschitz} if:
	\begin{equation*}
	\norm{\grad f(\x_1) - \grad f(\x_2)} \le \ell \norm{\x_1 - \x_2} \quad \forall \; \x_1, \x_2.
	\end{equation*}
\end{definition}

\begin{definition}\label{def:HessianLip}
	A twice-differentiable function $f$ is \textbf{$\rho$-Hessian Lipschitz} if:
	\begin{equation*}
	\norm{\hess f(\x_1) - \hess f(\x_2)} \le \rho \norm{\x_1 - \x_2} \quad \forall \; \x_1, \x_2.
	\end{equation*}
\end{definition}
\begin{assumption}\label{assump:GD}
The function $f$ is $\ell$-gradient Lipschitz and $\rho$-Hessian Lipschitz.
\end{assumption}

Our point of departure is the classical Gradient Descent (GD) algorithm, whose update takes following form:
\begin{equation}
\x_{t+1} = \x_{t} - \eta \grad f(\x_t), \label{eq:GD}
\end{equation}
where $\eta>0$ is a step size or learning rate.
Since the problem of finding a global optimum for general nonconvex functions is NP-hard, the classical literature in optimization has resorted to a local surrogate---first-order stationarity.



\begin{definition}\label{def:0FOSP}
For a differentiable function $f$, $\x$ is a \textbf{first-order stationary point} if $\grad f(\x) = \zero$.
\end{definition}

\begin{definition}\label{def:FOSP}
For a differentiable function $f$, $\x$ is an \textbf{$\epsilon$-first-order stationary point} if $\norm{\grad f(\x)}\le \epsilon$.
\end{definition}

It is of major importance that gradient descent converges to a first-order stationary point in a number of iterations that is independent of dimension.
This fact, referred to as ``dimension-free convergence'' in the optimization literature, is captured in the following classical theorem.

\begin{theorem}[\citep{nesterov1998introductory}]  \label{thm:classic_GD}
For any $\epsilon>0$, assume the function $f(\cdot)$ is $\ell$-gradient Lipschitz, and set the step size as $\eta = 1/\ell$.
Then, the gradient descent algorithm in Eq.~\eqref{eq:GD} will visit an $\epsilon$-stationary point at least once in the following number of iterations:
\begin{equation*}
\frac{\ell (f(\x_0) - f^\star) }{\epsilon^2}.
\end{equation*}
\end{theorem}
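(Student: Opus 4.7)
The plan is to give the classical textbook argument, which proceeds in two steps: (i) derive a one-step ``descent lemma'' showing that each GD iterate decreases $f$ by an amount proportional to $\|\grad f(\x_t)\|^2$, then (ii) telescope and use the lower bound $f \ge f^\star$ to bound how many iterations can have a large gradient.

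For step (i), the key analytic tool is the quadratic upper bound that follows from Assumption \ref{assump:GD}. Specifically, $\ell$-gradient Lipschitzness implies
\begin{equation*}
f(\y) \le f(\x) + \langle \grad f(\x), \y - \x\rangle + \tfrac{\ell}{2}\|\y-\x\|^2 \qquad \forall\, \x,\y \in \R^d,
\end{equation*}
which is a standard consequence of the fundamental theorem of calculus applied to $t \mapsto f(\x + t(\y-\x))$ together with the Lipschitz bound on $\grad f$. Setting $\y = \x_{t+1} = \x_t - \eta \grad f(\x_t)$ with $\eta = 1/\ell$ and simplifying, the cross term becomes $-\eta\|\grad f(\x_t)\|^2$ and the quadratic term becomes $\tfrac{\ell\eta^2}{2}\|\grad f(\x_t)\|^2 = \tfrac{\eta}{2}\|\grad f(\x_t)\|^2$, so
\begin{equation*}
f(\x_{t+1}) \le f(\x_t) - \tfrac{1}{2\ell}\|\grad f(\x_t)\|^2.
\end{equation*}

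For step (ii), I would rearrange and telescope this inequality from $t = 0$ to $T-1$, yielding $\sum_{t=0}^{T-1}\|\grad f(\x_t)\|^2 \le 2\ell\,(f(\x_0) - f(\x_T)) \le 2\ell\,(f(\x_0) - f^\star)$, where the last step uses $f(\x_T) \ge f^\star$. Argue by contradiction: if none of $\x_0,\ldots,\x_{T-1}$ were $\epsilon$-first-order stationary in the sense of Definition \ref{def:FOSP}, then every summand would exceed $\epsilon^2$, giving $T\epsilon^2 < 2\ell(f(\x_0) - f^\star)$. Taking $T$ equal to the bound stated in the theorem therefore forces at least one iterate to satisfy $\|\grad f(\x_t)\| \le \epsilon$ (the factor of $2$ versus the cleaner statement in the theorem is absorbed into the implicit convention about constants, or equivalently the statement can be read as an $\order{\cdot}$ bound).

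There is no real obstacle here: the entire argument is self-contained, uses only smoothness and the update rule, and involves no dimension in any step---this is precisely the ``dimension-free'' feature that the introduction emphasizes and that motivates the rest of the paper. The only delicate point worth stating carefully is that the conclusion is about the \emph{existence} of some iterate with small gradient (a ``best iterate'' guarantee), not about the last iterate, which is why the statement uses the phrase ``visit $\ldots$ at least once.''
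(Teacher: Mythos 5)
Your proof is correct and is precisely the standard argument the paper relies on: step (i) is restated verbatim as Lemma~\ref{lem:descent_gd} (the Descent Lemma), and step (ii), the telescoping plus lower-bound-on-$f$ counting argument, is exactly how the gradient-descent portion of Theorem~\ref{thm:main_gdez} is handled in Section~\ref{sec:proof}. Your observation about the factor of $2$ is also accurate---with $\eta=1/\ell$ the descent lemma yields $f(\x_{t+1})\le f(\x_t)-\tfrac{1}{2\ell}\|\grad f(\x_t)\|^2$, so the tight bound is $2\ell\Delta_f/\epsilon^2$; the paper's statement simply suppresses this absolute constant, as is common when the theorem is quoted rather than proved.
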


Note that in this formulation, the last iterate is not guaranteed to be a stationary point. However, it is not hard to figure out which iterate is the stationary point by calculating the norm of the gradient at every iteration.

A first-order stationary point can be a local minimum, a local maximum or even a saddle point:
\begin{definition}
For a differentiable function $f$, a stationary point $\x$ is a
\begin{itemize}
\item \textbf{local minimum}, if there exists $\delta >0 $ such that $f(\x) \le f(\y) $ for any $\y$ with $\norm{\y - \x} \le \delta$.
\item \textbf{local maximum}, if there exists $\delta >0 $ such that $f(\x) \ge f(\y) $ for any $\y$ with $\norm{\y - \x} \le \delta$.
\item \textbf{saddle point}, otherwise.
\end{itemize}
\end{definition}

For minimization problems, both saddle points and local maxima are clearly undesirable. Our focus will be ``saddle points,'' although our results also apply directly to local maxima as well. Unfortunately, distinguishing saddle points from local minima for smooth functions is still NP-hard in general \citep{nesterov2000squared}. To avoid these hardness results, we focus on a subclass of saddle points.


\begin{definition}
For a twice-differentiable function $f$, $\x$ is a \textbf{strict saddle point} if $\x$ is a stationary point and
$\lambda_{\min} (\hess f(\x)) < 0$.
\end{definition}

A generic saddle point must satisfy that $\lambda_{\min} (\hess f(\x)) \le 0$. Being ``strict'' simply rules out the case where $\lambda_{\min} (\hess f(\x)) = 0$. We reformulate our goal as that of finding stationary points that are not strict saddle points.
\begin{definition}\label{def:0SOSP}
For twice-differentiable function $f(\cdot)$, $\x$ is a \textbf{second-order stationary point} if
\begin{equation*}
\grad f(\x) = \zero, \quad\text{and}\quad \hess f(\x) \succeq \zero.
\end{equation*}
\end{definition}
\begin{definition}\label{def:SOSP}
	For a $\rho$-Hessian Lipschitz function $f(\cdot)$, $\x$ is an \textbf{\ESSP} if:
	\begin{equation*}
	\norm{\grad f(\x)} \le \epsilon \quad\text{and}\quad \hess f(\x) \succeq - \sqrt{\rho \epsilon} \cdot \I.
	\end{equation*}
\end{definition}
Our definition again makes use of an $\epsilon$-ball around the stationary point so that we can discuss rates, and the condition on the Hessian in Definition \ref{def:FOSP} uses the Hessian Lipschitz parameter $\rho$ to retain a single accuracy parameter and to match the units of the gradient and Hessian, following the convention of \citet{nesterov2006cubic}.

Although second-order stationarity is only a necessary condition for being a local minimum, a line of recent work in the machine learning literature shows that for many popular models in machine learning, all $\epsilon$-second-order stationary points are approximate global minima.  Thus for these models finding second-order stationary points is sufficient for solving those problems. See Section \ref{sec:sosp} for references and discussion of these results.


\subsection{Stochastic approximation}
We turn to the stochastic approximation setting, where we cannot access the exact gradient $\grad f(\cdot)$ directly. Instead for any point $\x$, a gradient query will return a stochastic gradient $\g(\x; \theta)$, where $\theta$ is a random variable drawn from a distribution $\mathcal{D}$. The key property that we assume for stochastic gradients is that they are unbiased: $\nabla f(\x) = \E_{\theta \sim \mathcal{D}}\left[\g(\x;\theta)\right]$.  That is, the average value of the stochastic gradient equals the true gradient.
In short, the update of Stochastic Gradient Descent (SGD) is:
\begin{equation} \label{eq:sgd}
\text{Sample}~~\theta_t \sim \mathcal{D}, \qquad \x_{t+1} = \x_t - \eta \grad \g(\x_t; \theta_t)
\end{equation}

Other than being an unbiased estimator of true gradient, another standard assumption on the stochastic gradients is that their variance is bounded by some number $\sigma^2$:
\begin{align*}
	\E_{\theta \sim \mathcal{D}} \left[\norm{\g(\x,\theta) - \nabla f(\x)}^2\right] \leq \sigma^2
\end{align*}
When we are interested in high-probability bounds, we make the following stronger assumption on the tail of the distribution.
\begin{assumption}\label{assump:SG} 
	For any $\x \in \R^d$, the stochastic gradient $\g(\x; \theta)$ with $\theta \sim \cD$ satisfies:
	\begin{equation*}
	\E \g(\x;\theta) = \grad f(\x), \qquad \Pr\left(\norm{\g(\x;\theta) - \grad f(\x)} \ge t \right) \le 2 \exp(-t^2/(2\sigma^2)), \qquad \forall t \in \R.
	\end{equation*}
\end{assumption}
We note this assumption is more general than the standard notion of a sub-Gaussian random vector, which assumes $\E \exp(\la \v, \X - \E\X \ra) \le \exp(\sigma^2 \norm{\v}^2 /d)$ for any $\v \in \R^d$. The latter assumption requires the distribution to be ``isotropic'' while our assumption does not. By Lemma~\ref{lem:examplesGnorm} we know that both bounded random vectors and standard sub-Gaussian random vector are special cases of our more general setting.

Prior work shows that stochastic gradient descent converges to first-order stationary points in a number of iterations that is independent of dimension.

\begin{theorem}[\citep{ghadimi2013stochastic}] \label{thm:classic_sgd}
For any $\epsilon, \delta > 0$, assume that the function $f$ is $\ell$-gradient Lipschitz, that the stochastic gradient $\g$ satisfies Assumption~\ref{assump:SG}, 
and let the step size scale as $\eta = \tilde{\Theta}(\ell^{-1} (1 + \sigma^2/\epsilon^2)^{-1})$. Then, with probability at least $1-\delta$, 
stochastic gradient descent will visit an $\epsilon$-stationary point at least once in the following number of iterations:
\begin{equation*}
\tlO\left(\frac{\ell (f(\x_0) - f^\star) }{\epsilon^2} \left(1 + \frac{\sigma^2}{\epsilon^2}\right)\right).
\end{equation*}
\end{theorem}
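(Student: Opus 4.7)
The plan is to adapt the classical deterministic descent-lemma argument to the stochastic setting, control the martingale and noise residuals via sub-Gaussian/sub-exponential concentration, and then telescope to bound the smallest gradient norm seen along the trajectory. The key observation is that each SGD step can still be viewed as a noisy descent step, so all of the analysis reduces to properly bounding the accumulated noise.

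First, I would write the descent inequality. Since $f$ is $\ell$-gradient Lipschitz, the standard quadratic upper bound gives
\begin{equation*}
f(\x_{t+1}) \le f(\x_t) - \eta \la \grad f(\x_t), \g(\x_t;\theta_t)\ra + \frac{\ell \eta^2}{2}\norm{\g(\x_t;\theta_t)}^2.
\end{equation*}
Writing $\xi_t \defeq \g(\x_t;\theta_t) - \grad f(\x_t)$, which is mean-zero and sub-Gaussian with parameter $\sigma$ by Assumption~\ref{assump:SG}, and expanding the square, I get
\begin{equation*}
f(\x_{t+1}) \le f(\x_t) - \eta\left(1 - \tfrac{\ell\eta}{2}\right)\norm{\grad f(\x_t)}^2 - \eta(1-\ell\eta)\la \grad f(\x_t), \xi_t\ra + \frac{\ell\eta^2}{2}\norm{\xi_t}^2.
\end{equation*}
Choosing $\eta \le 1/\ell$ makes the coefficient on $\norm{\grad f(\x_t)}^2$ at least $\eta/2$, and telescoping from $t = 0$ to $T-1$ yields
\begin{equation*}
\tfrac{\eta}{2}\sum_{t=0}^{T-1}\norm{\grad f(\x_t)}^2 \le f(\x_0) - f^\star + \eta \sum_{t=0}^{T-1}\la \grad f(\x_t), -\xi_t\ra + \frac{\ell\eta^2}{2}\sum_{t=0}^{T-1}\norm{\xi_t}^2.
\end{equation*}

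The second step is to control the two noise sums with high probability. The sum $M_T \defeq \sum_t \la \grad f(\x_t), -\xi_t\ra$ is a martingale. To avoid circularity from the unknown size of $\norm{\grad f(\x_t)}$, I would use a Freedman-type inequality whose bound scales with the predictable quadratic variation $\sum_t \norm{\grad f(\x_t)}^2 \cdot \sigma^2$, so that on a high-probability event the martingale contribution is at most $\tfrac{1}{4}\sum_t \norm{\grad f(\x_t)}^2 + O(\sigma^2 \log(1/\delta))$ and can be absorbed into the left-hand side. The second sum $\sum_t \norm{\xi_t}^2$ is sub-exponential (its summands are squares of sub-Gaussian norms, which Assumption~\ref{assump:SG} bounds in expectation by $O(\sigma^2)$), so a Bernstein-type bound gives $\sum_t \norm{\xi_t}^2 \le O(T\sigma^2 \log(1/\delta))$ with probability $1-\delta$.

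Combining these, on the good event,
\begin{equation*}
\tfrac{\eta}{4}\sum_{t=0}^{T-1}\norm{\grad f(\x_t)}^2 \le f(\x_0) - f^\star + O\!\left(\ell \eta^2 T \sigma^2 \log\tfrac{1}{\delta}\right) + O\!\left(\eta\sigma^2 \log\tfrac{1}{\delta}\right),
\end{equation*}
and using $\min_t \norm{\grad f(\x_t)}^2 \le \tfrac{1}{T}\sum_t \norm{\grad f(\x_t)}^2$ gives
\begin{equation*}
\min_{0\le t < T}\norm{\grad f(\x_t)}^2 \le \frac{4(f(\x_0)-f^\star)}{\eta T} + O\!\left(\ell\eta \sigma^2 \log\tfrac{1}{\delta}\right).
\end{equation*}
Finally I would tune the parameters: set $\eta = \tilde{\Theta}\!\bigl(\ell^{-1}(1+\sigma^2/\epsilon^2)^{-1}\bigr)$ so the noise term is at most $\epsilon^2/2$, and pick $T$ so that the first term is also at most $\epsilon^2/2$; the latter yields the claimed $\tlO\!\bigl(\ell(f(\x_0)-f^\star)\epsilon^{-2}(1+\sigma^2/\epsilon^2)\bigr)$ iteration count.

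The main obstacle is the first concentration step: a naive Azuma bound on $M_T$ requires an a priori bound on $\norm{\grad f(\x_t)}$, which we do not have. The cleanest way around this is a Freedman/self-normalized martingale inequality so that the fluctuation depends on the gradient norms themselves, which can then be folded back into the left-hand side. The rest of the proof is the standard accounting that converts per-iteration descent into a rate on the minimum gradient norm.
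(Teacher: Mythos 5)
The paper cites this as a classical result of Ghadimi and Lan (2013) and does not reproduce a proof, so there is no paper proof to compare line-by-line. However, the paper does prove the analogous high-probability descent statement for PSGD in Lemma~\ref{lem:descent}, using exactly the tools you are reaching for: the self-normalized martingale bound (Lemma~\ref{lem:concen_inner}) plays the role of your Freedman-type inequality and lets the inner-product term $\sum_t \la \grad f(\x_t), \xi_t\ra$ be absorbed as $\tfrac{\eta}{8}\sum_t\norm{\grad f(\x_t)}^2 + O(\eta\sigma^2\logt)$, while the Bernstein bound for sub-exponential sums (Lemma~\ref{lem:concen_square}) handles $\sum_t\norm{\xi_t}^2$. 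Your proof then telescopes and tunes $\eta$ the same way the paper does in Claim~1 of the proof of Theorem~\ref{thm:main_SGD}, and the parameter choice and final iteration count match. So your proposal is correct and is essentially the same argument the paper uses internally for the perturbed-SGD version; it also differs from Ghadimi and Lan's original route, which gets an in-expectation bound via a randomized output iterate rather than the direct high-probability martingale argument, but that distinction is exactly what Assumption~\ref{assump:SG} is for. Two small quibbles: your Bernstein bound should read $\sum_t\norm{\xi_t}^2 \le O(T\sigma^2 + \sigma^2\log(1/\delta))$ rather than $O(T\sigma^2\log(1/\delta))$, though this only costs a log factor; and you should note that $\la \grad f(\x_t), \xi_t\ra \mid \F_{t-1}$ is sub-Gaussian with parameter $\norm{\grad f(\x_t)}\sigma$ because $\grad f(\x_t)$ is $\F_{t-1}$-measurable and the projection of a norm-sub-Gaussian vector onto a fixed (conditionally) direction is sub-Gaussian (the paper's Lemma~\ref{lem:nSGvariant}); this is the technical fact that licenses the self-normalization and closes the potential circularity you correctly flagged.
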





\section{On the Sufficiency of Second-Order Stationarity}
\label{sec:sosp}

In this section we show that for a wide class of nonconvex problems in machine learning and signal processing, all second-order stationary points are global minima.  Thus, for this class of problems, finding second-order stationary points efficiently is equivalent to solving the problem.  We focus on the underlying global geometry that these problems have in common.

Problems for which second-order stationary points are global minima include tensor decomposition \citep{ge2015escaping}, dictionary learning \citep{sun2016complete}, phase retrieval \citep{sun2016geometric}, synchronization and MaxCut \citep{bandeira2016low,mei2017solving}, smooth semidefinite programs \citep{boumal2016non},
and many problems related to low-rank matrix factorization, such as matrix sensing \citep{bhojanapalli2016global},
matrix completion \citep{ge2016matrix} and robust principale component analysis \citep{ge2017no}.  In particular, 
these papers show that by adding appropriate regularization terms, and under mild conditions, there are two key geometric properties satisfied by the corresponding objective functions: (a) All local minima are global minima. There might be multiple local minima due to permutation, but they are all equally good; (b) All saddle points have at least one direction with strictly negative curvature, thus are strict saddle points. we summarize the consequences of these properties in the following proposition, for which we omit the proof as it follows essentially by definition.
\begin{proposition} \label{fact:landscape}
If a function $f$ satisfies (a) all local minima are global minima; (b) all saddle points (including local maxima) are strict saddle points, 
then all second-order stationary points are global minima.
\end{proposition}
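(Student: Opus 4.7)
The plan is to proceed by a short case analysis on the type of first-order stationary point, using the positive semidefinite Hessian condition to rule out all cases except local minima, and then invoking assumption (a) to conclude.

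First, I would unpack the definition: a second-order stationary point $\x$ (Definition \ref{def:0SOSP}) satisfies both $\nabla f(\x) = \zero$ and $\nabla^2 f(\x) \succeq \zero$. In particular $\x$ is a first-order stationary point, so by the trichotomy in the definition of stationary points it must be either a local minimum, a local maximum, or a saddle point.

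Next, I would eliminate the latter two cases using the Hessian condition together with assumption (b). If $\x$ were a local maximum or a saddle point, then by (b) it would be a strict saddle point, which by the definition of strict saddle point means $\lambda_{\min}(\nabla^2 f(\x)) < 0$. This directly contradicts $\nabla^2 f(\x) \succeq \zero$, since positive semidefiniteness forces $\lambda_{\min}(\nabla^2 f(\x)) \ge 0$. Hence $\x$ cannot be a local maximum and cannot be a saddle point.

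The only remaining possibility is that $\x$ is a local minimum, and then assumption (a) immediately gives that $\x$ is a global minimum. Since this argument applies to an arbitrary second-order stationary point, the conclusion follows. There is no real obstacle here; the proposition is essentially a bookkeeping exercise that reconciles three definitions (second-order stationarity, strict saddle, and the local/global minimum classification), which is presumably why the authors state that the proof ``follows essentially by definition'' and omit it.
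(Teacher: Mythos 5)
Your proof is correct and is exactly the definitional unwinding the authors have in mind; the paper explicitly omits the proof, stating it ``follows essentially by definition,'' and your case analysis (rule out local max and saddle via the strict-saddle eigenvalue condition, leaving local min, then apply (a)) is the intended argument.
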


This implies that the core problem for these nonconvex machine-learning applications is to find second-order stationary points efficiently. If we can prove that some simple variants of GD and SGD converges to second-order stationary points efficiently, then we immediately establish global convergence results for these algorithms for all of the above applications (i.e. convergence from arbitrary initialization).

Before we turn to such convergence results, consider, as an illustrative example of this class of nonconvex problems, the problem of finding the leading eigenvector of a positive semidefinite matrix $\mat{M} \in \R^{d\times d}$.  We define the following objective:
\begin{equation}\label{eq:obj}
\min_{\x \in \R^{d}} f(\x) = \frac{1}{2}\fnorm{\x\x\trans - \mat{M}}^2,
\end{equation} 
Denote the eigenvalues and eigenvectors of $\mat{M}$ as $(\lambda_i, \v_i)$ for $i = 1, \ldots, d$, and assume there is a gap between the first and second eigenvalues: $\lambda_1 > \lambda_2 \ge \lambda_3 \ge \ldots \ge \lambda_d \ge 0$. In this case, the global optimal solutions are $\x = \pm \sqrt{\lambda_1} \v_1$ giving the top eigenvector direction.

The objective function \eqref{eq:obj} is nonconvex as a function of $\x$. In order to optimize this objective via gradient-descent methods, we need to analyze the global landscape of the objective function. Its gradient and Hessian are of the form:
\begin{align*}
\grad f(\x) =& (\x\x\trans -\mat{M}) \x \\
\hess f(\x) =& \norm{\x}^2 \I  + 2\x\x\trans - \mat{M}.
\end{align*}
Therefore, all stationary points satisfy the equation $\mat{M} \x = \norm{\x}^2\x$. Thus they are $\zero$ and $\pm \sqrt{\lambda_i}\v_i$ for $i = 1, \ldots, d$. We already know that $\pm \sqrt{\lambda_1} \v_1$ are global minima, thus they are also local minima and are equivalent up to a sign difference. For the remaining stationary points $\x^\dagger$, we note that their Hessian always has strict negative curvature along the $\v_1$ direction: $\v_1\trans\hess f(\x^\dagger) \v_1 \le \lambda_2 - \lambda_1 <0$. Thus these points are strict saddle points. Having established the preconditions for Proposition \ref{fact:landscape}, we are able to conclude:
\begin{corollary}
\label{prop:geometry}
Assume that $\mat{M}$ is a positive semidefinite matrix whose top two eigenvalues are $\lambda_1 > \lambda_2 \ge 0$.  For the problem of minimizing the objective Eq.~\eqref{eq:obj}, all second-order stationary points are global optima.
\end{corollary}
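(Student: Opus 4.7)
The plan is to invoke Proposition~\ref{fact:landscape}, for which I need to verify its two hypotheses: (a) every local minimum is a global minimum, and (b) every saddle point is strict. Most of the ingredients are already laid out in the discussion preceding the corollary; my task is to organize them into a clean argument and check the curvature computation at every non-optimal stationary point.

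First, I would solve $\nabla f(\x) = (\x\x^\top - \mat{M})\x = 0$. This gives $\mat{M}\x = \|\x\|^2\x$, so either $\x = \zero$ or $\x$ is an eigenvector of $\mat{M}$ with eigenvalue $\|\x\|^2$. Using the spectral decomposition $\mat{M} = \sum_i \lambda_i \v_i \v_i^\top$, the nonzero stationary points are exactly $\pm\sqrt{\lambda_i}\v_i$ for indices $i$ with $\lambda_i > 0$. A direct evaluation of $f$ at each stationary point then shows that the global minimum value is attained precisely at $\x = \pm\sqrt{\lambda_1}\v_1$, since only these points ``cancel'' the largest eigencomponent of $\mat{M}$ in $\x\x^\top - \mat{M}$.

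Next I would show that every other stationary point is a strict saddle by exhibiting the direction $\v_1$ as a direction of strictly negative curvature. Using the Hessian formula $\hess f(\x) = \|\x\|^2\I + 2\x\x^\top - \mat{M}$ derived in the excerpt:
\begin{itemize}
\item At $\x^\dagger = \zero$, $\hess f(\zero) = -\mat{M}$, so $\v_1^\top \hess f(\zero) \v_1 = -\lambda_1 < 0$ since $\lambda_1 > \lambda_2 \ge 0$ forces $\lambda_1 > 0$.
\item At $\x^\dagger = \pm\sqrt{\lambda_i}\v_i$ for $i \ge 2$, using $\v_1^\top \v_i = 0$, we get
\[
\v_1^\top \hess f(\x^\dagger) \v_1 = \lambda_i + 2\lambda_i (\v_1^\top\v_i)^2 - \lambda_1 = \lambda_i - \lambda_1 \le \lambda_2 - \lambda_1 < 0.
\]
\end{itemize}
So every stationary point other than $\pm\sqrt{\lambda_1}\v_1$ is a strict saddle. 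In particular there are no local minima other than the two global minima, which verifies hypothesis (a), and every saddle point is strict, which verifies hypothesis (b).

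Applying Proposition~\ref{fact:landscape} then yields that every second-order stationary point is a global minimum, completing the proof. There is no real obstacle here beyond bookkeeping: the only thing to be careful about is making sure the enumeration of stationary points is exhaustive (driven by the eigendecomposition of $\mat{M}$) and that the strict gap $\lambda_1 > \lambda_2$ is used precisely once, to make the negative-curvature bound strict at the $i=2$ saddle.
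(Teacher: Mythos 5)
Your proof is correct and follows the same route the paper uses: enumerate stationary points from $\mat{M}\x = \|\x\|^2\x$, verify that $\pm\sqrt{\lambda_1}\v_1$ are the global minima, exhibit $\v_1$ as a strictly negative-curvature direction at every other stationary point via the Hessian formula, and conclude by Proposition~\ref{fact:landscape}. You make explicit the case $\x^\dagger = \zero$, which the paper folds into the single bound $\v_1^\top\hess f(\x^\dagger)\v_1 \le \lambda_2 - \lambda_1$, but the argument is the same.
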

Further analysis can be carried out to establish the $\epsilon$-robust version of Corollary \ref{prop:geometry}. Informally, it can be shown that under technical conditions, for polynomially small $\epsilon$, all $\epsilon$-second-order stationary point are close to global optima. We refer the readers to \cite{ge2017no} for the formal statement.




\section{Main Results}
\label{sec:result}

In this section, we present our main results on the efficiency of GD and SGD in the nonconvex setting. We first study the case where the exact gradients are accessible, where we focus on an algorithm that we refer to as \emph{Perturbed Gradient Descent} (PGD). We then turn to the stochastic setting, and present the results for Perturbed SGD and its mini-batch version.

\subsection{Non-stochastic setting}
\label{sec:non-stochastic}

We begin by considering the case in which exact gradients are available, such that GD can be implemented. For convex problems, GD is efficient, but, as can be seen in Eq.\eqref{eq:GD}, GD makes a non-zero step only when the gradient is non-zero, and thus in the nonconvex setting it will be stuck at saddle points if initialized there. We thus consider a simple variant of GD which adds randomness to the iterates at each step (Algorithm \ref{algo:PGD}).  The question is whether such a simple procedure can be efficient, particularly in terms of its dimension dependence.


\begin{algorithm}[t]
\caption{Perturbed Gradient Descent (PGD)}\label{algo:PGD}
\begin{algorithmic}
\renewcommand{\algorithmicrequire}{\textbf{Input: }}
\renewcommand{\algorithmicensure}{\textbf{Output: }}
\REQUIRE $\x_0$, step size $\eta$, perturbation radius $r$.
\FOR{$t = 0, 1, \ldots, $}
\STATE $\x_{t+1} \leftarrow \x_t - \eta (\grad f (\x_t) + \xi_t), \qquad \xi_t \sim \cN(\zero, (r^2/d) \I)$
\ENDFOR
\end{algorithmic}
\end{algorithm}

At each iteration, Algorithm \ref{algo:PGD} is almost the same as gradient descent, except it adds a small isotropic random Gaussian perturbation to the gradient. The perturbation $\xi_t$ is sampled from a zero-mean Gaussian with covariance $(r^2/d) \I$ so that $\E \norm{\xi_t}^2 = r^2$. We note that Algorithm~\ref{algo:PGD} is different from that studied in \cite{jin2017escape}, where perturbation was added only when certain conditions hold.


We now show that if we pick $r = \tilde{\Theta}(\epsilon)$ in Algorithm~\ref{algo:PGD}, PGD will find an $\epsilon$-second-order stationary point in a number of iterations that has only a polylogarithmic dependence on dimension.

\begin{theorem}\label{thm:main_GD}
 Let the function $f(\cdot)$ satisfy Assumption~\ref{assump:GD}.  Then, for any $\epsilon, \delta>0$, the PGD algorithm (Algorithm \ref{algo:PGD}), with parameters 
$\eta = \tilde{\Theta}(1/\ell)$ and $r = \tilde{\Theta}(\epsilon)$, 
 will visit an $\epsilon-$second-order stationary point at least once in the following number of iterations, with probability at least $1-\delta$:
\begin{equation*}
\tlO\left(\frac{\ell (f(\x_0) - f^\star) }{\epsilon^2} \right),
\end{equation*}
where $\tlO$ and $\tilde{\Theta}$ hide polylogarithmic factors in $d, \ell, \rho, 1/\epsilon, 1/\delta$ and $\Delta_f\defeq f(\x_0) - f^\star$.
\end{theorem}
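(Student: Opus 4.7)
The plan is to partition iterations into ``descent'' phases, where either the gradient is large or strict negative curvature is present, and argue that during both kinds of phases the function value decreases by a definite amount per unit time. Since $f$ is bounded below by $f^\star$, the total number of iterations spent in such phases is bounded by $(f(\x_0)-f^\star)/(\text{per-step decrease})$, which will give the claimed $\tilde{\mathcal{O}}(\epsilon^{-2})$ bound. So the proof reduces to proving two lemmas: (i) a \emph{descent lemma} showing that if $\norm{\grad f(\x_t)}\ge\epsilon$ then one PGD step decreases $f$ in expectation by $\tilde\Omega(\eta\epsilon^2)$; and (ii) an \emph{escape lemma} showing that if $\norm{\grad f(\x_t)}\le\epsilon$ but $\lambda_{\min}(\hess f(\x_t))\le -\sqrt{\rho\epsilon}$, then running PGD for $\mathscr{T}=\tilde\Theta(1/(\eta\sqrt{\rho\epsilon}))$ additional steps decreases $f$ by $\mathscr{F}=\tilde\Omega(\sqrt{\epsilon^3/\rho})$ with probability at least $1-\delta'$ (for $\delta'$ a small polylog fraction of the failure budget).

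For the descent lemma, I would start with the $\ell$-gradient-Lipschitz quadratic upper bound
\begin{equation*}
f(\x_{t+1})\le f(\x_t)+\iprod{\grad f(\x_t)}{\x_{t+1}-\x_t}+\tfrac{\ell}{2}\norm{\x_{t+1}-\x_t}^2,
\end{equation*}
substitute the PGD update $\x_{t+1}-\x_t=-\eta(\grad f(\x_t)+\xi_t)$, and choose $\eta=\tilde\Theta(1/\ell)$ so the $\eta^2\ell$ term is absorbed. The cross term $-\eta\iprod{\grad f(\x_t)}{\xi_t}$ has mean zero and sub-Gaussian tails of scale $\eta\norm{\grad f(\x_t)}\cdot r/\sqrt d$, controllable by a union bound over all $T=\tilde{\mathcal O}(\epsilon^{-2})$ iterations. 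The contribution of $\E\norm{\xi_t}^2=r^2$ to the upper bound is $\tfrac12\eta^2\ell r^2=\tilde{\mathcal O}(\eta r^2)$; picking the polylog factors in $r=\tilde\Theta(\epsilon)$ small enough makes this strictly dominated by $\tfrac{\eta}{2}\epsilon^2$, so each ``large-gradient'' step decreases $f$ by $\tilde\Omega(\eta\epsilon^2)$.

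The main obstacle is the escape lemma, and I would handle it with a two-trajectory coupling argument adapted from Jin et al. Fix an iterate $\tilde\x$ satisfying the saddle condition, let $\e_1$ be a unit eigenvector of $\hess f(\tilde\x)$ with eigenvalue $-\gamma\le-\sqrt{\rho\epsilon}$, and for any initial point $\u$ near $\tilde\x$ define $\mathcal{X}_{\text{stuck}}(\u)$ to be the set of starting points $\u'\in\mathbb{B}(\u,r)$ whose PGD trajectory fails to decrease $f$ by $\mathscr{F}$ within $\mathscr{T}$ steps. The key claim is that $\mathcal{X}_{\text{stuck}}$ is contained in a slab perpendicular to $\e_1$ of width $\tilde{\mathcal O}(\delta\cdot r/\sqrt d)\cdot e^{-\eta\gamma\mathscr T}$. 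To prove this, I couple two PGD runs $\{\x_t\},\{\y_t\}$ starting from $\u,\u'$ with identical perturbation sequences $\xi_t$, and study $\w_t=\x_t-\y_t$. A first-order Taylor expansion around $\tilde\x$ gives
\begin{equation*}
\w_{t+1}=(\I-\eta\hess f(\tilde\x))\w_t-\eta\,\Delta_t,
\end{equation*}
where $\Delta_t$ collects the Hessian-Lipschitz error terms of size $\rho\norm{\x_t-\tilde\x}\cdot\norm{\w_t}$. Projecting onto $\e_1$ and using the negative curvature, the homogeneous part grows as $(1+\eta\gamma)^t$, so either $\norm{\w_{\mathscr{T}}}$ is exponentially large (contradicting the ``both stuck'' hypothesis via an ``improve-or-localize'' bound on iterate movement inside a stuck window, which follows by summing the descent inequality and using that stuck iterates haven't moved far) or the initial $\e_1$-gap was exponentially small. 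This forces the stuck slab to be thin. Because the Gaussian perturbation $\xi_0\sim\mathcal{N}(0,(r^2/d)\I)$ has density bounded by $\sqrt{d}/r$ in the $\e_1$ direction, the probability of landing inside the slab is $\tilde{\mathcal O}(\delta)$, so escape and the $\mathscr{F}$ function-value drop occur with probability $1-\delta'$.

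The main technical subtlety (and the step I expect to be hardest) is the ``improve-or-localize'' claim used inside the coupling: I need a deterministic bound showing that along any stuck trajectory the iterates stay in a ball of radius $\tilde{\mathcal O}(\sqrt{\mathscr F/\ell})$, so the Hessian-Lipschitz perturbation $\Delta_t$ in the coupled recursion remains negligible compared to the exponential amplification along $\e_1$. I would prove this by telescoping the descent inequality: $\sum_{s<t}\norm{\x_{s+1}-\x_s}^2\le \tilde{\mathcal O}(\eta(f(\x_0)-f(\x_t))+\eta^2\mathscr T r^2)\le \tilde{\mathcal O}(\eta\mathscr F)$, with the perturbation term again a lower-order contribution by choice of $r$. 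Combining the descent and escape lemmas with a union bound over all iterations yields the theorem: at most $(f(\x_0)-f^\star)/(\eta\epsilon^2)\cdot\polylog$ large-gradient iterations and at most $(f(\x_0)-f^\star)/\mathscr{F}\cdot \mathscr{T}$ saddle-escape iterations, both of order $\tilde{\mathcal O}(\ell(f(\x_0)-f^\star)/\epsilon^2)$.
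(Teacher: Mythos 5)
Your overall plan---descent lemma plus escape-by-coupling plus improve-or-localize---captures the right ideas, but the route you sketch differs from the paper's in an important way, and one step needs repair.

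The paper derives Theorem~\ref{thm:main_GD} as the $\sigma=0$ special case of Theorem~\ref{thm:main_SGD}, whose escape lemma (Lemma~\ref{lem:escape_saddle}) uses a coupling of two PSGD sequences that \emph{start at the same point} $\x_{t_0}$ but have perturbations sign-flipped along $\e_1$ at \emph{every} step, so that the difference $\dif{\x}_t$ starts at zero and is driven by an accumulated noise term $\qc(t)$. Your coupling---two trajectories from distinct starting points $\u,\u'$ with \emph{identical} noise sequences, so that the difference is driven only by the initial $\e_1$-gap---is instead the one the paper uses in Section~\ref{sec:proof} to prove Theorem~\ref{thm:main_gdez} for the \emph{variant} Algorithm~\ref{algo:PGDV}, which injects a single perturbation followed by pure GD. For the actual Algorithm~\ref{algo:PGD}, where Gaussian perturbations are added at every iteration, your slab-width / stuck-region picture needs care: the ``stuck set'' $\mathcal{X}_{\mathrm{stuck}}$ is a function of the entire realization $\{\xi_t\}_{t>t_0}$, and the improve-or-localize bound you want to invoke is itself a high-probability statement over that same noise, with bad events that depend on the starting point $\alpha=\e_1^\top\xi_{t_0}$. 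You cannot take a union bound over the continuum of starting points, so the claim ``the stuck set has width $\omega$ for every noise realization'' does not follow directly. The clean fix is to do what the paper does: couple exactly two trajectories with marginally-identical laws (e.g., flip the sign of $\e_1^\top\xi_{t_0}$), apply Lemma~\ref{lem:locality_SGD} to both via a union bound over two sequences only, and conclude that each escapes with constant probability, then boost across saddle-visit windows via stopping times (as in the proof of Theorem~\ref{thm:main_SGD}).

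Two smaller points. First, your stated localization radius $\tilde{\mathcal O}(\sqrt{\ufun/\ell})$ is missing a factor of $\sqrt{\utime}$: telescoping gives $\sum_{s<t}\norm{\x_{s+1}-\x_s}^2=\tilde{\mathcal O}(\eta\ufun)$, and Cauchy--Schwarz then gives $\norm{\x_t-\x_0}\le\sqrt{t\sum_s\norm{\x_{s+1}-\x_s}^2}=\tilde{\mathcal O}(\sqrt{\utime\,\eta\,\ufun})$, which with the paper's parameter choices equals $\uspace=\tilde\Theta(\sqrt{\epsilon/\rho})$; the missing $\sqrt{\utime}$ is polynomial in $1/\epsilon$, not a polylog, so it matters for matching $\uspace$. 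Second, your descent lemma is phrased ``in expectation,'' but the union bound over $T$ iterations requires a high-probability per-window bound (or a martingale concentration as in Lemma~\ref{lem:descent}); you allude to this, but it should be the primary form of the statement.
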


\begin{remark}
\label{rm:output}
If we wish to output an $\epsilon$-second-order stationary point, it suffices to run PGD for double the number of iterations in Theorem \ref{thm:main_GD}.  A simple change to the proof shows that half of the iterates will be $\epsilon$-second-order stationary points in this case, so that if we output an iterate uniformly at random, with at least a constant probability it will be an $\epsilon$-second-order stationary point. 
\end{remark}

\begin{remark}
\label{rm:dist}
We have chosen the distribution of the perturbations to be Gaussian in Algorithm \ref{algo:PGD} for simplicity.  This choice is not necessary. The key properties needed for the perturbation distributions are (a) that the tail of the distribution is sufficiently light such that an appropriate concentration inequality holds, and (b) the variance in every direction is bounded below.
\end{remark}

Comparing Theorem \ref{thm:main_GD} to the classical result in Theorem \ref{thm:classic_GD}, our result shows that PGD finds second-order stationary points in almost the same time as GD finds first-order stationary points, up to only logarithmic factors. Therefore, strict saddle points are computationally benign for first-order gradient methods. 

Comparing to Theorem \ref{thm:classic_GD}, we see that Theorem \ref{thm:main_GD} makes an additional smoothness assumption.  This assumption is essential in separating strict saddle points from second-order stationary points.


\subsection{Stochastic setting}

\label{sec:stochastic}

In the stochastic approximation setting, exact gradients $\grad f(\cdot)$ are no longer available, and the algorithms only have access to unbiased stochastic gradients: $\g(\cdot;\theta)$ such that $\nabla f(\x) = \E_{\theta \sim \mathcal{D}}\left[\g(\x;\theta)\right]$.


In machine learning, the stochastic gradient $\g$ is often obtained as an exact gradient of a smooth function: $\g(\cdot; \theta) = \grad f(\cdot; \theta)$. We formalize this assumption.

\begin{assumption} \label{assump:SG_Lip}
	For any $\theta \in \text{supp}(\cD)$, $\g(\cdot; \theta)$ is $\tilde{\ell}$-Lipschitz:
    \begin{equation*}
    \norm{\g(\x_1; \theta) - \g(\x_2; \theta)} \le \tilde{\ell} \norm{\x_1 - \x_2} \quad \forall \; \x_1, \x_2.
    \end{equation*}
\end{assumption}
 In the special case where $\g(\cdot; \theta) = \grad f(\cdot; \theta)$ for some twice-differentiable  function $f(\cdot; \theta)$, Assumption \ref{assump:SG_Lip} ensures that the spectral norm of Hessian of function $f(\cdot; \theta)$ is bounded by $\tilde{\ell}$ for all $\theta$. Therefore, the stochastic Hessian also enjoys good concentration properties, which allows algorithms to find points with a second-order characterization. In contrast, when Assumption \ref{assump:SG_Lip} no longer holds, the problem of finding second-order stationary points becomes due to the lack of concentration of the stochastic Hessian. In the current paper, we treat both cases, by allowing $\tilde{\ell} = +\infty$ to encode the assertion that Assumption~\ref{assump:SG_Lip} does not hold.




\begin{algorithm}[t]
\caption{Perturbed Stochastic Gradient Descent (PSGD)}\label{algo:PSGD}
\begin{algorithmic}
\renewcommand{\algorithmicrequire}{\textbf{Input: }}
\renewcommand{\algorithmicensure}{\textbf{Output: }}
\REQUIRE $\x_0$, step size $\eta$, perturbation radius $r$.
\FOR{$t = 0, 1, \ldots, $}
\STATE sample $\theta_t \sim \cD$
\STATE $\x_{t+1} \leftarrow \x_t - \eta (\g (\x_t;\theta_t) + \xi_t), \qquad \xi_t  \sim \cN(\zero, (r^2/d) \I)$
\ENDFOR
\end{algorithmic}
\end{algorithm}

\begin{algorithm}[t]
\caption{Mini-batch Perturbed Stochastic Gradient Descent (Mini-batch PSGD)}\label{algo:Mini_PSGD}
\begin{algorithmic}
\renewcommand{\algorithmicrequire}{\textbf{Input: }}
\renewcommand{\algorithmicensure}{\textbf{Output: }}
\REQUIRE $\x_0$, step size $\eta$, perturbation radius $r$.
\FOR{$t = 0, 1, \ldots, $}
\STATE sample $\{\theta^{(1)}_t, \cdots \theta^{(m)}_t\} \sim \mathcal{D}$
\STATE $\g_t(\x_t) \leftarrow \sum_{i=1}^m \g (\x_t;\theta^{(i)}_t) /m $
\STATE $\x_{t+1} \leftarrow \x_t - \eta (\g_t(\x_t)+ \xi_t), \qquad \xi_t \sim \cN(\zero, (r^2/d) \I)$
\ENDFOR
\end{algorithmic}
\end{algorithm}


We are now ready to present a guarantee on the efficiency of PSGD (Algorithm~\ref{algo:PSGD}) for finding a second-order stationary point.
We make the following choices of parameters for Algorithm \ref{algo:PSGD}:
\begin{equation}\label{eq:SGD_para}
\eta = \tlTheta(\frac{1}{\ell \cdot \Ns}), \quad
r = \tlTheta(\epsilon \sqrt{\Ns}), \quad \text{where} \quad \Ns = 1 + \min\left\{\frac{\sigma^2}{\epsilon^2} + \frac{\tilde{\ell}^2}{\ell\sqrt{\rho\epsilon}}, ~\frac{\sigma^2d}{\epsilon^2}\right\}
\end{equation}

\begin{theorem}\label{thm:main_SGD} 
Let the function $f$ satisfy Assumption~\ref{assump:GD} and assume that the stochastic gradient $\g$ satisfies Assumption~\ref{assump:SG} (or Assumption~\ref{assump:SG_Lip} optionally).  For any $\epsilon, \delta > 0$, the PSGD algorithm (Algorithm~\ref{algo:PSGD}), with parameter $(\eta, r)$ chosen as in Eq.~\eqref{eq:SGD_para},
will visit an $\epsilon-$second-order stationary point at least once in the following number of iterations, with probability at least $1-\delta$:
\begin{equation*}
\tilde{O}\left(\frac{\ell(f(\x_0) - f^\star)}{\epsilon^2} \cdot \Ns \right).
\end{equation*}
\end{theorem}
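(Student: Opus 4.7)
The plan is to extend the \emph{descent-or-escape} dichotomy used to prove Theorem~\ref{thm:main_GD} to the stochastic case, absorbing noise via martingale concentration. Define $\mathscr{F} \defeq \tilde{\Theta}(\sqrt{\epsilon^3/\rho})$ and $\mathscr{T} \defeq \tilde{\Theta}(1/(\eta\sqrt{\rho\epsilon}))$. I would show that whenever $\x_{t_0}$ is not an $\epsilon$-second-order stationary point, the function decreases by at least $\mathscr{F}$ over the next $\mathscr{T}$ PSGD iterations with probability $1-\delta'$. Summing until the budget $(f(\x_0)-f^\star)/\mathscr{F}$ is exhausted and union-bounding over all blocks yields an iteration count of $\tilde{O}(\mathscr{T}(f(\x_0)-f^\star)/\mathscr{F}) = \tilde{O}(\ell(f(\x_0)-f^\star)\Ns/\epsilon^2)$, matching Eq.~\eqref{eq:SGD_para}.

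A non-SOSP iterate splits into two sub-cases. Case 1 is a large gradient, $\|\grad f(\x_{t_0})\| > \epsilon$: expanding $f(\x_{t+1})$ via $\ell$-gradient Lipschitzness and $\x_{t+1} = \x_t - \eta(\g(\x_t;\theta_t) + \xi_t)$ gives per-step conditional decrease $-\eta\|\grad f(\x_t)\|^2/2 + O(\eta^2\ell(\sigma^2 + r^2))$. Telescoping and applying a vector Freedman inequality under Assumption~\ref{assump:SG} yields the $\mathscr{F}$-drop, since $\eta \sim 1/(\ell\Ns)$ and $r^2 \sim \epsilon^2\Ns$ make the noise subdominant.

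Case 2 is the strict-saddle case, $\|\grad f(\x_{t_0})\| \le \epsilon$ and $\lambda_{\min}(\hess f(\x_{t_0})) \le -\sqrt{\rho\epsilon}$, and is the technical core. Let $\e_1$ be the smallest eigenvector of $\hess f(\x_{t_0})$. I would set up a two-point coupling: consider PSGD runs $\{\x_t\}$ and $\{\y_t\}$ starting at $\x_{t_0}$ and $\y_{t_0} = \x_{t_0} + \mu\e_1$, sharing identical samples $\theta_t$ and perturbations $\xi_t$. The difference $\w_t := \y_t - \x_t$ then satisfies
\begin{equation*}
\w_{t+1} = (\I - \eta\hess f(\x_{t_0}))\,\w_t - \eta\Delta_t,
\end{equation*}
where $\Delta_t$ collects a Hessian-Lipschitz remainder and a stochastic deviation. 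The operator $\I - \eta\hess f(\x_{t_0})$ expands $\e_1$ by factor $1 + \eta\sqrt{\rho\epsilon}$, so the deterministic part of $\w_{\mathscr{T}}$ along $\e_1$ is exponentially amplified. An \emph{improve-or-localize} lemma — obtained by summing per-step decrease against step norms — shows that any trajectory straying from a ball of radius $\tilde{O}(\sqrt{\epsilon/\rho})$ around $\x_{t_0}$ already incurs an $\mathscr{F}$-decrease; hence if neither $\{\x_t\}$ nor $\{\y_t\}$ escapes, $\w$ stays polynomially bounded, contradicting its amplification. Therefore at least one trajectory must escape, and a slab-thickness argument on the perturbation $\xi_{t_0}$ (whose marginal density along $\e_1$ is $\tilde{O}(\sqrt{d}/r)$ near zero) transfers this escape to the actual PSGD run with probability $1 - \tilde{O}(\delta')$.

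The main obstacle is controlling the martingale remainder $\Delta_t$ during the coupled escape. Unrolling the recursion,
\begin{equation*}
\w_{\mathscr{T}} = (\I - \eta\hess f(\x_{t_0}))^{\mathscr{T}}\mu\e_1 - \eta\sum_{t=0}^{\mathscr{T}-1}(\I - \eta\hess f(\x_{t_0}))^{\mathscr{T}-1-t}\Delta_t,
\end{equation*}
so the noise — weighted by powers up to $(1+\eta\sqrt{\rho\epsilon})^{\mathscr{T}}$ — must remain smaller than the deterministic amplification. Its variance along $\e_1$ scales as $\eta^2 \sigma_{\mathrm{eff}}^2 (1+\eta\sqrt{\rho\epsilon})^{2\mathscr{T}}$, which is exactly what dictates the definition of $\Ns$. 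Under Assumption~\ref{assump:SG_Lip}, the Lipschitz cancellation $\|\g(\y_t;\theta_t)-\g(\x_t;\theta_t)\| \le \tilde{\ell}\|\w_t\|$ replaces $\sigma^2$ here by $\tilde{\ell}^2\|\w_t\|^2$, producing the $\tilde{\ell}^2/(\ell\sqrt{\rho\epsilon})$ term; without it, isotropy of $\g - \grad f$ forces a dimension factor $d$ in the effective variance along the escape direction, producing the $\sigma^2 d/\epsilon^2$ term. Vector Azuma/Freedman together with a union bound over the $\mathscr{T}$-step window closes the argument.
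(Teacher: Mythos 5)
Your high-level skeleton (descent lemma, improve-or-localize, coupling near strict saddles, union bound over $\sim(f(\x_0)-f^\star)/\ufun$ escape episodes) is the same as the paper's, and your Case~1 is fine. However, the coupling you propose in Case~2 is genuinely different from the paper's, and it does not achieve the stated parameter dependence. You couple two PSGD runs at \emph{different starting points} $\x_{t_0}$ and $\x_{t_0}+\mu\e_1$ with \emph{identical} $\theta_t$ and $\xi_t$, so the only escape signal is the one-shot initial separation, amplified to $(1+\eta\gamma)^\utime\mu$. This is the coupling used for the deterministic variant (Lemma~\ref{lem:coupleseq_gd}), but it does not port to the stochastic setting. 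The paper instead couples two runs from the \emph{same} starting point, with the perturbation sign flipped along $\e_1$ at \emph{every} step ($\e_1\trans\xi_\tau = -\e_1\trans\modify{\xi}_\tau$, everything else shared; Definition~\ref{def:IndCouple}). The escape signal $\qc(t)$ then \emph{accumulates} over time with variance $\coef^2(t)\eta^2 r^2/d$, matching the growth rate $\coefB(t)$ of the stochastic-gradient noise term $\qb(t)$, so the signal-to-noise ratio $r/(\sigma\sqrt{d})$ is time-independent and an escape probability $\ge 1/3$ falls out of symmetry, with no slab argument at all.

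Concretely, in your construction the accumulated noise along $\e_1$ at time $\utime$ scales like $\coefB(\utime)\,\eta\sigma \approx (1+\eta\gamma)^\utime\,\eta\sigma/\sqrt{2\eta\gamma}$, whereas your signal is $(1+\eta\gamma)^\utime\mu$. To conclude that one of the two coupled runs escapes, you must take $\mu\gtrsim \eta\sigma/\sqrt{2\eta\gamma}$. Then the slab argument over $\xi_{t_0}$ gives a stuck probability $\gtrsim \mu\sqrt{d}/(\eta r) \gtrsim \sigma\sqrt{d/(\eta\gamma)}/r$, i.e.\ you need $r\gtrsim \sigma\sqrt{d/(\eta\gamma)}$, not merely $r\gtrsim\sigma\sqrt{d}$. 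The extra factor $1/\sqrt{\eta\gamma}$ (where $\eta\gamma=\eta\sqrt{\rho\epsilon}\ll 1$ under Eq.~\eqref{eq:SGD_para}) makes this incompatible with $r=\tlTheta(\epsilon\sqrt{\Ns})$ and destroys the claimed $\Ns$-scaling. In short, the missing idea is that the random perturbation injected at \emph{every} step must be the source of the escape direction, via a sign-flip coupling whose signal grows like $\coefB(t)$ rather than $(1+\eta\gamma)^t$; a one-shot separation plus slab-thinness, which suffices deterministically, is too weak against accumulated stochastic noise. (A smaller point: your explanation of the $d$ factor says ``isotropy of $\g-\grad f$'' forces it; it is rather the isotropy of the \emph{injected} perturbation, which puts only a $1/\sqrt{d}$ fraction of its budget along $\e_1$, while the un-Lipschitz gradient noise can concentrate its full $\sigma$ along $\e_1$.)
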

\noindent
We note that Remark \ref{rm:output} and Remark \ref{rm:dist} apply directly to Theorem \ref{thm:main_SGD}.

Theorem \ref{thm:main_SGD} provides a result for both the scenario in which Assumption \ref{assump:SG_Lip} holds and when it does not. In the former case, taking $\epsilon$ such that $\sigma^2/\epsilon^2 \ge \tilde{\ell}^2/ (\ell\sqrt{\rho\epsilon})$, we have that $\Ns \approx 1 + \sigma^2/\epsilon^2$. Our results then show that perturbed SGD finds a second-order stationary points in $\tlO(\epsilon^{-4})$ iterations, which matches Theorem \ref{thm:classic_sgd} up to logarithmic factors. 

In the general case where Assumption \ref{assump:SG_Lip} does not hold ($\tilde{\ell} = \infty$), we have $\Ns = 1+ \sigma^2 d/\epsilon^2$, and Theorem~\ref{thm:main_SGD} guarantees that PSGD finds an $\epsilon$-second-order stationary point in $\tlO(d\epsilon^{-4})$ iterations. Comparing to Theorem \ref{thm:classic_sgd}, we see that the algorithm pays an additional cost that is linear in dimension $d$.


Finally, Theorem~\ref{thm:main_SGD} can be easily extended to the mini-batch setting, with parameters chosen as:
\begin{equation}\label{eq:SGD_minibatch_para}
\eta = \tlTheta(\frac{1}{\ell \cdot \Ms}), \quad
r = \tlTheta(\epsilon \sqrt{\Ms}), \quad \text{where} \quad \Ms = 1 + \frac{1}{m}\min\left\{\frac{\sigma^2}{\epsilon^2} + \frac{\tilde{\ell}^2}{\ell\sqrt{\rho\epsilon}}, ~\frac{\sigma^2d}{\epsilon^2}\right\}.
\end{equation}



\begin{theorem}[Mini-batch version]\label{thm:main_SGD_minibatch}
Let the function $f$  satisfy Assumption~\ref{assump:GD} and assume that the stochastic gradient $\g$ satisfies Assumption \ref{assump:SG} (or \ref{assump:SG_Lip} optionally). Then, for any $\epsilon, \delta, m > 0$, the mini-batch PSGD algorithm (Algorithm \ref{algo:Mini_PSGD}), with parameters $(\eta, r)$ chosen as in Eq.~\eqref{eq:SGD_minibatch_para}, will visit an $\epsilon-$second-order stationary point at least once in the following number of iterations, with probability at least $1-\delta$:
\begin{equation*}
\tilde{O}\left(\frac{\ell(f(\x_0) - f^\star)}{\epsilon^2} \cdot \Ms \right).
\end{equation*}
\end{theorem}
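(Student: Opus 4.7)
The plan is to reduce the mini-batch case to a single-sample instance of PSGD by packaging each mini-batch as a single query to an \emph{effective} stochastic oracle, and then invoke Theorem~\ref{thm:main_SGD}. Concretely, let $\Theta_t \defeq (\theta_t^{(1)}, \ldots, \theta_t^{(m)})$ and define
$$\bar{\g}(\x; \Theta) \defeq \frac{1}{m} \sum_{i=1}^m \g(\x; \theta^{(i)}).$$
Then Algorithm~\ref{algo:Mini_PSGD} is exactly Algorithm~\ref{algo:PSGD} driven by the oracle $\bar{\g}$ with randomness $\Theta_t \sim \cD^m$, so the theorem follows provided $\bar{\g}$ satisfies Assumption~\ref{assump:SG} (and, when applicable, Assumption~\ref{assump:SG_Lip}) with appropriately reduced parameters.

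First, $\E\, \bar{\g}(\x; \Theta) = \nabla f(\x)$ by linearity. The summands $\g(\x; \theta^{(i)}) - \nabla f(\x)$ are i.i.d., mean zero, and norm sub-Gaussian with parameter $\sigma$ by Assumption~\ref{assump:SG}, so a vector Hoeffding-type inequality for norm sub-Gaussian vectors (the type used en route to Lemma~\ref{lem:examplesGnorm}) shows that $\bar{\g}(\x; \Theta) - \nabla f(\x)$ is norm sub-Gaussian with parameter $\sigma/\sqrt{m}$. Substituting $\sigma^2/m$ for $\sigma^2$ in Eq.~\eqref{eq:SGD_para} immediately produces the $1/m$ scaling on the $\sigma^2/\epsilon^2$ and $\sigma^2 d/\epsilon^2$ terms appearing in $\Ms$ in Eq.~\eqref{eq:SGD_minibatch_para}.

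The more delicate step concerns Assumption~\ref{assump:SG_Lip}: the Lipschitz constant of $\bar{\g}(\cdot; \Theta)$ is trivially still at most $\tilde{\ell}$, so a naive substitution does not yield any $1/m$ factor on the $\tilde{\ell}^2/(\ell\sqrt{\rho\epsilon})$ term. The key observation is that the proof of Theorem~\ref{thm:main_SGD} uses Assumption~\ref{assump:SG_Lip} only through the coupled deviation
$$\bigl[\g(\x;\theta)-\g(\y;\theta)\bigr]-\bigl[\nabla f(\x)-\nabla f(\y)\bigr],$$
which controls the fluctuation of the stochastic Hessian along trajectory segments. For $\bar{\g}$, the analogous object is an average of $m$ i.i.d., mean-zero terms each of norm at most $2\tilde{\ell}\|\x-\y\|$, and another application of vector concentration gives a norm sub-Gaussian bound with parameter $O(\tilde{\ell}\|\x-\y\|/\sqrt{m})$, i.e., an effective Lipschitz parameter of $\tilde{\ell}/\sqrt{m}$. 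Plugging $\tilde{\ell}^2/m$ into the bound of Theorem~\ref{thm:main_SGD} produces the $1/m$ factor on $\tilde{\ell}^2/(\ell\sqrt{\rho\epsilon})$ in $\Ms$ and gives the claimed iteration complexity.

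The main obstacle is the Lipschitz step: I expect to need to reexamine the proof of Theorem~\ref{thm:main_SGD} to confirm that every appearance of $\tilde{\ell}$ can indeed be attributed to the coupled-deviation quantity above, rather than to a pointwise bound on $\g(\x;\theta)$ itself. If any step does rely on the raw Lipschitz constant, the cleanest fix is to restate Theorem~\ref{thm:main_SGD} in a slightly more general form parameterized by the variance of this coupled deviation, and then verify that $\bar{\g}$ satisfies that refined assumption with variance proxy $\tilde{\ell}^2/m$; all remaining steps of the reduction are routine.
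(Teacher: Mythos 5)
Your proposal is essentially the same argument as the paper's, which states the proof in one line: re-run the proof of Theorem~\ref{thm:main_SGD} noting that averaging over a batch of size $m$ reduces both the variance $\sigma^2$ and the variance proxy $\tilde{\ell}^2\norm{\dif{\x}_{\tau}}^2$ appearing in Eq.~\eqref{eq:SGlipcoupling} by a factor of $m$, up to log factors. You correctly identify the one genuinely subtle point---that the raw Lipschitz constant of the averaged oracle does not shrink, and that Assumption~\ref{assump:SG_Lip} enters the proof only through the coupled deviation $\dif{\zeta}_\tau$, whose conditional variance proxy does shrink by $1/m$---which is exactly what the paper is pointing at when it references Eq.~\eqref{eq:SGlipcoupling}; the only cosmetic difference is that you initially frame the argument as a black-box reduction to Theorem~\ref{thm:main_SGD} before concluding, as you must, that the proof has to be inspected or the theorem restated in terms of the coupled-deviation variance.
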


Theorem~\ref{thm:main_SGD_minibatch} says that if the mini-batch size $m$ is not too large---$m \le \Ns$, where $\Ns$ is defined in Eq.~\eqref{eq:SGD_para}---then mini-batch PSGD will reduce the number of iterations linearly, while not increasing the total number of stochastic gradient queries.




\section{Proofs}\label{sec:proof}

We provide full proofs of our main results, Theorem~\ref{thm:main_SGD} and Theorem~\ref{thm:main_SGD_minibatch}, in Appendix~\ref{app:proof}. (Theorem~\ref{thm:main_GD} follows directly from the proof of Theorem~\ref{thm:main_SGD} by setting $\sigma=0$).  These proofs require novel concentration inequalities and other tools from stochastic analysis to handle the relatively complex way in which stochasticity interacts with geometry in the neighborhood of saddle points.  In the current section we circumvent some of these complexities by presenting a conceptually straightforward proof for an algorithm that is a variant of PGD.  This algorithm, summarized in Algorithm~\ref{algo:PGDV}, removes some of the stochasticity of PGD by restricting the way in which perturbation noise is added.  The algorithm is more complex than PGD, but the proof is streamlined, allowing the core concepts underlying the full proof to be conveyed more simply.

The following theorem is the specialization of Theorem \ref{thm:main_GD} to the setting of Algorithm~\ref{algo:PGDV}.

\begin{algorithm}[t]
\caption{Perturbed Gradient Descent (Variant)}\label{algo:PGDV}
\begin{algorithmic}
\renewcommand{\algorithmicrequire}{\textbf{Input: }}
\renewcommand{\algorithmicensure}{\textbf{Output: }}
\REQUIRE $\x_0$, step size $\eta$, perturbation radius $r$, time interval $\utime$, tolerance $\epsilon$.
\STATE $t_{\text{perturb}} = 0$
\FOR{$t = 0, 1, \ldots, T $}
\IF{$\norm{\grad f(\x_t)} \le \epsilon$ and $t - t_{\text{perturb}} > \utime$}
\STATE $\x_t \leftarrow \x_t - \eta\xi_t, ~(\xi_t \sim \text{Uniform}(B_0(r))); \quad t_{\text{perturb}} \leftarrow t$
\ENDIF
\STATE $\x_{t+1} \leftarrow \x_t - \eta \grad f (\x_t)$
\ENDFOR
\end{algorithmic}
\end{algorithm}

\begin{theorem}\label{thm:main_gdez}
Let $f$ satisfy Assumption~\ref{assump:GD} and define $\Delta_f \defeq f(\x_0) - f^\star$.  For any $\epsilon, \delta>0$, the PGD (Variant) algorithm (Algorithm~\ref{algo:PGDV}), with parameters $\eta, r, \utime$ chosen as in Eq.~\eqref{eq:para_gd} and with $\logt = c \cdot \log( d \ell \Delta_f/(\rho\epsilon\delta))$, has the property that at least one half of its iterations of will be $\epsilon-$second order stationary points, after the following number of iterations, and with probability at least $1-\delta$:
\begin{equation*}
\tilde{O}\left(\frac{\ell \Delta_f }{\epsilon^2} \right).
\end{equation*}
Here $c$ is an absolute constant.
\end{theorem}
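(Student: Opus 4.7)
The plan is to classify iterations into two types---those at which the current iterate $\x_t$ is an $\epsilon$-second-order stationary point, and the rest, which I will call \emph{bad}---and to show that every bad iteration can be charged, on amortization, with a function value decrease of at least $\tilde{\Omega}(\epsilon^2/\ell)$. Since the total decrease is bounded by $\Delta_f$, this will force the number of bad iterations to be at most $\tilde{O}(\ell \Delta_f /\epsilon^2)$, so that at least half of the first $T = \tilde{\Theta}(\ell \Delta_f / \epsilon^2)$ iterations are $\epsilon$-second-order stationary. Each bad iterate falls into one of two cases: (i) the \emph{large-gradient} case, $\norm{\grad f(\x_t)} > \epsilon$; or (ii) the \emph{strict-saddle} case, $\norm{\grad f(\x_t)} \le \epsilon$ but $\lambda_{\min}(\hess f(\x_t)) < -\sqrt{\rho\epsilon}$.

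The large-gradient case is handled by the textbook descent lemma: with $\eta \le 1/\ell$ and Assumption~\ref{assump:GD},
\begin{equation*}
f(\x_{t+1}) \le f(\x_t) - \tfrac{\eta}{2}\norm{\grad f(\x_t)}^2 \le f(\x_t) - \tfrac{\eta\epsilon^2}{2},
\end{equation*}
so every such iteration already contributes $\Omega(\epsilon^2/\ell)$ of decrease on its own.

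The heart of the argument, and the main obstacle, is the \emph{saddle-escape lemma}: if $\x_t$ is a strict saddle and is perturbed by $\xi_t \sim \text{Uniform}(B_0(r))$, then the subsequent $\utime$ pure-gradient-descent iterations must drop the function value by $\fF \defeq \tilde{\Omega}(\epsilon^{3/2}/\sqrt{\rho})$ with probability at least $1 - \delta'$ for a suitably small $\delta'$. I would prove this via a \emph{two-point coupling}: let $\e_1$ be a unit eigenvector of $\hess f(\x_t)$ with eigenvalue at most $-\sqrt{\rho\epsilon}$, and compare two runs started at $\x_t - \eta \xi$ and $\x_t - \eta \xi'$ for which $\xi - \xi'$ is a small multiple of $\e_1$. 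Using $\rho$-Hessian Lipschitzness to linearize the gradient around $\x_t$, an inductive argument shows that $\norm{\x_s - \x_s'}$ grows like $(1 + \eta\sqrt{\rho\epsilon})^{s}$ as long as neither trajectory leaves a ball of radius $\tilde O(\sqrt{\epsilon/\rho})$ around $\x_t$; choosing $\utime = \tilde{\Theta}((\eta\sqrt{\rho\epsilon})^{-1})$ inflates this factor to $e^{\iota}$, so at least one of the two trajectories must escape that ball. The descent lemma applied along the escaping trajectory converts this displacement into the desired $\fF$ drop. Contrapositively, the perturbations $\xi$ for which the run fails to escape lie in a thin slab orthogonal to $\e_1$ of width $\tilde{O}(r e^{-\iota})$, whose relative volume in $B_0(r)$ is at most $\delta'$ for our choice of $\iota$.

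Combining the two lemmas gives the theorem. Between any two consecutive perturbations the algorithm takes at least $\utime$ pure-GD steps, so if a bad iterate is of strict-saddle type, the next triggered perturbation initiates an escape whose $\fF$ progress amortizes to $\fF/\utime = \tilde{\Omega}(\epsilon^2/\ell)$ per iteration, matching the per-step progress in the large-gradient case. If at least half of the first $T$ iterations were bad, the cumulative decrease would exceed $\Delta_f$ once $T$ meets the stated bound, a contradiction, so at least half the iterates must be $\epsilon$-second-order stationary points. Finally, the overall $1 - \delta$ guarantee follows from a union bound over the at most $\tilde{O}(\Delta_f/\fF)$ perturbation events, with $\delta'$ picked as $\delta$ divided by this count; the resulting extra $\log(1/\delta)$ factor is absorbed into $\logt = c\log(d\ell\Delta_f/(\rho\epsilon\delta))$.
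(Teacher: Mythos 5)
Your proposal takes essentially the same route as the paper: the same decomposition of bad iterates into large-gradient and strict-saddle cases, the same descent lemma for large gradients, the same ``improve-or-localize'' observation (large displacement $\Rightarrow$ large decrease, hence small decrease $\Rightarrow$ localization) that you phrase as applying the descent lemma along the escaping trajectory, the same two-point coupling argument showing that the pair of trajectories separated along $\e_1$ grows geometrically, and the same ``thin slab'' volume bound on the stuck region inside $B_0(r)$. The final amortization of $\ufun$ over $\utime$ steps is a cosmetic repackaging of the paper's exclusion argument (exclude $\utime$-windows after perturbations, then bound large-gradient iterates separately). Your blind reconstruction matches the paper's proof in all substantive respects.
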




We proceed to the proof of the theorem.  We first specify the choice of hyperparameters $\eta$, $r$, and $\utime$, and two quantities $\ufun$ and $\uspace$ which are frequently used:

\begin{equation}\label{eq:para_gd}
\eta = \frac{1}{\ell}, \quad r = \frac{\epsilon}{400 \iota^3}, \quad \utime = \frac{\ell}{ \sqrt{\rho\epsilon}} \cdot \logt, 
\quad \ufun = \frac{1}{50 \logt^3} \sqrt{\frac{\epsilon^3}{\rho}}, \quad 
\uspace = \frac{1}{4\logt} \sqrt{\frac{\epsilon}{\rho}}.
\end{equation}

Our high-level proof strategy is a proof by contradiction: when the current iterate is not an $\epsilon$-second order stationary point, it must either have a large gradient or have a strictly negative Hessian, and we prove that in either case, PGD must yield a significant decrease in function value in a controlled number of iterations. Also, since the function value can not decrease more than $f(\x_0) - f^\star$, we know that the total number of iterates that are not $\epsilon$-second order stationary points can not be very large.

First, we bound the rate of decrease when the gradient is large.

\begin{lemma}[Descent Lemma]\label{lem:descent_gd}
If $f(\cdot)$ satisfies Assumption \ref{assump:GD} and $\eta \le 1/\ell$, then the gradient descent sequence $\{\x_t\}$ satisfies:
\begin{equation*}
f(\x_{t+1}) - f(\x_{t}) \le -\eta\norm{\grad f(\x_t)}^2/2.
\end{equation*}
\end{lemma}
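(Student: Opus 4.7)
The plan is to use the standard quadratic upper bound on $f$ implied by $\ell$-gradient Lipschitzness, then substitute the gradient descent update and simplify using the step-size condition $\eta \le 1/\ell$.

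First, I would invoke the following standard consequence of $\ell$-gradient Lipschitzness (Assumption~\ref{assump:GD}): for any $\x, \y \in \R^d$,
\begin{equation*}
f(\y) \le f(\x) + \iprod{\grad f(\x)}{\y - \x} + \frac{\ell}{2}\norm{\y - \x}^2.
\end{equation*}
This inequality follows by writing $f(\y) - f(\x) = \int_0^1 \iprod{\grad f(\x + s(\y-\x))}{\y - \x}\,\dd s$, subtracting $\iprod{\grad f(\x)}{\y-\x}$, and applying Cauchy--Schwarz together with the Lipschitz bound on $\grad f$ to the resulting integrand.

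Next, I would apply the inequality with $\x = \x_t$ and $\y = \x_{t+1} = \x_t - \eta \grad f(\x_t)$. This substitution yields
\begin{equation*}
f(\x_{t+1}) \le f(\x_t) - \eta \norm{\grad f(\x_t)}^2 + \frac{\ell\eta^2}{2}\norm{\grad f(\x_t)}^2 = f(\x_t) - \eta\Bigl(1 - \frac{\ell \eta}{2}\Bigr)\norm{\grad f(\x_t)}^2.
\end{equation*}

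Finally, the step-size hypothesis $\eta \le 1/\ell$ implies $\ell\eta/2 \le 1/2$, so $1 - \ell\eta/2 \ge 1/2$. Rearranging gives the claimed bound $f(\x_{t+1}) - f(\x_t) \le -\eta\norm{\grad f(\x_t)}^2/2$. There is no real obstacle here; the only step that is not a one-line substitution is the quadratic upper bound, which is entirely standard and uses only Assumption~\ref{assump:GD} and the fundamental theorem of calculus applied to $s \mapsto f(\x + s(\y-\x))$.
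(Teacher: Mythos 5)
Your proof is correct and follows exactly the same route as the paper: invoke the quadratic upper bound from $\ell$-gradient Lipschitzness, substitute $\x_{t+1} - \x_t = -\eta\grad f(\x_t)$, and use $\eta \le 1/\ell$ to bound the resulting coefficient by $-\eta/2$. The only difference is that you spell out the derivation of the quadratic upper bound, which the paper takes as given.
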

\begin{proof}
Due to the $\ell$-gradient Lipschitz assumption, we have:
\begin{align*}
f(\x_{t+1}) & \le f(\x_t) + \la\grad f(\x_t), \x_{t+1} - \x_t \ra + \frac{\ell}{2} \norm{\x_{t+1} - \x_t}^2 \\
 &= f(\x_t)  - \eta\norm{\grad f(\x_t)}^2 + \frac{\eta^2\ell}{2} \norm{\grad f(\x_t)}^2
 \le f(\x_t) - \frac{\eta}{2}\norm{\grad f(\x_t)}^2
\end{align*}
\end{proof}

We now show that if the starting point has a strictly negative eigenvalue of the Hessian, then adding a perturbation and following by gradient descent will yield a significant decrease in function value in $\utime$ iterations.
\begin{lemma}[Escaping Saddle Points]\label{lem:escapesaddle_gd}
Assume that $f(\cdot)$ satisfies Assumption \ref{assump:GD}, $\tilde{\x}$ satisfies $\norm{\grad f(\tilde{\x})} \le \epsilon$, and $\lambda_{\min}(\hess f(\tilde{\x})) \le -\sqrt{\rho\epsilon}$. Let $\x_0 = \tilde{\x} + \eta\xi ~(\xi \sim \text{Uniform}(B_0(r)))$.  Run gradient descent starting from $\x_0$.  This yields: 
\begin{equation*}
\Pr(f(\x_\utime) - f(\tilde{\x}) \le -\ufun/2) \ge 1-\frac{\ell  \sqrt{d}}{\sqrt{\rho\epsilon}}\cdot \logt^2 2^{8-\logt} ,
\end{equation*}
where $\x_\utime$ is the $\utime^{\textrm{th}}$ gradient descent iterate starting from $\x_0$.
\end{lemma}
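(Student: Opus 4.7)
I will argue by contradiction via a coupling argument, bounding the measure of the ``stuck region'' inside the perturbation ball. Define
\[
\cXs \;\defeq\; \{\xi \in B_0(r) \,:\, \text{the GD trajectory from } \tilde{\x}+\eta\xi \text{ satisfies } f(\x_\utime)-f(\tilde{\x}) > -\ufun/2 \}.
\]
The goal is to show that $\cXs$ is contained in a thin slab perpendicular to $\e_1$, the eigenvector of $\hess f(\tilde{\x})$ achieving $\lambda_{\min} \le -\sqrt{\rho\epsilon}$. Since $\xi$ is uniform on $B_0(r)$, a slab of width $2\mu$ has probability at most $O(\mu\sqrt{d}/r)$ by standard spherical-slab volume bounds, and plugging in the $\mu$ dictated by the coupling step will yield the claimed failure probability.

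\textbf{Step 1: Improve-or-localize.} If $\xi \in \cXs$, summing the descent lemma (Lemma~\ref{lem:descent_gd}) over $t=0,\ldots,\utime-1$ gives
\[
\sum_{t=0}^{\utime-1} \eta \norm{\grad f(\x_t)}^2 \;\le\; 2\bigl(f(\x_0)-f(\x_\utime)\bigr) \;\le\; \ufun \,+\, O(\eta r^2 \ell),
\]
after accounting for the single perturbation. Cauchy--Schwarz then bounds $\sum_t \eta \norm{\grad f(\x_t)}$, which upper-bounds $\max_{t\le \utime}\norm{\x_t-\tilde{\x}}$. With the parameter choices in~\eqref{eq:para_gd}, this max is at most $\uspace$ for every $t \le \utime$. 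Thus any stuck trajectory remains inside the ball $B_{\tilde{\x}}(\uspace)$, which is crucial because on this ball the $\rho$-Hessian Lipschitz property implies $\norm{\hess f(\x_t)-\hess f(\tilde{\x})} \le \rho\uspace \ll \sqrt{\rho\epsilon}$, so the local dynamics are well-approximated by the quadratic with Hessian $\H \defeq \hess f(\tilde{\x})$.

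\textbf{Step 2: Coupling along $\e_1$.} Suppose, for contradiction, that two perturbations $\xi, \xi' \in \cXs$ differ by $\xi'-\xi = \mu\e_1$ with $\mu \defeq 2\uspace/2^{\logt}$. Consider the two GD sequences $\{\x_t\},\{\x'_t\}$ and their difference $\w_t \defeq \x'_t-\x_t$. A telescoping computation gives
\[
\w_{t+1} \;=\; (\I - \eta\H)\w_t \;-\; \eta\bigl(\hess f(\zeta_t) - \H\bigr)\w_t,
\]
for some $\zeta_t$ on the segment between $\x_t$ and $\x'_t$, where the second term has norm at most $\eta\rho\uspace \cdot \norm{\w_t}$. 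Writing $\w_t$ in the eigenbasis of $\H$, the $\e_1$-component is amplified by $1+\eta\sqrt{\rho\epsilon}$ per step, while the perturbation term is a lower-order contamination. By induction (tracking the dominant $\e_1$ component and treating the perpendicular error as subleading, as in the standard Jin et al.\ ``large-direction'' argument), one obtains
\[
\norm{\w_\utime} \;\ge\; \tfrac{1}{2}\bigl(1+\eta\sqrt{\rho\epsilon}\bigr)^{\utime} \mu \;\ge\; 2^{\logt-1}\mu \;=\; \uspace,
\]
using $\utime = (\ell/\sqrt{\rho\epsilon})\logt$ and $\eta=1/\ell$. But both $\x_\utime,\x'_\utime \in B_{\tilde{\x}}(\uspace)$, so $\norm{\w_\utime} < 2\uspace$, which is consistent unless we sharpen: in fact, we can set the constants so that the lower bound strictly exceeds $2\uspace$, giving a contradiction. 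Hence at most one of $\xi,\xi'$ lies in $\cXs$, i.e.\ $\cXs$ has width at most $\mu$ along $\e_1$.

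\textbf{Step 3: Probability computation.} The set $\cXs$ is contained in a slab $\{\xi \in B_0(r) : \abs{\langle \xi,\e_1\rangle}\le \mu\}$, whose uniform measure is bounded by $\mu \sqrt{d}/(r\sqrt{\pi})$ (a standard ball-slab inequality). Plugging in $\mu = 2\uspace/2^{\logt}$, $r = \epsilon/(400\logt^3)$, and $\uspace = (1/4\logt)\sqrt{\epsilon/\rho}$, this upper bound simplifies to $O(\ell\sqrt{d}\,\logt^{2}\,2^{-\logt}/\sqrt{\rho\epsilon})$, matching the stated failure probability (the constant $2^{8}$ absorbs the various numerical factors).

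\textbf{Main obstacle.} The technical heart is the coupling induction in Step~2: one must show that the $\e_1$-component of $\w_t$ dominates the Hessian-approximation error term for the full $\utime$ iterations. This requires a simultaneous induction hypothesis of the form ``$\e_1$-component is at least half of $\norm{\w_t}$'', which relies on choosing $\uspace$ and $\ufun$ small enough (as in~\eqref{eq:para_gd}) so that $\rho\uspace$ is dominated by $\sqrt{\rho\epsilon}$ by a polylog margin. Calibrating these constants so that the contradiction in Step~2 is strict, while keeping the slab width $\mu$ small enough for Step~3, is the delicate balance that the choice of $\logt$ in~\eqref{eq:para_gd} is designed to achieve.
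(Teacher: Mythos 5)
Your proposal follows essentially the same three-step structure as the paper's proof — improve-or-localize, a coupling argument along $\e_1$ to bound the width of the stuck region, and a ball-slab volume computation — so the approach is correct and matches the paper. One bookkeeping caveat: the separation threshold you choose, $\mu = 2\uspace/2^{\logt}$, is missing a factor of $\ell$ (equivalently a $1/\eta$); since $\x'_0 - \x_0 = \eta\mu\e_1$ while the geometric growth gives $\norm{\w_\utime}\gtrsim 2^{\logt}\eta\mu$, you actually need $\mu \gtrsim 2^{-\logt}\ell\uspace$ to obtain $\norm{\w_\utime}\gtrsim\uspace$, and it is this $\ell$ that produces the $\ell\sqrt{d}/\sqrt{\rho\epsilon}$ prefactor in the stated failure probability. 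Also, rather than forcing $\norm{\w_\utime} > 2\uspace$ by adjusting constants as you suggest, the paper uses the cleaner observation $\max\{\norm{\x_\utime-\tilde{\x}},\norm{\modify{\x}_\utime-\tilde{\x}}\}\ge\tfrac12\norm{\dif{\x}_\utime}$, which yields a contradiction already at threshold $\uspace$ and keeps the numerical constant within $2^{8}$.
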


In order to prove this, we need to prove two lemmas, and the major simplification over~\cite{jin2017escape} comes from the following lemma which says that if function value does not decrease too much over $t$ iterations, then all iterates $\{\x_{\tau}\}_{\tau = 0}^t$ will remain in a small neighborhood of $\x_0$.

\begin{lemma}[Improve or Localize]\label{lem:improveorlocalize_gd}
Under the setting of Lemma \ref{lem:descent_gd}, for any $t \ge \tau >0$:
\begin{equation*}
\norm{\x_{\tau} - \x_0} \le \sqrt{2\eta t (f(\x_0) - f(\x_t) )}.
\end{equation*}
\end{lemma}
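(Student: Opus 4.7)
The plan is to bound the per-step displacement $\norm{\x_{s+1} - \x_s}$ by the per-step function decrease, then aggregate with Cauchy--Schwarz. The gradient descent update gives $\x_{s+1} - \x_s = -\eta \grad f(\x_s)$, so $\norm{\x_{s+1} - \x_s}^2 = \eta^2 \norm{\grad f(\x_s)}^2$. Meanwhile, Lemma~\ref{lem:descent_gd} yields $\tfrac{\eta}{2}\norm{\grad f(\x_s)}^2 \le f(\x_s) - f(\x_{s+1})$. Multiplying these together (equivalently, substituting) gives the core per-step bound
\begin{equation*}
\norm{\x_{s+1} - \x_s}^2 \;\le\; 2\eta\bigl(f(\x_s) - f(\x_{s+1})\bigr).
\end{equation*}

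Next I would apply the triangle inequality to get $\norm{\x_\tau - \x_0} \le \sum_{s=0}^{\tau-1} \norm{\x_{s+1} - \x_s}$, and then Cauchy--Schwarz in the form $\sum_{s=0}^{\tau-1} a_s \le \sqrt{\tau \sum_{s=0}^{\tau-1} a_s^2}$. The sum of squared displacements telescopes:
\begin{equation*}
\sum_{s=0}^{\tau-1} \norm{\x_{s+1} - \x_s}^2 \;\le\; 2\eta \sum_{s=0}^{\tau-1} \bigl(f(\x_s) - f(\x_{s+1})\bigr) \;=\; 2\eta\bigl(f(\x_0) - f(\x_\tau)\bigr).
\end{equation*}
Combining these yields $\norm{\x_\tau - \x_0} \le \sqrt{2\eta\tau\bigl(f(\x_0) - f(\x_\tau)\bigr)}$.

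Finally, I would upgrade $\tau$ to $t$ on the right-hand side. Since Lemma~\ref{lem:descent_gd} implies that $f(\x_k)$ is monotonically non-increasing in $k$, we have $f(\x_\tau) \ge f(\x_t)$ for $\tau \le t$, hence $f(\x_0) - f(\x_\tau) \le f(\x_0) - f(\x_t)$. Combined with $\tau \le t$, this gives the stated bound $\norm{\x_\tau - \x_0} \le \sqrt{2\eta t (f(\x_0) - f(\x_t))}$.

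I do not expect any substantive obstacle: the entire argument is a short deterministic calculation that couples the standard descent inequality with Cauchy--Schwarz. The only conceptual point worth emphasizing is that the square-root-in-$t$ dependence (rather than the naive linear-in-$t$ dependence from the triangle inequality alone) is exactly what makes this lemma useful for the saddle-escape argument: it says that unless the function value has dropped substantially, the iterates stay in a small ball around $\x_0$, which is precisely what is needed to linearize the dynamics near a saddle point.
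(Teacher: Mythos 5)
Your proof is correct and takes essentially the same route as the paper: triangle inequality on the steps, Cauchy--Schwarz, the telescoping of the descent lemma, and the monotone decrease of $f(\x_k)$ to pass from $\tau$ to $t$. The paper simply extends the triangle-inequality sum to $t$ terms up front (so the factor $t$ and the endpoint $f(\x_t)$ appear immediately), whereas you first obtain the bound in terms of $\tau$ and then monotonically upgrade; the two orderings are interchangeable.
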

\begin{proof}
Given the gradient update, $\x_{t+1} = \x_t - \eta \grad f(\x_t)$, we have that for any $\tau \le t$:
\begin{align*}
\norm{\x_\tau - \x_0} & \le \sum_{\tau = 1}^t \norm{\x_\tau - \x_{\tau - 1}}
\overset{(1)}{\le} [t \sum_{\tau = 1}^t \norm{\x_\tau - \x_{\tau - 1}}^2]^{\frac{1}{2}} \\
&= [\eta^2 t\sum_{\tau = 1}^t \norm{\grad f(\x_{\tau - 1})}^2]^{\frac{1}{2}} 
\overset{(2)}\le \sqrt{2\eta t (f(\x_0) - f(\x_t))},
\end{align*}
where step (1) uses Cauchy-Schwarz inequality, and step (2) is due to Lemma \ref{lem:descent_gd}.
\end{proof}


Second, we show that the region in which GD will get remain in a small local neighborhood for at least $\utime$ iterations if initialized there (which we refer to as the ``stuck region'') is thin. We show this by tracking any pair of points that differ only in an escaping direction and are at least $\omega$ far apart. We show that at least one sequence of the two GD sequences initialized at these points is guaranteed to escape the saddle point with high probability, so that the width of the stuck region along an escaping direction is at most $\omega$.

\begin{lemma}[Coupling Sequence]\label{lem:coupleseq_gd}
Suppose $f(\cdot)$ satisfies Assumption \ref{assump:GD} and $\tilde{\x}$ satisfies $\lambda_{\min}(\hess f(\tilde{\x})) \le -\sqrt{\rho\epsilon}$.
Let $\{\x_t\}$, $\{\modify{\x}_t\}$ be two gradient descent sequences which satisfy: (1) $\max\{\norm{\x_0 - \tilde{\x}}, \norm{\modify{\x}_0 - \tilde{\x}}\} \le \eta r$; and (2) $\x_0 - \modify{\x}_0 = \eta r_0 \e_1$, where $\e_1$ is the minimum eigenvector of $\hess f(\tilde{\x})$ and $r_0 > \omega \defeq 2^{2-\logt} \ell \uspace $. Then:
\begin{equation*}
\min\{f(\x_\utime) - f(\x_0), f(\modify{\x}_\utime) - f(\modify{\x}_0)\} \le - \ufun.
\end{equation*}
\end{lemma}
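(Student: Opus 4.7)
My plan is to argue by contradiction. Suppose both $f(\x_\utime) - f(\x_0) > -\ufun$ and $f(\modify{\x}_\utime) - f(\modify{\x}_0) > -\ufun$. Then Lemma~\ref{lem:improveorlocalize_gd}, applied to each sequence, gives for every $\tau \le \utime$ the bound $\norm{\x_\tau - \x_0},\, \norm{\modify{\x}_\tau - \modify{\x}_0} \le \sqrt{2\eta\utime\ufun}$, which under the parameter choice~\eqref{eq:para_gd} equals $\frac{1}{5\logt}\sqrt{\epsilon/\rho}$. Since both $\x_0$ and $\modify{\x}_0$ lie within $\eta r \ll \uspace$ of $\tilde{\x}$, every iterate of both trajectories lies strictly inside the ball of radius $\uspace$ about $\tilde{\x}$; in particular the triangle inequality gives $\norm{\x_\utime - \modify{\x}_\utime} < 2\uspace$.

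Next I would study the difference $\v_t \defeq \x_t - \modify{\x}_t$. The gradient descent updates yield
\[ \v_{t+1} = \v_t - \eta\bigl(\grad f(\x_t) - \grad f(\modify{\x}_t)\bigr) = (\I - \eta\mat{H}(t))\v_t, \]
where $\mat{H}(t) \defeq \int_0^1 \hess f(\modify{\x}_t + s\v_t)\,ds$. Writing $\mat{H}(t) = \hess f(\tilde{\x}) + \mat{\Delta}(t)$, the Hessian Lipschitz assumption together with the localization force $\norm{\mat{\Delta}(t)} \le \rho\uspace = \sqrt{\rho\epsilon}/(4\logt)$, far smaller than $\gamma \defeq \sqrt{\rho\epsilon}$. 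Set $\A \defeq \I - \eta\hess f(\tilde{\x})$. Since $\A$ is symmetric and $\eta = 1/\ell$, its operator norm equals $\mu = 1 + \eta\abs{\lambda_{\min}(\hess f(\tilde{\x}))} \ge 1 + \eta\gamma$, attained along the eigenvector $\e_1$. Unrolling gives
\[ \v_t = \A^t \v_0 - \eta\sum_{\tau=0}^{t-1}\A^{t-1-\tau}\mat{\Delta}(\tau)\v_\tau. \]

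From this expansion I would first derive a crude Gronwall-type upper bound $\norm{\v_t} \le 2\mu^t\norm{\v_0}$, using that $\eta\rho\uspace\cdot\utime$ is the small constant $1/4$ under the parameter choice~\eqref{eq:para_gd}. Plugging this back into the $\e_1$-component of the expansion and exploiting $\A\e_1 = \mu\e_1$ yields
\[ \abs{\iprod{\e_1}{\v_\utime}} \ge \mu^\utime\eta r_0 - 2\eta^2 r_0\,\rho\uspace\,\utime\,\mu^{\utime-1} \ge \tfrac{1}{2}\mu^\utime\eta r_0. \]
Using $\mu^\utime \ge (1+\eta\gamma)^{\logt/(\eta\gamma)} \ge 2^{\logt}$ (since $(1+x)^{1/x} \ge 2$ for $x \in (0,1]$) and $\eta r_0 > \eta\omega = 2^{2-\logt}\uspace$, this gives $\norm{\v_\utime} \ge \abs{\iprod{\e_1}{\v_\utime}} \ge 2\uspace$, which contradicts the localization bound $\norm{\v_\utime} < 2\uspace$ established above.

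The main obstacle is the perturbation analysis of the linear recursion: the average Hessians $\mat{H}(t)$ vary along the trajectory and do not commute with $\A$, so one cannot diagonalize the dynamics directly. The constants in Eq.~\eqref{eq:para_gd} are tuned precisely for this calculation: $\uspace$ is chosen so the accumulated Hessian perturbation over $\utime$ steps absorbs only a constant fraction of the leading exponential term; $\omega$ is the smallest $\e_1$-separation whose amplification by $\mu^\utime \ge 2^\logt$ still exceeds $\uspace$; and $\ufun$ is chosen so that the localization radius $\sqrt{2\eta\utime\ufun}$ is a proper fraction of $\uspace$, closing the contradiction.
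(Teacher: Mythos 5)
Your proposal is correct and follows essentially the same strategy as the paper's proof: assume neither sequence descends, invoke Lemma~\ref{lem:improveorlocalize_gd} to localize both trajectories within $\uspace$ of $\tilde{\x}$, decompose the difference dynamics into the leading term $(\I-\eta\H)^t\dif{\x}_0$ and the accumulated Hessian-perturbation term, and show the leading term dominates and grows geometrically, contradicting localization. The only cosmetic difference is that you split the argument into a crude Gr\"onwall bound $\norm{\v_t}\le 2\mu^t\norm{\v_0}$ followed by a refined lower bound on the $\e_1$-component, whereas the paper folds both directions into a single induction $\norm{\q(t)}\le\norm{\p(t)}/2$; the two are mathematically equivalent and rely on the same cancellation $\eta\rho\uspace\utime=1/4$.
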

\begin{proof}
Assume the contrary, that is, $\min\{f(\x_\utime) - f(\x_0), f(\modify{\x}_\utime) - f(\modify{\x}_0)\} > - \ufun$.
Lemma \ref{lem:improveorlocalize_gd} implies localization of both sequences around $\tilde{\x}$; that is, for any $t\le \utime$:
\begin{align}
\max\{\norm{\x_t - \tilde{\x}}, \norm{\modify{\x}_t - \tilde{\x}}\} 
\le& \max\{\norm{\x_t - \x_0}, \norm{\modify{\x}_t - \modify{\x}_0}\}  +\max\{\norm{\x_0 - \tilde{\x}}, \norm{\modify{\x}_0 - \tilde{\x}}\} \nn \\
\le& \sqrt{2\eta \utime\ufun}+ \eta r \le \uspace,  \label{eq:localization_gd}
\end{align}
where the last step is due to our choice of $\eta, r, \utime, \ufun, \uspace$, as in Eq.~\eqref{eq:para_gd}, and $\ell/\sqrt{\rho\epsilon} \ge 1$.\footnote{We note that when $\ell/\sqrt{\rho\epsilon} < 1$, $\epsilon$-second-order stationary points are equivalent to $\epsilon$-first-order stationary points due to the function $f$ being $\ell$-gradient Lipschitz. In this case, the problem of finding $\epsilon$-second-order stationary points becomes straightforward.}
On the other hand, we can write the update equation for the difference $\dif{\x}_t \defeq \x_t - \modify{\x}_t$ as:
\begin{align*}
\dif{\x}_{t+1} =& \dif{\x}_t -\eta [\grad f(\x_t) - \grad f(\modify{\x}_t)]
= (\I - \eta\H)\dif{\x}_t - \eta \Delta_t \dif{\x}_t \\
=& \underbrace{(\I - \eta\H)^{t+1}\dif{\x}_0}_{\p(t+1)} - \underbrace{\eta\sum_{\tau = 0}^t(\I - \eta\H)^{t-\tau} \Delta_\tau \dif{\x}_\tau}_{\q(t+1)},
\end{align*}
where $\H = \hess f(\tilde{\x})$ and $\Delta_t = \int_{0}^1 [\hess f(\modify{\x}_t + \theta (\x_t -\modify{\x}_t) - \H]\mathrm{d} \theta $. 
We note that $\p(t)$ arises from the initial difference $\dif{x}_0$, and $\q(t)$ is an error term which arises from the fact that the function $f$ is not quadratic. We now use induction to show that the error term $\q(t)$ is always small compared to the leading term $\p(t)$. That is, we wish to show:
\begin{equation*}
\norm{\q(t)} \le \norm{\p(t)} /2,  \qquad t \in [\utime].
\end{equation*}
The claim is true for the base case $t=0$ as $\norm{\q(0)} = 0 \le \norm{\dif{\x}_0}/2 = \norm{\p(0)}/2$. Now suppose the induction claim is true up to $t$. Denote $\lambda_{\min}(\hess f(\x_0)) = -\gamma$. Note that $\dif{\x}_0$ lies in the direction of the minimum eigenvector of $\hess f(\x_0)$. Thus for any $\tau \le t$, we have:
\begin{align*}
\norm{\dif{\x}_\tau}  \le \norm{\p(\tau)} + \norm{\q(\tau)} \le 2\norm{\p(\tau)}
= 2\norm{(\I - \eta\H)^{\tau}\dif{\x}_0} = 2(1+\eta\gamma)^\tau \eta r_0.
\end{align*}
By the Hessian Lipschitz property, we have $\norm{\Delta_t} \le \rho \max\{\norm{\x_t - \tilde{\x}}, \norm{\modify{\x}_t - \tilde{\x}} \} \le \rho\uspace$,
therefore:
\begin{align*}
\norm{\q(t+1)} & = \norm{\eta\sum_{\tau = 0}^{t}(\I - \eta\H)^{t-\tau} \Delta_\tau \dif{\x}_\tau }
\le \eta\rho\uspace \sum_{\tau = 0}^{t} \norm{(\I - \eta\H)^{t-\tau}}\norm{\dif{\x}_\tau} \\
&\le 2\eta\rho\uspace \sum_{\tau = 0}^{t} (1+\eta\gamma)^t \eta r_0
\le 2\eta\rho\uspace \utime (1+\eta\gamma)^t \eta r_0
\le 2\eta\rho\uspace \utime \norm{\p(t+1)},
\end{align*}
where the second-to-last inequality uses $t+1 \le \utime$. By our choice of the hyperparameter in Eq.~\eqref{eq:para_gd}, we have $2\eta\rho\uspace \utime \le 1/2$, which finishes the inductive proof. 

Finally, the induction claim implies:
\begin{align*}
\max\{\norm{\x_\utime - \x_0}, \norm{\modify{\x}_\utime - \x_0}\} \ge& \frac{1}{2}\norm{\dif{\x}(\utime)} \ge \frac{1}{2}[\norm{\p(\utime)}
-\norm{\q(\utime)}] \ge \frac{1}{4}[\norm{\p(\utime)} \\
=& \frac{(1+\eta\gamma)^{\utime} \eta r_0}{4}
\overset{(1)}{\ge} 2^{\logt-2} \eta r_0 > \uspace,
\end{align*}
where step (1) uses the fact $(1+x)^{1/x} \ge 2$ for any $x \in (0, 1]$. This contradicts the localization property of Eq.~\eqref{eq:localization_gd}, which finishes the proof.
\end{proof}



\begin{figure}[t]
\centering
\begin{minipage}{.45\textwidth}
  \centering
  \includegraphics[trim={2cm 2cm 2cm 0cm}, width=\textwidth]{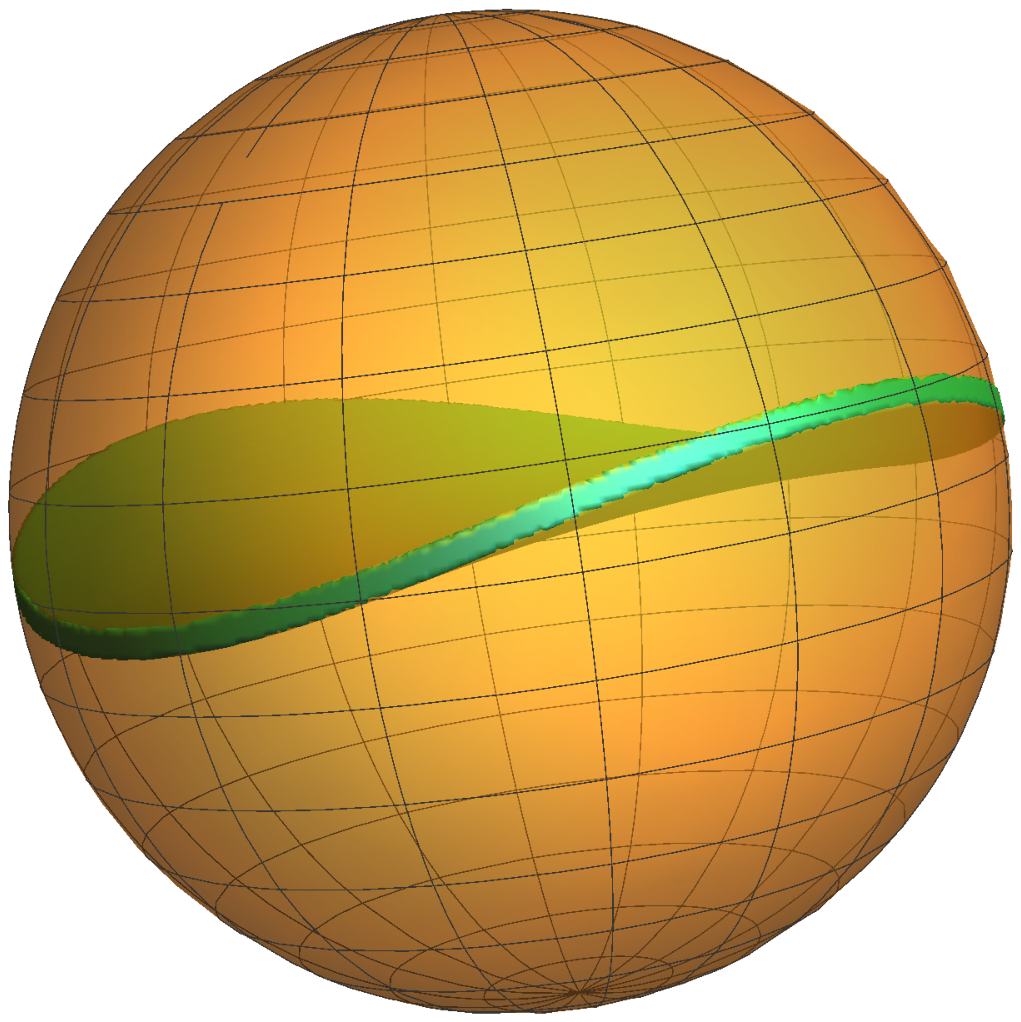}
  \captionof{figure}{Pertubation ball in 3D and ``thin pancake'' shape stuck region}
  \label{fig:band}
\end{minipage}%
\begin{minipage}{.05\textwidth}
~
\end{minipage}
\begin{minipage}{.45\textwidth}
  \centering
  \includegraphics[width=0.9\textwidth]{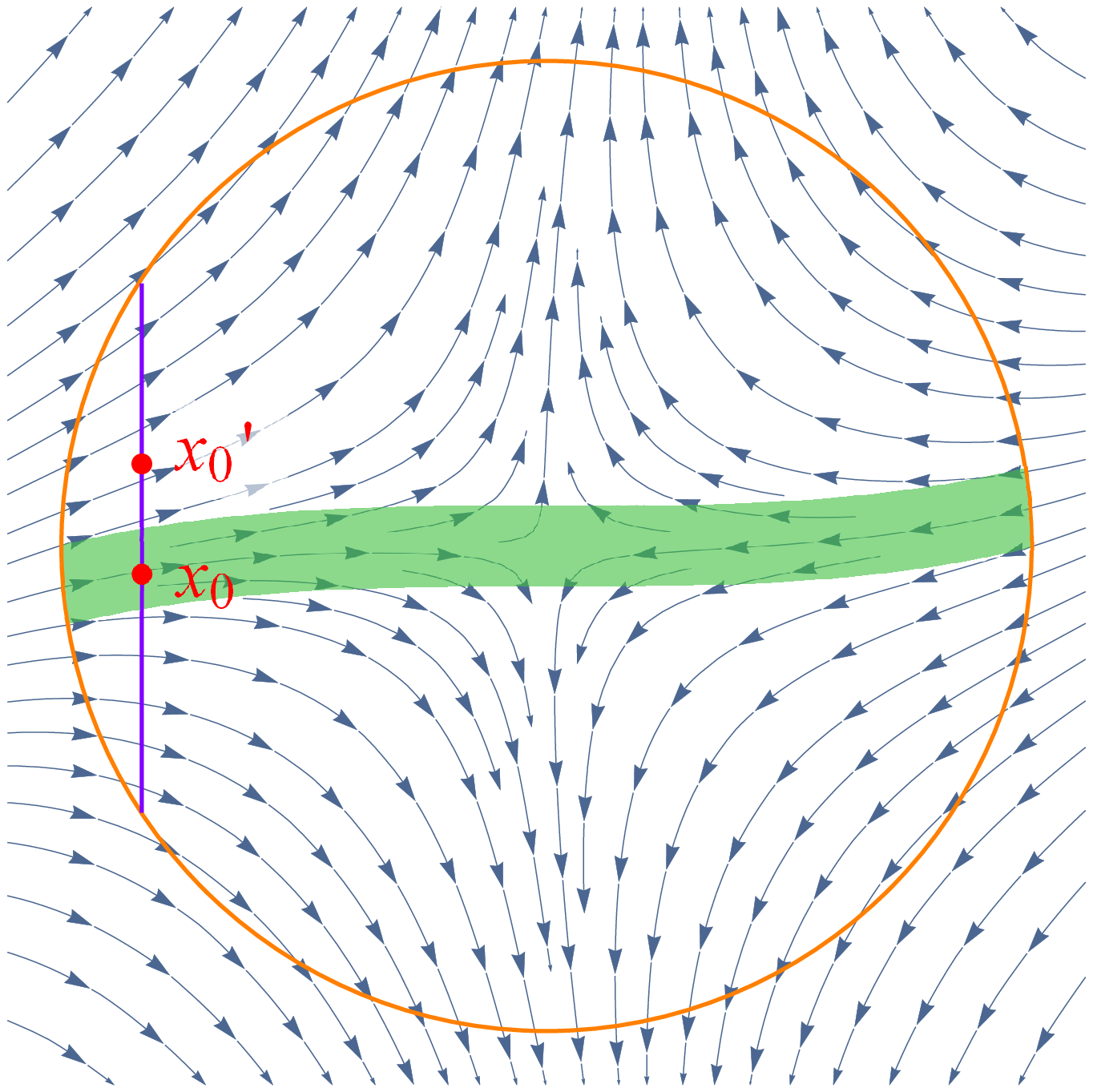}
  \captionof{figure}{Pertubation ball in 2D and ``narrow band'' stuck region under gradient flow}
  \label{fig:flow}
\end{minipage}
\end{figure}

Equipped with Lemma \ref{lem:improveorlocalize_gd} and Lemma \ref{lem:coupleseq_gd}, we are ready to prove Lemma \ref{lem:escapesaddle_gd}.

\begin{proof}[Proof of Lemma \ref{lem:escapesaddle_gd}]
Recall $\x_0 \sim \text{Uniform}(B_{\tilde{\x}}(\eta r))$. We refer to $B_{\tilde{\x}}(\eta r)$ the \emph{perturbation ball}, and define the \emph{stuck region} within the perturbation ball to be the set of points starting from which GD requires more than $\utime$ steps to escape:
\begin{equation*}
\cXs \defeq \{ \x \in B_{\tilde{\x}}(\eta r) ~|~ \{\x_t\} \text{~is GD sequence with~} \x_0 = \x, \text{and~} f(\x_\utime) - f(\x_0) > -\ufun \}.
\end{equation*}
See Figure \ref{fig:band} and Figure \ref{fig:flow} for illustrations. Although the shape of the stuck region can be very complicated, according to Lemma \ref{lem:coupleseq_gd} we know that the width of $\cXs$ along the $\e_1$ direction is at most $\eta \omega$. That is, $\text{Vol}(\cXs) \le \text{Vol}(\ball_0^{d-1}(\eta r)) \eta \omega$. Therefore:
\begin{align*}
\Pr(\x_0 \in \cXs) = \frac{\text{Vol}(\cXs)}{\text{Vol}(\ball^{d}_{\tilde{\x}}(\eta r))}
\le \frac{\eta \omega \times \text{Vol}(\ball^{d-1}_0(\eta r))}{\text{Vol} (\ball^{d}_0(\eta r))}
= \frac{\omega}{r\sqrt{\pi}}\frac{\Gamma(\frac{d}{2}+1)}{\Gamma(\frac{d}{2}+\frac{1}{2})}
\le \frac{\omega}{r} \cdot \sqrt{\frac{d}{\pi}} \le  \frac{\ell  \sqrt{d}}{\sqrt{\rho\epsilon}} \cdot \logt^2 2^{8-\logt}. 
\end{align*}
On the event $\{\x_0 \not \in \cXs\}$, due to our parameter choice in Eq.~\eqref{eq:para_gd}, we have:
\begin{equation*}
f(\x_\utime) - f(\tilde{\x}) = [f(\x_\utime) - f(\x_0)] + [f(\x_0)- f(\tilde{\x})]
\le -\ufun + \epsilon \eta r + \frac{\ell \eta^2 r^2}{2} \le -\ufun/2.
\end{equation*}
This finishes the proof.
\end{proof}


With Lemma \ref{lem:descent_gd} and Lemma \ref{lem:escapesaddle_gd} in hand, it is not hard to establish Theorem \ref{thm:main_gdez}.

\begin{proof}[Proof of Theorem \ref{thm:main_gdez}]
First, we set the total number of iterations $T$ to be:
\begin{equation*}
T = 8\max\left\{ \frac{(f(x_0) - f^\star)\utime}{\ufun},  \frac{(f(x_0) - f^\star)}{\eta\epsilon^2}\right\}
= O\left( \frac{\ell(f(x_0) - f^\star)}{\epsilon^2}\cdot \logt^4 \right).
\end{equation*}
Next, we choose $\logt = c \cdot \log( \frac{d \ell \Delta_f}{\rho\epsilon\delta})$, with a large enough absolute constant $c$ such that:
\begin{equation*}
(T  \ell  \sqrt{d} /\sqrt{\rho\epsilon} ) \cdot\logt^2  2^{8-\logt} \le \delta.
\end{equation*}
We then argue that, with probability $1-\delta$, Algorithm \ref{algo:PGDV} will add a perturbation at most $T/(4\utime)$ times.
This is because if otherwise, we can appeal to Lemma \ref{lem:escapesaddle_gd} every time we add a perturbation, and conclude:
\begin{equation*}
f(\x_T) \le f(\x_0) - T \ufun/(4\utime) < f^\star,
\end{equation*}
which can not happen. Finally, excluding those iterations that are within $\utime$ steps after adding perturbations, we still have $3T/4$ steps left.
They are either large gradient steps, $\norm{\grad f(\x_t)} \ge \epsilon$, or $\epsilon$-second order stationary points.
Within them, we know that the number of large gradient steps cannot be more than $T/4$. This is true because if otherwise, by Lemma \ref{lem:descent_gd}:
\begin{equation*}
f(\x_T) \le f(\x_0) - T \eta\epsilon^2 /4 < f^\star,
\end{equation*}
which again cannot happen. Therefore, we conclude that at least $T/2$ of the iterates must be $\epsilon-$second order stationary points.
\end{proof}


\section{Conclusions}\label{sec:conc}
We have shown that simple perturbed versions of GD and SGD escape saddle points and find second-order stationary points in essentially the same time that classical GD and SGD take to find first-order stationary points. The overheads are only logarithmic factors in dimensionality in both the non-stochastic setting and the stochastic setting with Lipschitz stochastic gradient. In the general stochastic setting, the overhead is a linear factor in dimension.

Combined with previous literature that shows that all second-order stationary points are global optima for a broad class of nonconvex optimization problems in machine learning and signal processing, our results directly provide efficient guarantees for solving those nonconvex problem via simple local search approaches. We now discuss several possible future directions, and further connections to other fields.


%
%

\paragraph{Optimal rates for finding second-order stationary points.}
\citet{carmon2017lower1} have presented lower bounds that imply that GD achieves the optimal rate for finding stationary points for gradient Lipschitz functions.  In our setting, we additionally assume that the Hessian is Lipschitz.  This implies that GD is no longer necessarily an optimal algorithm. While our results show that variants of GD are efficient in this setting, one would additionally like to know whether they are close to optimality.

Focusing on first-order algorithms, the best known gradient query complexity for finding second-order stationary points of functions with Lipschitz gradient and Lipschitz Hessian is $\tlO(\epsilon^{-1.75})$~\citep{carmon2016accelerated, agarwal2017finding, jin2017accelerated}, while the existing lower bound is $\Omega(\epsilon^{-12/7})$ by \citet{carmon2017lower2}. Note that this lower bound is restricted to deterministic algorithms, and thus does not apply to most existing algorithms for escaping saddle points as they are all randomized algorithms. For the stochastic setting with Lipschitz stochastic gradients, the best known query complexity is $\tlO(\epsilon^{-3})$~\citep{fang2018spider,zhou2019stochastic}, while the lower bound remains open. See Appendix \ref{app:table} for further discussion of the literature.

\paragraph{Escaping high-order saddle points.}
The current paper focuses on escaping strict saddle points and finding second-order stationary points. More generally, one can define $n$th-order stationary points as points that satisfies the KKT necessary conditions for being local minima up to $n$th-order derivatives. It becomes more challenging to find $n$th-order stationary points as $n$ increases, since it is necessary to escape higher-order saddle points. In terms of worst-case efficiency, \citet{nesterov2000squared} rules out the possibility of efficient algorithms for finding $n$th-order stationary points for all $n\ge 4$, showing that the problem is NP-hard. \citet{anandkumar2016efficient} present a third-order algorithm that finds third-order stationary points in polynomial time. It remains open whether simple variants of GD can also find third-order stationary points efficiently. It is unlikely that the overhead will still be small or only logarithmic in this case, but it is not clear what to expect for the overhead. A related question is to identify applications where third-order stationarity is needed, in addition to second-order stationarity, to achieve global optimality.

\paragraph{Connection to gradient Langevin dynamics.}
The Bayesian counterpart of SGD is the Langevin Monte Carlo (LMC) algorithm \citep{roberts1996exponential}, which performs the following iterative update:
\begin{equation*}
\x_{t+1} = \x_t - \eta (\grad f(\x_t) + \sqrt{2/(\eta\beta)} \w_t), \quad \text{where} \quad \w_t \sim \mathcal{N}(\zero, \I).
\end{equation*}
Here $\beta$ is known as the inverse temperature. When the step size $\eta$ goes to zero, the distribution of the LMC iterates is known to converge to the stationary distribution $\mu(\x) \propto e^{-\beta f(\x)}$ \citep{roberts1996exponential}.

While the LMC algorithm is superficially very close to stochastic gradient descent, the goals for the two algorithms are quite different.
\begin{itemize}
	\item \textbf{Convergence}: While the focus of the optimization literature is to find stationary points, the goal of the LMC algorithm is to converge to a stationary distribution (i.e., to mix rapidly).
	\item \textbf{Scaling of Noise}: The scaling of the stochasticity in SGD---and in particular the size of the perturbation that we consider in the current paper---is much smaller than the scaling considered in the LMC literature. Running our algorithm is equivalent to running LMC with temperature $\beta^{-1} \propto d^{-1}$. In this low-temperature or small-noise regime, the algorithm can no longer mix efficiently for smooth nonconvex functions, as it requires $\Omega(e^{d})$ steps in the worst case \citep{bovier2004metastability}. However, with this small amount of noise, the algorithm can still perform local search efficiently, and can find a second-order stationary point in a small number of iterations, as shown in Theorem \ref{thm:main_GD}.
\end{itemize}


Recent work of \citet{zhang2017hitting} studied the time that LMC takes to hit a second-order stationary point as a criterion for convergence, instead of the traditional mixing time to a stationary distribution.  In this analysis, the runtime is no longer exponential, but it is still polynomially dependent on dimension $d$ with large degree.

\paragraph{On the necessity of adding perturbations.} 
We have shown that adding perturbations to the iterations of GD or SGD allows these algorithms to escape saddle points efficiently. As an alternative, one can also simply run GD with random initialization, and try to escape saddle points using only the randomness due to the initialization. Although this alternative algorithm exhibits asymptotic convergence \citep{lee2016gradient}, it does not yield efficient convergence in general. \citet{du2017gradient} shows that even with fairly natural random initialization schemes and non-pathological functions, randomly initialized GD can be significantly slowed by saddle points, taking exponential time to escape them. 

\section*{Acknowledgements}
We thank Tongyang Li and Quanquan Gu for valuable 
discussions. This work was supported in part by the Mathematical Data Science program of the
Office of Naval Research under grant number N00014-18-1-2764. Rong Ge also acknowledges the funding support from NSF CCF-1704656, NSF CCF-1845171 (CAREER), Sloan Fellowship and Google Faculty Research Award.

\bibliographystyle{plainnat}
\bibliography{saddle}

\begin{thebibliography}{49}
\providecommand{\natexlab}[1]{#1}
\providecommand{\url}[1]{\texttt{#1}}
\expandafter\ifx\csname urlstyle\endcsname\relax
  \providecommand{\doi}[1]{doi: #1}\else
  \providecommand{\doi}{doi: \begingroup \urlstyle{rm}\Url}\fi

\bibitem[Agarwal et~al.(2017)Agarwal, Allen-Zhu, Bullins, Hazan, and
  Ma]{agarwal2017finding}
Naman Agarwal, Zeyuan Allen-Zhu, Brian Bullins, Elad Hazan, and Tengyu Ma.
\newblock Finding approximate local minima faster than gradient descent.
\newblock In \emph{Proceedings of the 49th Annual ACM SIGACT Symposium on
  Theory of Computing}, pages 1195--1199. ACM, 2017.

\bibitem[Allen-Zhu(2018)]{allen2018natasha}
Zeyuan Allen-Zhu.
\newblock Natasha 2: Faster non-convex optimization than {SGD}.
\newblock In \emph{Advances in Neural Information Processing Systems}, pages
  2680--2691, 2018.

\bibitem[Allen-Zhu and Li(2017)]{allen2017neon2}
Zeyuan Allen-Zhu and Yuanzhi Li.
\newblock Neon2: Finding local minima via first-order oracles.
\newblock \emph{arXiv preprint arXiv:1711.06673}, 2017.

\bibitem[Anandkumar and Ge(2016)]{anandkumar2016efficient}
Animashree Anandkumar and Rong Ge.
\newblock Efficient approaches for escaping higher-order saddle points in
  non-convex optimization.
\newblock In \emph{Conference on Learning Theory}, pages 81--102, 2016.

\bibitem[Bandeira et~al.(2016)Bandeira, Boumal, and
  Voroninski]{bandeira2016low}
Afonso~S Bandeira, Nicolas Boumal, and Vladislav Voroninski.
\newblock On the low-rank approach for semidefinite programs arising in
  synchronization and community detection.
\newblock In \emph{Conference on Learning Theory}, pages 361--382, 2016.

\bibitem[Bhojanapalli et~al.(2016)Bhojanapalli, Neyshabur, and
  Srebro]{bhojanapalli2016global}
Srinadh Bhojanapalli, Behnam Neyshabur, and Nati Srebro.
\newblock Global optimality of local search for low rank matrix recovery.
\newblock In \emph{Advances in Neural Information Processing Systems}, pages
  3873--3881, 2016.

\bibitem[Boumal et~al.(2016)Boumal, Voroninski, and Bandeira]{boumal2016non}
Nicolas Boumal, Vlad Voroninski, and Afonso Bandeira.
\newblock The non-convex {B}urer-{M}onteiro approach works on smooth
  semidefinite programs.
\newblock In \emph{Advances in Neural Information Processing Systems}, pages
  2757--2765, 2016.

\bibitem[Bovier et~al.(2004)Bovier, Eckhoff, Gayrard, and
  Klein]{bovier2004metastability}
Anton Bovier, Michael Eckhoff, V{\'e}ronique Gayrard, and Markus Klein.
\newblock Metastability in reversible diffusion processes {I}: Sharp
  asymptotics for capacities and exit times.
\newblock \emph{Journal of the European Mathematical Society}, 6\penalty0
  (4):\penalty0 399--424, 2004.

\bibitem[Carmon and Duchi(2016)]{carmon2016gradient}
Yair Carmon and John~C Duchi.
\newblock Gradient descent efficiently finds the cubic-regularized non-convex
  {N}ewton step.
\newblock \emph{arXiv preprint arXiv:1612.00547}, 2016.

\bibitem[Carmon et~al.(2016)Carmon, Duchi, Hinder, and
  Sidford]{carmon2016accelerated}
Yair Carmon, John~C Duchi, Oliver Hinder, and Aaron Sidford.
\newblock Accelerated methods for non-convex optimization.
\newblock \emph{arXiv preprint arXiv:1611.00756}, 2016.

\bibitem[Carmon et~al.(2017{\natexlab{a}})Carmon, Duchi, Hinder, and
  Sidford]{carmon2017lower1}
Yair Carmon, John~C Duchi, Oliver Hinder, and Aaron Sidford.
\newblock Lower bounds for finding stationary points {I}.
\newblock \emph{arXiv preprint arXiv:1710.11606}, 2017{\natexlab{a}}.

\bibitem[Carmon et~al.(2017{\natexlab{b}})Carmon, Duchi, Hinder, and
  Sidford]{carmon2017lower2}
Yair Carmon, John~C Duchi, Oliver Hinder, and Aaron Sidford.
\newblock Lower bounds for finding stationary points {II}: First-order methods.
\newblock \emph{arXiv preprint arXiv:1711.00841}, 2017{\natexlab{b}}.

\bibitem[Cauchy(1847)]{cauchy1847}
Louis~Augustin Cauchy.
\newblock M\'ethode g\'en\'erale pour la r\'esolution des syst\'emes
  d'\'equations simultanees.
\newblock \emph{C. R. Acad. Sci. Paris}, 1847.

\bibitem[Choromanska et~al.(2014)Choromanska, Henaff, Mathieu, Arous, and
  LeCun]{choromanska2014loss}
Anna Choromanska, Mikael Henaff, Michael Mathieu, G{\'e}rard~Ben Arous, and
  Yann LeCun.
\newblock The loss surface of multilayer networks.
\newblock \emph{arXiv:1412.0233}, 2014.

\bibitem[Curtis et~al.(2014)Curtis, Robinson, and Samadi]{curtis2014trust}
Frank~E Curtis, Daniel~P Robinson, and Mohammadreza Samadi.
\newblock A trust region algorithm with a worst-case iteration complexity of
  $o(\epsilon^{-3/2})$ for nonconvex optimization.
\newblock \emph{Mathematical Programming}, pages 1--32, 2014.

\bibitem[Daneshmand et~al.(2018)Daneshmand, Kohler, Lucchi, and
  Hofmann]{daneshmand2018escaping}
Hadi Daneshmand, Jonas Kohler, Aurelien Lucchi, and Thomas Hofmann.
\newblock Escaping saddles with stochastic gradients.
\newblock \emph{arXiv preprint arXiv:1803.05999}, 2018.

\bibitem[Du et~al.(2017)Du, Jin, Lee, Jordan, Singh, and
  Poczos]{du2017gradient}
Simon~S Du, Chi Jin, Jason~D Lee, Michael~I Jordan, Aarti Singh, and Barnabas
  Poczos.
\newblock Gradient descent can take exponential time to escape saddle points.
\newblock In \emph{Advances in Neural Information Processing Systems}, pages
  1067--1077, 2017.

\bibitem[Fang et~al.(2018)Fang, Li, Lin, and Zhang]{fang2018spider}
Cong Fang, Chris~Junchi Li, Zhouchen Lin, and Tong Zhang.
\newblock Spider: Near-optimal non-convex optimization via stochastic
  path-integrated differential estimator.
\newblock In \emph{Advances in Neural Information Processing Systems}, pages
  687--697, 2018.

\bibitem[Fang et~al.(2019)Fang, Lin, and Zhang]{fang2019}
Cong Fang, Zhouchen Lin, and Tong Zhang.
\newblock Sharp analysis for nonconvex {SGD} escaping from saddle points.
\newblock \emph{arXiv preprint arXiv:1902.00247}, 2019.

\bibitem[Ge et~al.(2015)Ge, Huang, Jin, and Yuan]{ge2015escaping}
Rong Ge, Furong Huang, Chi Jin, and Yang Yuan.
\newblock Escaping from saddle points---online stochastic gradient for tensor
  decomposition.
\newblock In \emph{Conference on Computational Learning Theory (COLT)}, 2015.

\bibitem[Ge et~al.(2016)Ge, Lee, and Ma]{ge2016matrix}
Rong Ge, Jason~D Lee, and Tengyu Ma.
\newblock Matrix completion has no spurious local minimum.
\newblock In \emph{Advances in Neural Information Processing Systems}, pages
  2973--2981, 2016.

\bibitem[Ge et~al.(2017)Ge, Jin, and Zheng]{ge2017no}
Rong Ge, Chi Jin, and Yi~Zheng.
\newblock No spurious local minima in nonconvex low rank problems: A unified
  geometric analysis.
\newblock \emph{arXiv preprint arXiv:1704.00708}, 2017.

\bibitem[Ghadimi and Lan(2013)]{ghadimi2013stochastic}
Saeed Ghadimi and Guanghui Lan.
\newblock Stochastic first-and zeroth-order methods for nonconvex stochastic
  programming.
\newblock \emph{SIAM Journal on Optimization}, 23\penalty0 (4):\penalty0
  2341--2368, 2013.

\bibitem[Jin et~al.(2017{\natexlab{a}})Jin, Ge, Netrapalli, Kakade, and
  Jordan]{jin2017escape}
Chi Jin, Rong Ge, Praneeth Netrapalli, Sham~M Kakade, and Michael~I Jordan.
\newblock How to escape saddle points efficiently.
\newblock In \emph{International Conference on Machine Learning (ICML)},
  2017{\natexlab{a}}.

\bibitem[Jin et~al.(2017{\natexlab{b}})Jin, Netrapalli, and
  Jordan]{jin2017accelerated}
Chi Jin, Praneeth Netrapalli, and Michael~I Jordan.
\newblock Accelerated gradient descent escapes saddle points faster than
  gradient descent.
\newblock \emph{arXiv preprint arXiv:1711.10456}, 2017{\natexlab{b}}.

\bibitem[Jin et~al.(2019)Jin, Netrapalli, Ge, Kakade, and Jordan]{jinshortnote}
Chi Jin, Praneeth Netrapalli, Rong Ge, Sham~M. Kakade, and Michael~I. Jordan.
\newblock A short note on concentration inequalities for random vectors with
  subgaussian norm.
\newblock \emph{arXiv preprint arXiv:1902.03736}, 2019.

\bibitem[Lee et~al.(2016)Lee, Simchowitz, Jordan, and Recht]{lee2016gradient}
Jason~D Lee, Max Simchowitz, Michael~I Jordan, and Benjamin Recht.
\newblock Gradient descent only converges to minimizers.
\newblock In \emph{Conference on Learning Theory}, pages 1246--1257, 2016.

\bibitem[Lei et~al.(2018)Lei, Ju, Chen, and Jordan]{lei2018scsg}
Lihua Lei, Cheng Ju, Jianbo Chen, and Michael~I. Jordan.
\newblock Finding approximate local minima faster than gradient descent.
\newblock In \emph{Advances in Neural Information Processing (NIPS) 31}, Red
  Hook, NY, 2018. Curran Associates.

\bibitem[Levy(2016)]{levy2016power}
Kfir~Y Levy.
\newblock The power of normalization: Faster evasion of saddle points.
\newblock \emph{arXiv preprint arXiv:1611.04831}, 2016.

\bibitem[Mei et~al.(2017)Mei, Misiakiewicz, Montanari, and
  Oliveira]{mei2017solving}
Song Mei, Theodor Misiakiewicz, Andrea Montanari, and Roberto~I Oliveira.
\newblock Solving {SDP}s for synchronization and maxcut problems via the
  {G}rothendieck inequality.
\newblock In \emph{Conference on Learning Theory (COLT)}, pages 1476--1515,
  2017.

\bibitem[Nemirovskii and Yudin(1983)]{nemirovskii1983problem}
Arkadii Nemirovskii and David~Borisovich Yudin.
\newblock \emph{Problem Complexity and Method Efficiency in Pptimization}.
\newblock Wiley, 1983.

\bibitem[Nesterov(1983)]{nesterov1983method}
Yurii Nesterov.
\newblock A method of solving a convex programming problem with convergence
  rate o (1/k2).
\newblock \emph{Soviet Mathematics Doklady}, 27:\penalty0 372--376, 1983.

\bibitem[Nesterov(1998)]{nesterov1998introductory}
Yurii Nesterov.
\newblock \emph{Introductory Lectures on Convex Programming Volume {I}: Basic
  Course}.
\newblock Springer, 1998.

\bibitem[Nesterov(2000)]{nesterov2000squared}
Yurii Nesterov.
\newblock Squared functional systems and optimization problems.
\newblock In \emph{High performance optimization}, pages 405--440. Springer,
  2000.

\bibitem[Nesterov and Polyak(2006)]{nesterov2006cubic}
Yurii Nesterov and Boris~T Polyak.
\newblock Cubic regularization of {N}ewton method and its global performance.
\newblock \emph{Mathematical Programming}, 108\penalty0 (1):\penalty0 177--205,
  2006.

\bibitem[Polyak(1963)]{polyak1963gradient}
Boris~T Polyak.
\newblock Gradient methods for the minimisation of functionals.
\newblock \emph{USSR Computational Mathematics and Mathematical Physics},
  3\penalty0 (4):\penalty0 864--878, 1963.

\bibitem[Recht et~al.(2011)Recht, Re, Wright, and Niu]{recht2011hogwild}
Benjamin Recht, Christopher Re, Stephen Wright, and Feng Niu.
\newblock Hogwild: A lock-free approach to parallelizing stochastic gradient
  descent.
\newblock In \emph{Advances in neural information processing systems}, pages
  693--701, 2011.

\bibitem[Reddi et~al.(2017)Reddi, Zaheer, Sra, Poczos, Bach, Salakhutdinov, and
  Smola]{reddi2017generic}
Sashank~J Reddi, Manzil Zaheer, Suvrit Sra, Barnabas Poczos, Francis Bach,
  Ruslan Salakhutdinov, and Alexander~J Smola.
\newblock A generic approach for escaping saddle points.
\newblock \emph{arXiv preprint arXiv:1709.01434}, 2017.

\bibitem[Robbins and Monro(1951)]{robbins1951stochastic}
Herbert Robbins and Sutton Monro.
\newblock A stochastic approximation method.
\newblock \emph{Annals of Mathematical Statistics}, pages 400--407, 1951.

\bibitem[Roberts et~al.(1996)Roberts, Tweedie, et~al.]{roberts1996exponential}
Gareth~O Roberts, Richard~L Tweedie, et~al.
\newblock Exponential convergence of {L}angevin distributions and their
  discrete approximations.
\newblock \emph{Bernoulli}, 2\penalty0 (4):\penalty0 341--363, 1996.

\bibitem[Smith et~al.(2018)Smith, Forte, Ma, Takac, Jordan, and
  Jaggi]{SmithEtAL2018}
Virginia Smith, Simone Forte, Chenxin Ma, Martin Takac, Michael~I. Jordan, and
  Martin Jaggi.
\newblock Cocoa: A general framework for communication-efficient distributed
  optimization.
\newblock \emph{Journal of Machine Learning Research}, 18:\penalty0 1--49,
  2018.

\bibitem[Sun et~al.(2016{\natexlab{a}})Sun, Qu, and Wright]{sun2016complete}
Ju~Sun, Qing Qu, and John Wright.
\newblock Complete dictionary recovery over the sphere {I}: Overview and the
  geometric picture.
\newblock \emph{IEEE Transactions on Information Theory}, 2016{\natexlab{a}}.

\bibitem[Sun et~al.(2016{\natexlab{b}})Sun, Qu, and Wright]{sun2016geometric}
Ju~Sun, Qing Qu, and John Wright.
\newblock A geometric analysis of phase retrieval.
\newblock In \emph{Information Theory (ISIT), 2016 IEEE International Symposium
  on}, pages 2379--2383. IEEE, 2016{\natexlab{b}}.

\bibitem[Tripuraneni et~al.(2018)Tripuraneni, Stern, Jin, Regier, and
  Jordan]{tripuraneni2018stochastic}
Nilesh Tripuraneni, Mitchell Stern, Chi Jin, Jeffrey Regier, and Michael~I
  Jordan.
\newblock Stochastic cubic regularization for fast nonconvex optimization.
\newblock In \emph{Advances in Neural Information Processing Systems}, pages
  2904--2913, 2018.

\bibitem[Xu et~al.(2018)Xu, Rong, and Yang]{xu2018first}
Yi~Xu, Jing Rong, and Tianbao Yang.
\newblock First-order stochastic algorithms for escaping from saddle points in
  almost linear time.
\newblock In \emph{Advances in Neural Information Processing Systems}, pages
  5535--5545, 2018.

\bibitem[Zhang et~al.(2012)Zhang, Wainwright, and
  Duchi]{zhang2012communication}
Yuchen Zhang, Martin~J Wainwright, and John~C Duchi.
\newblock Communication-efficient algorithms for statistical optimization.
\newblock In \emph{Advances in Neural Information Processing Systems}, pages
  1502--1510, 2012.

\bibitem[Zhang et~al.(2017)Zhang, Liang, and Charikar]{zhang2017hitting}
Yuchen Zhang, Percy Liang, and Moses Charikar.
\newblock A hitting time analysis of stochastic gradient {L}angevin dynamics.
\newblock \emph{arXiv preprint arXiv:1702.05575}, 2017.

\bibitem[Zhou and Gu(2019)]{zhou2019stochastic}
Dongruo Zhou and Quanquan Gu.
\newblock Stochastic recursive variance-reduced cubic regularization methods.
\newblock \emph{arXiv preprint arXiv:1901.11518}, 2019.

\bibitem[Zhou et~al.(2018)Zhou, Xu, and Gu]{zhou2018finding}
Dongruo Zhou, Pan Xu, and Quanquan Gu.
\newblock Finding local minima via stochastic nested variance reduction.
\newblock \emph{arXiv preprint arXiv:1806.08782}, 2018.

\end{thebibliography}

\newpage
\appendix

\renewcommand{\arraystretch}{1.5}
\begin{table*}[t]
    \centering
    \begin{tabular}
    {|>{\centering\arraybackslash}m{2.6in} |>{\centering\arraybackslash}m{1.2in}|>{\centering\arraybackslash}m{0.9in}|}
        \hline
        \textbf{Algorithm} & \textbf{Iterations} & \textbf{Simplicity} \\       
        \hline
        Noisy GD \citep{ge2015escaping}
        &  $d^{4}\text{poly}(\epsilon^{-1})$  & \multirow{4}{0.9in}{\centering single-loop} \\
        \hhline{|--~|}
        Normalized GD \citep{levy2016power}
        & $\mathcal{O}(d^3\cdot\poly(\epsilon^{-1}))$ & \\
        \hhline{|--~|}
        \cellcolor{light-gray} \textbf{PGD (conference verison of this work) \citep{jin2017escape}} & \cellcolor{light-gray} $\mathcal{\tilde{O}}(\epsilon^{-2})$ &  \\
        \hhline{|--~|}
        ${}^{\dagger}$Perturbed AGD \citep{jin2017accelerated}
        &  $ \mathcal{\tilde{O}}(\epsilon^{-1.75})$ & \\
        \hhline{|===|}      
        FastCubic \citep{agarwal2017finding}
        &  $\mathcal{\tilde{O}}(\epsilon^{-1.75})$  & \multirow{3}{0.9in}{\centering double-loop} \\
        \hhline{|--~|}
        \citet{carmon2016accelerated}
        &  $ \mathcal{\tilde{O}}(\epsilon^{-1.75})$  & \\
        \hhline{|--~|}
        \citet{carmon2016gradient}
        &  $ \mathcal{\tilde{O}}(\epsilon^{-2})$  & \\
        \hline
    \end{tabular}
    \caption{A summary of related work on first-order algorithms to find second-order stationary points in \emph{non-stochastic} settings. This table only highlights the dependencies on $d$ and $\epsilon$.${}^\dagger$ denotes followup work on the conference version of this paper \citep{jin2017escape}.} 
    \label{tab:comparison_nonstochastic}
\end{table*}

\renewcommand{\arraystretch}{1.5}
\begin{table*}[h!]
    \centering
    \begin{tabular}
    {|>{\centering\arraybackslash}m{2.6in} |>{\centering\arraybackslash}m{1.0in} |>{\centering\arraybackslash}m{1.0in}|>{\centering\arraybackslash}m{0.9in}|}
        \hline
        \textbf{Algorithm} & \textbf{Iterations} (\emph{with} Assumption \ref{assump:SG_Lip})  & \textbf{Iterations} (\emph{no} Assumption \ref{assump:SG_Lip}) & \textbf{Simplicity} \\       
        \hline
        Noisy GD \citep{ge2015escaping}
        &  $d^{4}\text{poly}(\epsilon^{-1})$  & $d^{4}\text{poly}(\epsilon^{-1})$ & \multirow{4}{0.9in}{\centering single-loop} \\
        \hhline{|---~|}
        CNC-SGD \citep{daneshmand2018escaping}
        & $ \mathcal{\tilde{O}}(d^{4}\epsilon^{-5})$ & $ \mathcal{\tilde{O}}(d^{4}\epsilon^{-5})$ & \\
        \hhline{|---~|}
        \cellcolor{light-gray} \textbf{PSGD (this work)} & \cellcolor{light-gray} $\mathcal{\tilde{O}}(\epsilon^{-4})$ & \cellcolor{light-gray} $\mathcal{\tilde{O}}(d\epsilon^{-4})$ &  \\
        \hhline{|---~|}
        ${}^{*}$SGD with averaging \citep{fang2019}
        &  $ \mathcal{\tilde{O}}(\epsilon^{-3.5})$ & $\times$ & \\
        \hhline{|====|}      
        Natasha 2 \citep{allen2018natasha}
        &  $\mathcal{\tilde{O}}(\epsilon^{-3.5})$ & $\times$ & \multirow{4}{0.9in}{\centering double-loop} \\
        \hhline{|---~|}
        Stochastic Cubic \citep{tripuraneni2018stochastic}
        &  $ \mathcal{\tilde{O}}(\epsilon^{-3.5})$ & $\times$ & \\
        \hhline{|---~|}
        SPIDER \citep{fang2018spider}
        &  $ \mathcal{\tilde{O}}(\epsilon^{-3})$ & $\times$ & \\
        \hhline{|---~|}
        SRVRC \citep{zhou2019stochastic}
        &  $ \mathcal{\tilde{O}}(\epsilon^{-3})$ & $\times$ & \\
        \hline
    \end{tabular}
    \caption{A summary of related work on first-order algorithms to find second-order stationary points in the \emph{stochastic} setting. This table only highlights the dependencies on $d$ and $\epsilon$. ${}^*$ denotes independent work.}
    \label{tab:comparison_stochastic}
\end{table*}

\section{Tables of Related Work}

\label{app:table}

In Table \ref{tab:comparison_nonstochastic} and Table \ref{tab:comparison_stochastic}, we present a detailed comparison of our results with other related work in both non-stochastic and stochastic settings. See Section \ref{sec:related} for the full text descriptions. We note that our algorithms are simple variants of standard GD and SGD, which are the simplest among all the algorithms listed in the table.


\section{Proofs for Stochastic Setting}\label{app:proof}

In this section, we provide proofs for our main results---Theorem \ref{thm:main_SGD} and Theorem \ref{thm:main_SGD_minibatch}. Theorem \ref{thm:main_GD} can be proved as a special case of Theorem \ref{thm:main_SGD} by taking $\sigma=0$.

\subsection{Notation}
Recall the update equation of Algorithm \ref{algo:PSGD} is $\x_{t+1} \leftarrow \x_t - \eta (\g (\x_t;\theta_t) + \xi_t)$, where $\xi_t  \sim \cN(\zero, (r^2/d) \I)$. Throughout this section, we denote $\zeta_t \defeq  \g(\x_t; \theta_t) - \grad f(\x_t) $, as the noise part within the stochastic gradient.
For simplicity, we also denote $\tilde{\zeta}_t \defeq \zeta_t + \xi_t$, which is the summation of noise from the stochastic gradient and the injected perturbation, and $\tilde{\sigma}^2 \defeq \sigma^2 + r^2 $.
Then the update equation can be rewritten as $\x_{t+1} \leftarrow \x_t - \eta (\grad f(\x_t) + \tilde{\zeta}_t)$.
We also denote $\F_t = \sigma(\zeta_0, \xi_0, \ldots, \zeta_t, \xi_t)$ as the corresponding filtration up to time step $t$. We choose parameters in Algorithm \ref{algo:PSGD} as follows:
\begin{equation}\label{eq:para_proof}
\eta = \frac{1}{ \logt^9  \cdot \ell \Ns}, \qquad
r =  \logt \cdot \epsilon \sqrt{\Ns}, \qquad \utime \defeq \frac{\logt}{\eta \sqrt{\rho\epsilon}}, 
\qquad \ufun \defeq \frac{1}{\logt^5}\sqrt{\frac{\epsilon^3}{\rho}}, 
\qquad \uspace \defeq \frac{2}{\logt^2}\sqrt{\frac{\epsilon}{\rho}},
\end{equation}
where $\Ns$ and the log factor $\logt$ are defined as:
\begin{equation*}
\Ns = 1+ \min\left\{\frac{\sigma^2}{\epsilon^2} + \frac{\tilde{\ell}^2}{\ell\sqrt{\rho\epsilon}}, ~\frac{\sigma^2d}{\epsilon^2}\right\},
\qquad \logt = \mu\cdot \log \left( \frac{ d \ell \Delta_f \Ns}{\rho\epsilon\delta}\right).
\end{equation*}
Here, $\mu$ is a sufficiently large absolute constant to be determined later. Note also that throughout this section $c$ denotes an absolute constant that does not depend on the choice of $\mu$.
Its value may change from line to line.

\subsection{Descent lemma}

We first prove that the change in the function value can be always decomposed into the decrease due to the magnitudes of gradients and the possible increase due to randomness in both the stochastic gradients and the perturbations.
\begin{lemma}[Descent Lemma]\label{lem:descent}
 Under Assumption \ref{assump:GD}, \ref{assump:SG}, there exists an absolute constant $c$ such that, for any fixed $t, t_0, \logt>0$, if $\eta \le 1/\ell$, then with at least $1-4e^{-\logt}$ probability, the sequence PSGD$(\eta, r)$ (Algorithm \ref{algo:PSGD}) satisfies:
(denote $\tilde{\sigma}^2 = \sigma^2+r^2$)
\begin{equation*}
f(\x_{t_0+t}) - f(\x_{t_0}) \le -\frac{\eta}{8}\sum_{i=0}^{t-1} \norm{\grad f(\x_{t_{0}+i})}^2 + c\cdot\eta \tilde{\sigma}^2 (\eta \ell t + \logt).
\end{equation*}
\end{lemma}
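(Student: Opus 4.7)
The plan is to combine a standard smoothness-based descent inequality with two concentration bounds on the stochastic error terms. First I would apply the $\ell$-gradient-Lipschitz quadratic upper bound to the update $\x_{t+1} = \x_t - \eta(\grad f(\x_t) + \tilde\zeta_t)$, which (using $\eta \le 1/\ell$) yields the per-step bound
$$f(\x_{t+1}) - f(\x_t) \le -\tfrac{\eta}{2}\norm{\grad f(\x_t)}^2 - \eta(1-\eta\ell)\la\grad f(\x_t),\tilde\zeta_t\ra + \eta^2\ell\norm{\tilde\zeta_t}^2.$$
Summing from index $t_0$ to $t_0+t-1$ gives the decomposition
$$f(\x_{t_0+t}) - f(\x_{t_0}) \le -\tfrac{\eta}{2}V_t - \eta(1-\eta\ell)M_t + \eta^2\ell\,Q_t,$$
with $V_t := \sum_i \norm{\grad f(\x_{t_0+i})}^2$, $M_t := \sum_i \la\grad f(\x_{t_0+i}),\tilde\zeta_{t_0+i}\ra$, and $Q_t := \sum_i \norm{\tilde\zeta_{t_0+i}}^2$.

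The second step, and the main technical hurdle, is to concentrate the martingale cross-term $M_t$. Each increment is conditionally mean-zero since $\tilde\zeta_{t_0+i}$ is mean-zero given $\F_{t_0+i-1}$, and it is conditionally $(\tilde\sigma^2\norm{\grad f(\x_{t_0+i})}^2)$-sub-Gaussian because $\tilde\zeta_{t_0+i}$ is norm-sub-Gaussian with parameter $\tilde\sigma^2 = \sigma^2 + r^2$ (from Assumption \ref{assump:SG} plus the Gaussian perturbation). The difficulty is that this variance proxy is itself random and depends on the whole past trajectory, so an Azuma bound applied to a crude envelope would be wasteful and would destroy the negative $-\eta V_t/2$ term. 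I would instead invoke a Freedman-type self-normalized inequality (equivalently, a dyadic peeling over the predictable quadratic variation) to obtain, for any $\beta > 0$,
$$\Pr\bigl(|M_t| \ge \beta V_t + \tilde\sigma^2 \logt/(2\beta)\bigr) \le 2e^{-\logt}.$$
Choosing $\beta = 1/16$ gives $|M_t| \le V_t/16 + 8\tilde\sigma^2\logt$ with probability $\ge 1 - 2e^{-\logt}$. For the third term $Q_t$, each $\norm{\tilde\zeta_i}^2$ is conditionally sub-exponential with mean and parameter both $O(\tilde\sigma^2)$ (squaring the norm-sub-Gaussian tail), so a standard Bernstein inequality for conditionally sub-exponential sums yields $Q_t \le c(t\tilde\sigma^2 + \tilde\sigma^2\logt)$ with probability $\ge 1 - 2e^{-\logt}$.

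Finally I would combine on the intersection of the two concentration events (union-bound failure $\le 4e^{-\logt}$). Using $|\eta(1-\eta\ell)M_t| \le \eta|M_t|$ since $\eta \le 1/\ell$ and plugging in,
$$f(\x_{t_0+t}) - f(\x_{t_0}) \le -\tfrac{\eta V_t}{2} + \eta\bigl(\tfrac{V_t}{16} + 8\tilde\sigma^2\logt\bigr) + c\eta^2\ell\tilde\sigma^2(t+\logt) \le -\tfrac{\eta V_t}{8} + c'\eta\tilde\sigma^2(\eta\ell t + \logt),$$
where the last inequality absorbs $\eta^2\ell\logt \le \eta\logt$ into the $\eta\tilde\sigma^2\logt$ contribution. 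This matches the claimed bound (after relabeling $c'$ as $c$). The principal obstacle is the cross-term step: the slack must scale with the \emph{random} gradient-energy $V_t$ rather than any deterministic upper bound, which is precisely what lets the negative $-\eta V_t/8$ survive on the right-hand side and drives the rest of the paper's analysis.
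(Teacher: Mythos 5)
Your proposal is correct and follows essentially the same route as the paper: a smoothness-based per-step descent inequality, a self-normalized martingale bound on the cross-term whose slack scales with the random gradient energy $V_t$, and a Bernstein sub-exponential bound on $\sum_i\norm{\tilde\zeta_i}^2$. The one place where you reach for slightly heavier machinery than is needed is the cross-term: you invoke ``a Freedman-type self-normalized inequality (equivalently, a dyadic peeling over the predictable quadratic variation),'' but no peeling is required here, because the conditional sub-Gaussian parameter of each increment is exactly $\tilde\sigma\norm{\grad f(\x_{t_0+i})}$ with the \emph{constant} factor $\tilde\sigma$ known in advance. A single fixed $\lambda$ in the exponential martingale already gives, with probability $1-e^{-\logt}$,
\[
\sum_i \langle \u_i, \X_i\rangle \;\le\; c\,\lambda\sum_i \norm{\u_i}^2\sigma_i^2 \;+\; \tfrac{1}{\lambda}\logt,
\]
which is precisely the paper's Lemma~\ref{lem:concen_inner}; choosing $\lambda=\Theta(\tilde\sigma^{-2})$ then reproduces your $\beta V_t + O(\tilde\sigma^2\logt/\beta)$ bound without any union over variance levels. (The genuinely self-normalized version with peeling does appear later in the paper, in Lemma~\ref{lem:concen_sum_randB}, but only because there the $\sigma_i$ themselves are random.) Your remaining steps --- the Young's-inequality absorption of the $\frac{\eta^2\ell}{2}\norm{\grad f}^2$ term using $\eta\le 1/\ell$, the split of $Q_t$, and the final constant-chasing to land on $-\tfrac{\eta}{8}V_t$ --- all check out and mirror the paper's argument.
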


\begin{proof}
Since Algorithm \ref{algo:PSGD} is Markovian, the operations in each iteration do not depend on time step $t$. Thus, it suffices to prove Lemma \ref{lem:descent} for special case $t_0 = 0$. 
Recall the update equation:
\begin{equation*}
\x_{t+1} \leftarrow \x_t - \eta ( \grad f(\x_t) + \tilde{\zeta}_t),
\end{equation*}
where $\tilde{\zeta}_t = \zeta_t + \xi_t$. By assumption, we know $\zeta_t | \F_{t-1}$ is zero-mean $\nSG(\sigma)$. Also $\xi_t | \F_{t-1}$ comes from $\cN(\zero, (r^2/d) \I)$, and thus by Lemma \ref{lem:examplesGnorm} it is zero-mean $\nSG(c \cdot r)$ for some absolute constant $c$. By a Taylor expansion and the assumptions of an $\ell$-gradient Lipschitz and $\eta \le 1/\ell$, we have:
\begin{align*}
f(\x_{t+1}) 
\le & f(\x_t) +  \la \grad f(\x_t), \x_{t+1} - \x_t\ra + \frac{\ell}{2}\norm{\x_{t+1} -\x_t}^2\\
\le & f(\x_t) - \eta  \la \grad f(\x_t), \grad f(\x_t) + \tilde{\zeta}_t \ra + \frac{\eta^2\ell}{2} \left[\frac{3}{2}\norm{\grad f(\x_t)}^2 + 3\norm{\tilde{\zeta}_t}^2\right] \\
\le & f(\x_t) - \frac{\eta}{4}\norm{\grad f(\x_t)}^2 
-\eta \la \grad f(\x_t), \tilde{\zeta}_t \ra +\frac{3}{2}\eta^2\ell \norm{\tilde{\zeta}_t}^2.
\end{align*}
Summing over this inequality, we have the following:
\begin{equation}
f(\x_{t}) - f(\x_0)
\le  -\frac{\eta}{4}\sum_{i=0}^{t-1} \norm{\grad f(\x_i)}^2 - \eta \sum_{i=0}^{t-1}  \la \grad f(\x_i), \tilde{\zeta}_i \ra
+ \frac{3}{2} \eta^2\ell \sum_{i=0}^{t-1}  \norm{\tilde{\zeta}_i}^2. \label{eq:decent_decomp}
\end{equation}
For the second term on the right-hand side, applying Lemma \ref{lem:concen_inner}, there exists an absolute constant $c$, such that, with probability $1-2e^{-\logt}$:
\begin{equation*}
- \eta \sum_{i=0}^{t-1}  \la \grad f(\x_i), \tilde{\zeta}_i\ra \le \frac{\eta}{8}\sum_{i=0}^{t-1} \norm{\grad f(\x_i)}^2
+ c \eta\tilde{\sigma}^2 \logt.
\end{equation*}
For the third term on the right-hand side of Eq.~\eqref{eq:decent_decomp}, applying Lemma \ref{lem:concen_square}, with probability $1-2e^{-\logt}$:
\begin{equation*}
\frac{3}{2} \eta^2\ell \sum_{i=0}^{t-1}  \norm{\tilde{\zeta}_i}^2
\le 3\eta^2\ell\sum_{i=0}^{t-1}  (\norm{\zeta_i}^2 + \norm{\xi_i}^2)
\le c \eta^2\ell \tilde{\sigma}^2 (t + \logt).
\end{equation*}
Substituting these inequalities into Eq.~\eqref{eq:decent_decomp}, and noting the fact $\eta \le 1/\ell$, we have with probability $1-4e^{-\logt}$:
\begin{equation*}
f(\x_{t}) - f(\x_0) \le -\frac{\eta}{8}\sum_{i=0}^{t-1} \norm{\grad f(\x_i)}^2
+ c \eta\tilde{\sigma}^2 (\eta \ell t + \logt).
\end{equation*}
This finishes the proof. 
\end{proof}

The descent lemma allows us to show the following ``Improve or Localize'' phenomenon for perturbed SGD. That is, with high probability over a small number of iterations, either the function value decreases significantly, or the iterates stay within a small local region.

\begin{lemma}[Improve or Localize]\label{lem:locality_SGD}
Under the same setting as in Lemma \ref{lem:descent}, with probability at least $1-8dt \cdot e^{-\logt}$, the sequence PSGD$(\eta, r)$ (Algorithm \ref{algo:PSGD}) satisfies:
\begin{equation*}
\forall \tau \le t: \norm{\x_{t_0+\tau} - \x_{t_0}}^2
\le c\eta t \cdot [ f(\x_{t_0}) - f(\x_{t_0 + \tau})  + \eta \tilde{\sigma}^2  (\eta\ell t + \logt)].
\end{equation*}
\end{lemma}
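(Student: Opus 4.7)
The plan is to follow the same template as the deterministic Improve-or-Localize result (Lemma \ref{lem:improveorlocalize_gd}), but now carry along the stochastic noise $\tilde{\zeta}_i = \zeta_i + \xi_i$ that appears in every PSGD step. First I would unroll the update equation to write
\[
\x_{t_0+\tau} - \x_{t_0} = -\eta \sum_{i=0}^{\tau-1}\grad f(\x_{t_0+i}) - \eta \sum_{i=0}^{\tau-1}\tilde{\zeta}_i,
\]
and apply $\norm{a+b}^2 \le 2\norm{a}^2 + 2\norm{b}^2$ to split $\norm{\x_{t_0+\tau}-\x_{t_0}}^2$ into a "drift" term driven by the deterministic gradients and a "noise" term. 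A Cauchy--Schwarz step bounds the drift term by $2\eta^2 \tau \sum_{i=0}^{\tau-1}\norm{\grad f(\x_{t_0+i})}^2$, exactly as in the non-stochastic proof.

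Next I would invoke the Descent Lemma (Lemma \ref{lem:descent}) on the window $[t_0,t_0+\tau]$ to convert the gradient sum into a function-value decrease, obtaining
\[
\sum_{i=0}^{\tau-1}\norm{\grad f(\x_{t_0+i})}^2 \le \tfrac{8}{\eta}\bigl[f(\x_{t_0}) - f(\x_{t_0+\tau})\bigr] + c\tilde{\sigma}^2(\eta\ell\tau + \logt)
\]
except on an event of probability $O(e^{-\logt})$. For the noise part, the partial sums $\sum_{i=0}^{\tau-1}\tilde{\zeta}_i$ form a vector-valued martingale whose increments are norm-sub-Gaussian with parameter $O(\tilde{\sigma})$ (using that $\zeta_i \mid \F_{i-1}$ is sub-Gaussian by Assumption \ref{assump:SG} and $\xi_i \sim \cN(0,(r^2/d)\I)$ is sub-Gaussian via Lemma \ref{lem:examplesGnorm}). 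A Hoeffding-type concentration for such martingales (of the same flavor as those used to prove Lemma \ref{lem:descent}) then yields $\norm{\sum_{i=0}^{\tau-1}\tilde{\zeta}_i}^2 \le c\tilde{\sigma}^2 \tau \logt$ with failure probability $O(d\cdot e^{-\logt})$.

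Plugging the two estimates into the split and using $\tau \le t$ with $\eta\ell \le 1$ gives
\[
\norm{\x_{t_0+\tau} - \x_{t_0}}^2 \le 16\eta\tau\bigl[f(\x_{t_0}) - f(\x_{t_0+\tau})\bigr] + c\eta^2 t \tilde{\sigma}^2(\eta\ell t + \logt),
\]
which is the desired bound after absorbing constants. The "for all $\tau \le t$" quantifier is handled by a union bound over $\tau = 1,\dots,t$, and combining the $O(e^{-\logt})$ failure from the descent step with the $O(d\cdot e^{-\logt})$ failure from the vector-martingale step yields the stated probability $1 - 8dt\cdot e^{-\logt}$.

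The main obstacle I expect is the third step: selecting the right vector-valued martingale concentration so that the tail is strong enough to survive the union bound over $\tau$ (without enlarging $\logt$), and so that the bound on $\norm{\sum_{i=0}^{\tau-1}\tilde{\zeta}_i}^2$ is cleanly of order $\tilde{\sigma}^2\tau\logt$ and hence absorbable into the $\eta^2 t\tilde{\sigma}^2(\eta\ell t + \logt)$ budget. The $d$ dependence in the failure probability is a natural cost of controlling the norm of a $d$-dimensional partial sum uniformly in $\tau$; once the right martingale inequality is in hand, the rest is algebraic bookkeeping.
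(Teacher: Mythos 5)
Your proposal matches the paper's proof essentially step for step: unroll the update, split the squared norm via $\norm{a+b}^2 \le 2\norm{a}^2 + 2\norm{b}^2$, bound the drift term by Cauchy--Schwarz plus the Descent Lemma, bound the martingale noise term by the norm-sub-Gaussian Hoeffding inequality (Lemma~\ref{lem:concen_sum}), and finish with a union bound over $\tau$. The only cosmetic difference is that the paper applies the Descent Lemma once over the full window of length $t$ and then uses monotonicity of the partial sums $\sum_{i<\tau}\norm{\grad f(\x_i)}^2 \le \sum_{i<t}\norm{\grad f(\x_i)}^2$, whereas you propose invoking it per-$\tau$; both work and cost the same in the union bound.
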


\begin{proof}
By a similar argument as in the proof of Lemma \ref{lem:descent}, it suffices to prove Lemma \ref{lem:locality_SGD} in the special case $t_0 = 0$.
According to Lemma \ref{lem:descent}, with probability $1-4e^{-\logt}$, for some absolute constant $c$:
\begin{equation*}
\sum_{i=0}^{t-1} \norm{\grad f(\x_i)}^2 \le  \frac{8}{\eta}[f(\x_0)-f(\x_{t})] + c\tilde{\sigma}^2 (\eta \ell t + \logt ).
\end{equation*}

Therefore, for any fixed $\tau \le t$, with probability $1-8d\cdot e^{-\logt}$,
\begin{align*}
\norm{\x_{\tau} - \x_0}^2 =& \eta^2\norm{\sum_{i=0}^{\tau-1} (\grad f(\x_i) + \tilde{\zeta}_i)}^2 \le 2\eta^2 \left[\norm{\sum_{i=0}^{\tau-1} \grad f(\x_i) }^2 + \norm{\sum_{i=0}^{\tau-1} \tilde{\zeta}_i}^2 \right] \\
\overset{(1)}{\le}& 2\eta^2 t\sum_{i=0}^{\tau-1} \norm{\grad f(\x_i) }^2 + c \eta^2 \tilde{\sigma}^2 t \logt
\le  2\eta^2 t \sum_{i=0}^{t-1} \norm{\grad f(\x_i) }^2 + c \eta^2 \tilde{\sigma}^2 t \logt \\
\le& c\eta t [ f(\x_0) - f(\x_t)  + \eta \tilde{\sigma}^2 (\eta\ell t + \logt)].
\end{align*}
where in step (1) we use the Cauchy-Schwarz inequality and Lemma \ref{lem:concen_sum}. Finally, applying a union bound for all $\tau \le t$, we finish the proof.
\end{proof}

\subsection{Escaping saddle points}
 Lemma \ref{lem:descent} shows that large gradients contribute to the fast decrease of the function value. In this subsection, we will show that starting in the vicinity of strict saddle points will also enable PSGD to decrease the function value rapidly. Concretely, this entire subsection will be devoted to proving the following lemma:
\begin{lemma}[Escaping Saddle Point]\label{lem:escape_saddle}
 Given Assumption \ref{assump:GD}, \ref{assump:SG}, there exists an absolute constant $c_{\max}$ such that, for any fixed $t_0>0, \logt> c_{\max} \log (\ell \sqrt{d/(\rho\epsilon)}) $, if $\eta, r, \ufun, \utime$ are chosen as in Eq.~\eqref{eq:para_proof}, and $\x_{t_0}$ satisfies $\norm{\grad f(\x_{t_0})} \le \epsilon$ and $\lambda_{\min}(\hess f(\x_{t_0})) \le -\sqrt{\rho\epsilon}$, then the sequence PSGD$(\eta, r)$ (Algorithm \ref{algo:PSGD}) satisfies:
\begin{align*}
&\Pr(f(\x_{t_0+\utime}) - f(\x_{t_0}) \le  0.1\ufun) \ge 1 - 4e^{-\logt}  \quad\quad \text{and}\\ 
&\Pr(f(\x_{t_0+\utime}) - f(\x_{t_0}) \le  -\ufun) \ge 1/3 - 5 d \utime^2 \cdot \log(\uspace\sqrt{d}/(\eta r)) e^{-\logt}.
\end{align*}
\end{lemma}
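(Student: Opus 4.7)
The plan is to prove the two claims separately: the high-probability upper bound follows directly from the Descent Lemma, while the constant-probability lower bound requires a coupling argument in the spirit of Lemma~\ref{lem:coupleseq_gd}, now adapted to the stochastic setting via the Improve-or-Localize lemma (Lemma~\ref{lem:locality_SGD}). For the upper bound $f(\x_{t_0+\utime}) - f(\x_{t_0}) \le 0.1\ufun$, I would apply Lemma~\ref{lem:descent} over the interval $[t_0, t_0+\utime]$, drop the non-positive sum of squared gradients, and bound the residual $c\eta\tilde{\sigma}^2(\eta\ell\utime+\logt)$ by substituting the parameter values from Eq.~\eqref{eq:para_proof}. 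A direct check using $\eta\ell\utime \le \logt$ and $\tilde{\sigma}^2 \lesssim r^2 \lesssim \logt^2\epsilon^2 \Ns$ shows the residual is at most $0.1\ufun$, while the $4e^{-\logt}$ failure probability is inherited from Lemma~\ref{lem:descent} directly.

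For the lower bound, let $\H = \hess f(\x_{t_0})$, let $\e_1$ be its minimum eigenvector, and write $\gamma = -\lambda_{\min}(\H) \ge \sqrt{\rho\epsilon}$. Consider two instances of PSGD starting at time $t_0$ with shared noise $\{\tilde{\zeta}_t\}_{t\ge t_0}$, whose initial states differ by $\dif{\x}_0 = \eta r_0 \e_1$ for some $r_0 \ge \omega$, with $\omega$ chosen so that the volume ratio $\omega\sqrt{d}/r$ is a small constant. Because the two sequences share the noise, their difference evolves deterministically:
\begin{equation*}
\dif{\x}_{t+1} = (\I - \eta\H)\dif{\x}_t - \eta \Delta_t \dif{\x}_t, \qquad \Delta_t = \int_0^1 \left[\hess f(\modify{\x}_t + \theta(\x_t - \modify{\x}_t)) - \H\right]\mathrm{d}\theta.
\end{equation*}
Assuming for contradiction that \emph{neither} sequence decreases by more than $\ufun$ over $\utime$ iterations, Lemma~\ref{lem:locality_SGD} (applied to both sequences via a union bound) localizes both trajectories to a ball of radius $\uspace$ around $\x_{t_0}$, yielding $\|\Delta_t\| \le \rho\uspace$ on a high-probability event. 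Decomposing $\dif{\x}_t = \p(t) - \q(t)$ with $\p(t) = (\I - \eta\H)^t \dif{\x}_0$, an induction mirroring that of Lemma~\ref{lem:coupleseq_gd}---using $2\eta\rho\uspace\utime \le 1/2$ from Eq.~\eqref{eq:para_proof}---yields $\|\q(t)\| \le \|\p(t)\|/2$, and hence $\|\dif{\x}_\utime\| \ge 2^{\logt-2}\eta r_0 \gg \uspace$, contradicting localization.

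This shows the ``stuck region'' $\cXs \subseteq \ball_{\x_{t_0}}(\eta r)$ of initial perturbations leading to no decrease has width at most $\eta\omega$ along $\e_1$, so the standard volume-ratio computation gives $\Pr(\x_{t_0+1} \in \cXs) \lesssim \omega\sqrt{d}/r$, which can be driven below $2/3$ by the parameter choice. Combining with the high-probability events from Lemma~\ref{lem:locality_SGD} applied to both sequences---the source of the $5 d\utime^2 \log(\uspace\sqrt{d}/(\eta r))\,e^{-\logt}$ bad-event probability---yields the claimed escape probability of at least $1/3$ minus a small term. The main obstacle is the careful bookkeeping needed to ensure that Lemma~\ref{lem:locality_SGD} and the associated noise-concentration estimates hold \emph{simultaneously} for both coupled sequences at every time step $\tau \le \utime$; the polylogarithmic slack built into the parameters of Eq.~\eqref{eq:para_proof} is designed precisely to absorb this union bound.
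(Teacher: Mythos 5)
Your proof of the first claim matches the paper: apply Lemma~\ref{lem:descent}, drop the nonpositive gradient sum, and check that the residual is below $0.1\ufun$ using the parameter choices. That part is fine.

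For the second claim, however, your coupling has a genuine gap that makes the argument break for PSGD. You couple two runs with \emph{different initial states} and \emph{shared noise}, and assert that ``because the two sequences share the noise, their difference evolves deterministically,'' writing $\dif{\x}_{t+1} = (\I-\eta\H)\dif{\x}_t - \eta\Delta_t\dif{\x}_t$. This is false once $\sigma>0$. Sharing $\theta_t$ does \emph{not} cancel the stochastic-gradient noise in the difference, because $\zeta_t = \g(\x_t;\theta_t)-\grad f(\x_t)$ and $\modify{\zeta}_t = \g(\modify{\x}_t;\theta_t)-\grad f(\modify{\x}_t)$ are evaluated at different iterates. The correct difference recursion has an extra stochastic term $-\eta\dif{\zeta}_t$ with $\dif{\zeta}_t = \zeta_t-\modify{\zeta}_t\neq\zero$, which accumulates over $\utime$ steps. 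Controlling it requires either Assumption~\ref{assump:SG_Lip} (to bound $\norm{\dif{\zeta}_t}$ by $\tilde{\ell}\norm{\dif{\x}_t}$ and hence absorb it alongside the $\Delta_t\dif{\x}_t$ term) or a generic variance bound when $\tilde{\ell}=\infty$, and this is precisely why $\Ns$ appears in $\eta, r, \utime$ in Eq.~\eqref{eq:para_proof}. The coupling-sequence argument from Lemma~\ref{lem:coupleseq_gd} is a GD argument: it applies only when $\sigma=0$.

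The paper instead couples two runs from the \emph{same} initial point $\x_{t_0}$, with shared $\theta_\tau$ and $\proj_{-1}\xi_\tau$, but \emph{mirrored} injected perturbation along $\e_1$: $\e_1\trans\xi_\tau = -\e_1\trans\modify{\xi}_\tau$ (Definition~\ref{def:IndCouple}). This produces the decomposition $\dif{\x}_t = -\qa(t)-\qb(t)-\qc(t)$ (Lemma~\ref{lem:escape_dynamic}), where $\qc(t)$ is a one-dimensional Gaussian random variable driven by the mirrored perturbations, not a deterministic initial offset. The escape argument is then: $\qc(\utime)$ is large with constant probability (Lemma~\ref{lem:escape_perturb_term}), $\qa(\utime)+\qb(\utime)$ is small with high probability (Lemma~\ref{lem:escape_other_term}, which explicitly handles $\dif{\zeta}_\tau$ via Lemma~\ref{lem:concen_sum} or Lemma~\ref{lem:concen_sum_randB} depending on whether Assumption~\ref{assump:SG_Lip} holds), so $\norm{\dif{\x}_\utime}>\uspace$ on the intersection, contradicting localization. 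Your volume-ratio ``thin stuck region'' computation is also not directly applicable: PSGD uses Gaussian perturbations injected at \emph{every} step rather than a single uniform-ball perturbation, so the ``stuck region'' is a random set depending on future noise rather than a fixed subset of a perturbation ball. The paper's coupling avoids this by turning the problem into a concentration statement about $\qc(\utime)$ instead of a measure-of-a-set statement. Your approach would recover Theorem~\ref{thm:main_GD} as a special case (set $\sigma=0$, whence $\dif{\zeta}_t\equiv\zero$), but it does not establish Lemma~\ref{lem:escape_saddle} in the stochastic setting.
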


Since Algorithm \ref{algo:PSGD} is Markovian, the operations in each iterations do not depend on time step $t$. Thus, it suffices to prove Lemma \ref{lem:escape_saddle} for special case $t_0 = 0$. 
To prove this lemma, we first need to introduce the concept of a coupling sequence.

\paragraph{Notation:} Throughout this subsection, we let $\H \defeq \hess f(\x_0)$, $\e_1$ be the minimum eigendirection of $\H$, and $\gamma \defeq \lambda_{\min}(\H)$. We also let $\proj_{-1}$ be the projection onto the subspace complement of $\e_1$.

\begin{definition}[Coupling Sequence] \label{def:IndCouple}
Consider sequences $\{\x_i\}$ and $\{\modify{\x}_i\}$ that are obtained as  separate runs of the PSGD (algorithm \ref{algo:PSGD}), both starting from $\x_0$. 
They are coupled if both sequences share the same randomness $\proj_{-1} \xi_\tau$ and $\theta_\tau$, while in $\e_1$ direction we have $\e_1\trans \xi_\tau = -\e_1\trans \modify{\xi}_\tau$.
\end{definition}

The first thing we can show is that if the function values of both sequences do not exhibit a sufficient decrease, then both sequences are localized in a small ball around $\x_0$ within $\utime$ iterations.

\begin{lemma}[Localization]{\label{lem:localization}}
Under the notation of Lemma \ref{lem:escape_dynamic}, we have:
\begin{align*}
\Pr(\min\{f(\x_{\utime}) - f(\x_0), &f(\modify{\x}_{\utime}) - f(\x_0)\} \le -\ufun, \text{~~or~~} \\
&\forall t \le \utime:  \max\{\norm{\x_t - \x_0}^2, \norm{\modify{\x}_t - \x_0}^2\} \le \uspace^2 ) \ge 1-16d\utime\cdot e^{-\logt}.
\end{align*}
\end{lemma}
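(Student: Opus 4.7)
The plan is to prove Lemma \ref{lem:localization} by the contrapositive: assuming that \emph{neither} sequence has its function value drop by $\ufun$ by time $\utime$, I will show that \emph{both} sequences stay within a ball of radius $\uspace$ around $\x_0$ for all $\tau \le \utime$. The core machinery is Lemma \ref{lem:locality_SGD} (Improve or Localize), applied independently to each of the two coupled trajectories.

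First, I would apply Lemma \ref{lem:locality_SGD} with $t_0=0$, $t=\utime$ to $\{\x_t\}$ and $\{\modify{\x}_t\}$ separately. A union bound gives, with probability at least $1-16d\utime \cdot e^{-\logt}$, that for every $\tau \le \utime$,
$$\max\{\norm{\x_\tau - \x_0}^2,\; \norm{\modify{\x}_\tau - \x_0}^2\} \le c\eta\utime\bigl[f(\x_0) - f(\cdot_\tau) + \eta\tilde{\sigma}^2(\eta\ell\utime + \logt)\bigr].$$
The subtlety is that the right-hand side involves $f$ at the \emph{intermediate} time $\tau$, whereas our assumption constrains only $f$ at the \emph{final} time $\utime$. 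So a second tool is needed to propagate a bound on $f(\x_\utime) - f(\x_0)$ backward to all $\tau \le \utime$.

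Second, I would apply Lemma \ref{lem:descent} on each interval $[\tau, \utime]$ for every $\tau \in \{0,\dots,\utime - 1\}$ and for both sequences, to upper bound how much the function value can \emph{rise} between $\tau$ and $\utime$:
$$f(\cdot_\utime) - f(\cdot_\tau) \le c\eta\tilde{\sigma}^2(\eta\ell\utime + \logt).$$
A union bound over $\tau$ and the two sequences costs $O(\utime e^{-\logt})$, which is absorbed into the $16d\utime e^{-\logt}$ budget since $d \ge 1$. On the intersection of all these good events, if $f(\x_\utime) - f(\x_0) > -\ufun$, then rearranging the descent bound gives $f(\x_0) - f(\x_\tau) \le \ufun + c\eta\tilde{\sigma}^2(\eta\ell\utime + \logt)$ for all $\tau$; plugging this back into the displacement bound yields $\norm{\x_\tau - \x_0}^2 \le c'\eta\utime\ufun$, which I claim is $\le \uspace^2$. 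The same reasoning applies verbatim to $\{\modify{\x}_t\}$. Thus either at least one sequence decreases by $\ufun$, or both are localized.

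The main technical obstacle is parameter bookkeeping: I need to verify, under the choices in Eq.~\eqref{eq:para_proof}, that both $c'\eta\utime\ufun \le \uspace^2$ and $c\eta\tilde{\sigma}^2(\eta\ell\utime + \logt) \le 0.1\ufun$ (say). Substituting $\eta = 1/(\iota^9 \ell \Ns)$, $\utime = \iota/(\eta\sqrt{\rho\epsilon})$, $\ufun = \iota^{-5}\sqrt{\epsilon^3/\rho}$, $\uspace = 2\iota^{-2}\sqrt{\epsilon/\rho}$, and using $r^2 = \iota^2 \epsilon^2 \Ns$ together with the definition of $\Ns$, each inequality holds with several factors of $\iota$ to spare, provided the constant $\mu$ hidden in $\logt$ is chosen large enough relative to the absolute constants $c, c'$ appearing in the argument. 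This parameter slack is the whole point of the polylogarithmic overhead in Eq.~\eqref{eq:para_proof}.
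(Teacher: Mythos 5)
Your proof is correct in its essentials and follows the same overall route as the paper, but it is longer than it needs to be and contains two small inaccuracies worth flagging.

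The paper's proof of Lemma~\ref{lem:localization} is a single line: apply Lemma~\ref{lem:locality_SGD} (Improve or Localize) to each of the two coupled sequences with $t_0 = 0$, $t = \utime$, and take a union bound, for total failure probability $2\cdot 8d\utime\cdot e^{-\logt} = 16d\utime\cdot e^{-\logt}$. The reason the paper needs no second ingredient is that the displacement bound actually established in the \emph{proof} of Lemma~\ref{lem:locality_SGD} reads
$\norm{\x_\tau - \x_0}^2 \le c\eta t \,[\,f(\x_0) - f(\x_t) + \eta\tilde\sigma^2(\eta\ell t + \logt)\,]$,
i.e.\ with $f$ at the \emph{final} time $t$ on the right-hand side (the displayed statement of that lemma, with $f(\x_{t_0+\tau})$, appears to be a typo). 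With the final-time version, the assumption $f(\x_0)-f(\x_\utime)\le\ufun$ plugs in directly for every $\tau$ at once. You read the statement literally with the intermediate time, and therefore introduced the auxiliary step of invoking Lemma~\ref{lem:descent} on every interval $[\tau,\utime]$ to propagate the final-time constraint backward. That step is logically valid, but it is unnecessary under the intended reading, and it also breaks your probability accounting: the extra $\cO(\utime e^{-\logt})$ from that union bound pushes the total above $16d\utime\cdot e^{-\logt}$ (roughly $24d\utime\cdot e^{-\logt}$), so you are not literally meeting the constant in the lemma as stated, only the same rate.

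One more small inaccuracy in the bookkeeping: you assert that $c'\eta\utime\ufun\le\uspace^2$ holds ``with several factors of $\iota$ to spare.'' Substituting Eq.~\eqref{eq:para_proof} gives $\eta\utime\ufun = \logt^{-4}\,\epsilon/\rho$ and $\uspace^2 = 4\logt^{-4}\,\epsilon/\rho$: both sides scale as $\logt^{-4}$, so there is no $\logt$-slack here; the inequality holds only because the absolute constants were arranged to make it so. The second inequality, $c\eta\tilde\sigma^2(\eta\ell\utime+\logt)\lesssim\ufun$, does gain a genuine factor of $\logt$. Neither issue undermines the argument, but the claim of uniform slack is not quite right and should be stated more carefully.
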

\begin{proof}
This lemma follows from applying Lemma \ref{lem:locality_SGD} on both sequences and using a union bound.
\end{proof}

The overall proof strategy for Lemma \ref{lem:escape_saddle} is to show that localization happens with a very small probability, 
thus at least one of the sequence must have sufficient descent. In order to prove this, we study the dynamics of the difference of the coupling sequence.
\begin{lemma}[Dynamics of the Coupling Sequence Difference] \label{lem:escape_dynamic}
Consider coupling sequences $\{\x_i\}$ and $\{\modify{\x}_i\}$
as in Definition \ref{def:IndCouple} and let $\dif{\x}_t \defeq \x_i - \modify{\x}_i$. Then
$\dif{\x}_{t} = -\qa(t)- \qb(t)-\qc(t)$, where:
\begin{equation*}
\qa(t) \defeq \eta\sum_{\tau = 0}^{t-1} (\I-\eta \H)^{t-1 - \tau} \Delta_{\tau} \dif{\x}_{\tau}, ~~ \qb(t) \defeq  \eta\sum_{\tau = 0}^{t-1} (\I-\eta \H)^{t-1-\tau}  \dif{\zeta}_{\tau}, ~~ \qc(t) \defeq  \eta\sum_{\tau = 0}^{t-1} (\I-\eta \H)^{t-1-\tau} \dif{\xi}_{\tau}.
\end{equation*}
Here $\Delta_t \defeq \int_0^1 \hess f(\psi \x_t + (1-\psi)\modify{\x_t}) \dd \psi - \H$, and 
$\dif{\zeta}_\tau \defeq \zeta_\tau - \modify{\zeta}_\tau$,  $\dif{\xi}_\tau \defeq \xi_\tau - \modify{\xi}_\tau$.
\end{lemma}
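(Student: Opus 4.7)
The plan is to derive the formula by writing the one-step recursion for the coupling difference $\dif{\x}_t$ and then unrolling it, treating the Hessian mismatch $\Delta_t \dif{\x}_t$ and the noise differences $\dif{\zeta}_t, \dif{\xi}_t$ as inhomogeneous forcing terms on top of a linear dynamics driven by $\I - \eta \H$. Concretely, the PSGD update applied separately to each run gives
\begin{equation*}
\x_{t+1} = \x_t - \eta(\grad f(\x_t) + \zeta_t + \xi_t), \qquad \modify{\x}_{t+1} = \modify{\x}_t - \eta(\grad f(\modify{\x}_t) + \modify{\zeta}_t + \modify{\xi}_t),
\end{equation*}
so subtracting yields $\dif{\x}_{t+1} = \dif{\x}_t - \eta[\grad f(\x_t) - \grad f(\modify{\x}_t)] - \eta \dif{\zeta}_t - \eta \dif{\xi}_t$.

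Next I would apply the fundamental theorem of calculus along the segment from $\modify{\x}_t$ to $\x_t$, namely $\grad f(\x_t) - \grad f(\modify{\x}_t) = \bigl(\int_0^1 \hess f(\psi \x_t + (1-\psi)\modify{\x}_t) \, \dd \psi\bigr) \dif{\x}_t$. Writing this integrated Hessian as $\H + \Delta_t$ (which is exactly the definition of $\Delta_t$ in the lemma) turns the recursion into the linear-plus-perturbation form
\begin{equation*}
\dif{\x}_{t+1} = (\I - \eta \H)\dif{\x}_t - \eta\bigl[\Delta_t \dif{\x}_t + \dif{\zeta}_t + \dif{\xi}_t\bigr].
\end{equation*}
Because both sequences start at the same point $\x_0$, we have the initial condition $\dif{\x}_0 = 0$.

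Finally I would unroll this first-order linear recursion (or, equivalently, prove by induction on $t$). Iterating gives
\begin{equation*}
\dif{\x}_t = -\eta \sum_{\tau=0}^{t-1} (\I - \eta \H)^{t-1-\tau}\bigl[\Delta_\tau \dif{\x}_\tau + \dif{\zeta}_\tau + \dif{\xi}_\tau\bigr],
\end{equation*}
and splitting the three terms on the right-hand side yields exactly $-\qa(t) - \qb(t) - \qc(t)$ as defined in the lemma statement.

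This is essentially a bookkeeping lemma, so I do not anticipate a serious obstacle. The only place where one needs to be careful is the index convention in the matrix power $(\I - \eta \H)^{t-1-\tau}$ (so that the most recent noise/error term enters with the identity and earlier ones are rotated and contracted by more powers of $\I - \eta \H$), and the use of the common starting point to justify dropping a homogeneous $(\I - \eta \H)^t \dif{\x}_0$ term. Note that $\dif{\x}_t$ still appears implicitly inside $\qa(t)$, so this is a Duhamel-type identity rather than a truly closed form; the subsequent analysis in Lemma~\ref{lem:escape_saddle} will treat $\qa$ as a small correction and use $\qb + \qc$ together with the negative curvature of $\I - \eta \H$ along $\e_1$ to produce the desired growth.
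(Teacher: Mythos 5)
Your proof is correct and takes essentially the same approach as the paper: subtract the two PSGD updates, linearize the gradient difference via the fundamental theorem of calculus to extract $\H + \Delta_t$, and unroll the resulting first-order linear recursion using $\dif{\x}_0 = 0$. Your explicit remarks about the index convention and the Duhamel-type (implicit) nature of the formula are accurate and match how the paper uses the lemma downstream.
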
 
\begin{proof}
Recall $\zeta_i = \g (\x_i; \theta_i) - \grad f(\x_i) $, thus, we have the update formula:
\begin{equation*}
\x_{t+1} = \x_t - \eta (\g(\x_t; \theta_t) + \xi_t) 
= \x_t - \eta (\grad f(\x_t) + \zeta_t + \xi_t).
\end{equation*}
Taking the difference between $\{\x_i\}$ and $\{\modify{\x}_i\}$:
\begin{align*}
\dif{\x}_{t+1} =& \x_{t+1} - \modify{\x}_{t+1}
=\dif{\x}_t - \eta(\grad f(\x_t) - \grad f(\modify{\x}_t)
+ \zeta_t - \modify{\zeta}_t + (\xi_t - \modify{\xi}_t))\\
=& \dif{\x}_t - \eta[(\H + \Delta_t)\dif{\x}_t + \dif{\zeta}_t + \dif{\xi}_t]
= (\I - \eta\H)\dif{\x}_t  -\eta [\Delta_t \dif{\x}_t + \dif{\zeta}_t + \e_1\e_1\trans\dif{\xi}_t]\\
= &-\eta \sum_{\tau = 0}^{}{t} (I-\eta \H)^{t - \tau}(\Delta_{\tau} \dif{\x}_{\tau} + \dif{\zeta}_{\tau}+ \dif{\xi}_{\tau})),
\end{align*}
where $\Delta_t \defeq \int_0^1 \hess f(\psi \x_t + (1-\psi)\modify{\x_t}) \dd \psi - \H$. This finishes the proof.
\end{proof}

At a high level, we will show that with constant probability, $\qc(t)$ is the dominating term which controls the behavior of the dynamics, and $\qa(t)$ and $\qb(t)$ will stay small compared to $\qc(t)$. To show this, we prove the following three lemmas.

\begin{lemma}\label{lem:escape_scalar}
Denote $\coef(t) \defeq \left[\sum_{\tau = 0}^{t-1} (1 + \eta \gamma)^{2(t-1 - \tau)} \right]^{\frac{1}{2}}$ and $\coefB(t) \defeq (1+\eta\gamma)^{t}/\sqrt{2\eta\gamma}$.
If $\eta \gamma \in [0, 1]$, then (1) $\coef(t) \le \coefB(t)$ for any $t \in \N$; and (2) $\coef(t) \ge \coefB(t)/\sqrt{3}$ for $t \ge \ln(2)/(\eta \gamma)$.
\end{lemma}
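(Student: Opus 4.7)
The plan is to reduce both claims to a single elementary computation of a geometric series, then handle each part by straightforward algebraic manipulation. Note that $\coefB(t)$ implicitly requires $\eta\gamma > 0$; the case $\eta\gamma = 0$ is degenerate (both sides blow up), so I will treat $\eta\gamma \in (0,1]$.

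First, I would set $x \defeq (1+\eta\gamma)^2 > 1$ and evaluate the sum explicitly:
\begin{equation*}
\coef(t)^2 \;=\; \sum_{j=0}^{t-1} x^{j} \;=\; \frac{(1+\eta\gamma)^{2t} - 1}{(1+\eta\gamma)^2 - 1} \;=\; \frac{(1+\eta\gamma)^{2t}-1}{\eta\gamma(2+\eta\gamma)},
\end{equation*}
while by definition $\coefB(t)^2 = (1+\eta\gamma)^{2t}/(2\eta\gamma)$. All the remaining work consists of comparing these two closed forms.

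For part (1), I would cross-multiply (all quantities are positive) to reduce $\coef(t)^2 \le \coefB(t)^2$ to $2[(1+\eta\gamma)^{2t}-1] \le (2+\eta\gamma)(1+\eta\gamma)^{2t}$, i.e. $-2 \le \eta\gamma(1+\eta\gamma)^{2t}$, which is immediate since $\eta\gamma \ge 0$.

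For part (2), I would use $\eta\gamma \le 1$ to bound $2+\eta\gamma \le 3$, which reduces the desired inequality $\coef(t)^2 \ge \coefB(t)^2/3$ to showing $(1+\eta\gamma)^{2t} \ge 2$. Taking logarithms, it suffices to check $2t\ln(1+\eta\gamma) \ge \ln 2$. Then I would invoke the elementary bound $\ln(1+x) \ge x/2$ valid on $[0,1]$ (verified by comparing derivatives starting from the common value $0$ at $x=0$, using $1/(1+x) \ge 1/2$ on that interval), which gives $2t\ln(1+\eta\gamma) \ge t\eta\gamma \ge \ln 2$ under the hypothesis $t \ge \ln(2)/(\eta\gamma)$. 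No step is a real obstacle here; the only thing to be careful about is matching the inequality $\ln(1+x) \ge x/2$ to the regime $\eta\gamma \in (0,1]$, and verifying that the cross-multiplications preserve inequality direction (which they do since $\eta\gamma(2+\eta\gamma) > 0$).
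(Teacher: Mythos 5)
Your proof is correct and follows essentially the same route as the paper: evaluate $\coef(t)^2$ as a geometric series in closed form, observe that part (1) is immediate from clearing denominators, and for part (2) use $2+\eta\gamma \le 3$ and reduce to $(1+\eta\gamma)^{2t} \ge 2$. The only cosmetic difference is that you verify $(1+\eta\gamma)^{2t} \ge 2$ via $\ln(1+x) \ge x/2$ on $[0,1]$, whereas the paper invokes the slightly sharper $(1+x)^{1/x} \ge 2$ (equivalently $\ln(1+x) \ge x\ln 2$); both are adequate here.
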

\begin{proof}
By summing a geometric sequence:
\begin{equation*}
\coef^2(t) \defeq \sum_{\tau = 0}^{t-1} (1 + \eta \gamma)^{2(t-1 - \tau)}  = \frac{(1 + \eta \gamma)^{2t} - 1}{2\eta\gamma + (\eta\gamma)^2}.
\end{equation*}
Thus, the claim that $\coef(t) \le \coefB(t)$ for any $t \in \N$ follows immediately. On the other hand, note that for $t \ge \ln(2)/(\eta \gamma)$, we have
$(1+\eta\gamma)^{2t} \ge 2^{2\ln2} \ge 2$, where the second claim follows by a short calculation.
\end{proof}

\begin{lemma}\label{lem:escape_perturb_term}
Under the notation of Lemma \ref{lem:escape_dynamic} and Lemma \ref{lem:escape_scalar}, letting $-\gamma\defeq \lambda_{\min}(\H)$, we have $\forall t>0$:
\begin{align*}
&\Pr(\norm{\qc(t)} \le  \frac{c\coefB(t)\eta r }{\sqrt{d}} \cdot  \sqrt{\logt} ) \ge 1-2 e^{-\logt} \\
&\Pr(\norm{\qc(\utime)} \ge   \frac{\coefB(\utime) \eta r}{10 \sqrt{d}} ) \ge \frac{2}{3}.
\end{align*}
\end{lemma}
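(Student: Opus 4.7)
The key observation is that the coupling construction (Definition \ref{def:IndCouple}) forces $\dif{\xi}_\tau = \xi_\tau - \modify{\xi}_\tau$ to lie entirely along the minimum eigendirection $\e_1$ of $\H$. Concretely, since $\proj_{-1}\xi_\tau = \proj_{-1}\modify{\xi}_\tau$ and $\e_1\trans\xi_\tau = -\e_1\trans\modify{\xi}_\tau$, we have $\dif{\xi}_\tau = 2(\e_1\trans \xi_\tau)\e_1$. Because $\e_1$ is an eigenvector of $\H$ with eigenvalue $-\gamma$, the matrix $(\I-\eta\H)^{k}$ acts on $\dif{\xi}_\tau$ as scalar multiplication by $(1+\eta\gamma)^{k}$. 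The plan is therefore to reduce $\qc(t)$ to a one-dimensional Gaussian and invoke standard Gaussian concentration/anti-concentration.

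More precisely, I would write
\[
\qc(t) \;=\; \eta\,X_t\,\e_1, \qquad X_t \;\defeq\; \sum_{\tau=0}^{t-1} 2(1+\eta\gamma)^{t-1-\tau}\,(\e_1\trans \xi_\tau),
\]
and use $\e_1\trans \xi_\tau \sim \cN(0, r^2/d)$ together with independence across $\tau$ to conclude $X_t \sim \cN\!\bigl(0,\ 4r^2\coef(t)^2/d\bigr)$, where $\coef(t)$ is the quantity from Lemma \ref{lem:escape_scalar}. Then $\norm{\qc(t)} = \eta|X_t|$ is simply the absolute value of a one-dimensional centered Gaussian with known variance.

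For the upper bound I would apply the standard Gaussian tail inequality $\Pr(|X_t| \ge u) \le 2\exp(-u^2 d/(8 r^2 \coef(t)^2))$, and set $u = c' r\coef(t)\sqrt{\logt}/\sqrt{d}$ to get a failure probability at most $2e^{-\logt}$; combined with the bound $\coef(t) \le \coefB(t)$ from Lemma \ref{lem:escape_scalar}(1), this yields the claimed $\norm{\qc(t)} \le c\,\coefB(t)\,\eta r \sqrt{\logt}/\sqrt{d}$. For the lower bound at $t=\utime$, I would use Lemma \ref{lem:escape_scalar}(2) to assert $\coef(\utime) \ge \coefB(\utime)/\sqrt{3}$ (which is valid since $\utime = \logt/(\eta\sqrt{\rho\epsilon}) \ge \ln 2/(\eta\gamma)$ because $\gamma \ge \sqrt{\rho\epsilon}$ and $\logt \ge \ln 2$), so that the standard deviation of $X_\utime$ is at least $2 r \coefB(\utime)/\sqrt{3d}$. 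Gaussian anti-concentration then gives $\Pr(|X_\utime| \ge \coefB(\utime) r/(10\sqrt{d})) \ge \Pr(|Z| \ge \sqrt{3}/20) \ge 2/3$ for $Z \sim \cN(0,1)$, yielding the second claim.

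The main step to get right is the bookkeeping for the 1D reduction: checking that $(\I - \eta\H)^{t-1-\tau}$ really does act as the scalar $(1+\eta\gamma)^{t-1-\tau}$ on $\dif{\xi}_\tau$ (requiring the coupling's cancellation in $\proj_{-1}$), and that the resulting Gaussian variance is exactly $4r^2\coef(t)^2/d$ after summing the independent contributions. Once this reduction is in place, both parts follow from one-dimensional Gaussian tail bounds together with Lemma \ref{lem:escape_scalar}; no concentration inequality more sophisticated than Gaussian concentration is needed, since all randomness in $\qc$ is captured by a single real-valued Gaussian.
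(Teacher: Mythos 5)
Your proof is correct and follows essentially the same route as the paper: the coupling forces the perturbation difference onto the $\e_1$-eigendirection, collapsing $\qc(t)$ to a one-dimensional Gaussian whose variance is governed by $\coef(t)^2$, and both claims then reduce to standard Gaussian tail and anti-concentration bounds. You are in fact slightly more careful than the paper's terse exposition, explicitly invoking Lemma~\ref{lem:escape_scalar}(2) and verifying $\utime \ge \ln 2/(\eta\gamma)$ to obtain $\coef(\utime) \ge \coefB(\utime)/\sqrt{3}$, which the anti-concentration step does rely on.
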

\begin{proof}
 Note that $\dif{\xi}_\tau$ is one-dimensional Gaussian random variable with standard deviation $2r/\sqrt{d}$ along the $\e_1$ direction.
As an immediate result, $\eta\sum_{\tau = 0}^{t} (I-\eta \H)^{t - \tau} \dif{\xi}_{\tau}$ is also a one-dimensional Gaussian distribution since the summation of Gaussians is again Gaussian. Finally note that $\e_1$ is an eigendirection of $\H$ with corresponding eigenvalue $-\gamma$, and by Lemma \ref{lem:escape_scalar} we have $\coef(t) \le \coefB(t)$. Then, the first inequality immediately follows from the standard concentration of Gaussian measure, and the second inequality follows from the fact if $Z \sim \mathcal{N}(0, \sigma^2)$ then $\Pr(|Z| \le \lambda \sigma) \le 2\lambda/\sqrt{2\pi} \le \lambda$.
\end{proof}

\begin{lemma}\label{lem:escape_other_term}
There exists an absolute constant $c_{\max}$ such that, for any $\logt \ge c_{\max}$, under the notation of Lemma \ref{lem:escape_dynamic} and Lemma \ref{lem:escape_scalar}, and letting $-\gamma\defeq \lambda_{\min}(\H)$, we have:
\begin{align*}
\Pr(\min\{f(\x_{\utime}) - f(\x_0), &f(\modify{\x}_{\utime}) - f(\x_0)\} \le -\ufun, \text{~~or~~}\\
&\forall t \le \utime: \norm{\qa(t) + \qb(t)} \le \frac{  \coefB(t) \eta r}{20\sqrt{ d}}) \ge 1-10 d \utime^2 \cdot \log(\frac{\uspace\sqrt{d}}{\eta r}) e^{-\logt}.
\end{align*}
\end{lemma}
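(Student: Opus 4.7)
The plan is to proceed by induction on $t$, showing that the error terms $\qa(t)$ and $\qb(t)$ remain small relative to the leading perturbation term $\qc(t)$, unless sufficient descent has already occurred in one of the two coupled sequences. First, I would invoke Lemma~\ref{lem:localization} to reduce to the event where both sequences stay in the $\uspace$-ball around $\x_0$ throughout $t\le\utime$ (otherwise one has already achieved $-\ufun$ descent, so the claim holds trivially). On this event, the Hessian-Lipschitz assumption yields $\norm{\Delta_\tau}\le\rho\uspace$ for every $\tau\le\utime$. I would also invoke Lemma~\ref{lem:escape_perturb_term} with a union bound over $\tau\le\utime$ to ensure $\norm{\qc(\tau)}\le c\coefB(\tau)\eta r\sqrt{\logt/d}$ uniformly.

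The inductive hypothesis is that $\norm{\qa(\tau)+\qb(\tau)}\le \coefB(\tau)\eta r/(20\sqrt{d})$ for all $\tau<t$. Combined with the decomposition in Lemma~\ref{lem:escape_dynamic} and the $\qc$ bound, this implies $\norm{\dif{\x}_\tau}\le \hat c\,\coefB(\tau)\eta r\sqrt{\logt/d}$ for an absolute constant $\hat c$. For the inductive step at time $t$, I would bound $\qa(t)$ directly using $\norm{\Delta_\tau}\le\rho\uspace$ and the inductive control on $\norm{\dif{\x}_\tau}$:
\begin{equation*}
\norm{\qa(t)}\le \eta\rho\uspace \sum_{\tau=0}^{t-1}(1+\eta\gamma)^{t-1-\tau}\norm{\dif{\x}_\tau} \le c'\eta\rho\uspace\, \hat c\,\eta r\sqrt{\logt/d}\cdot t\cdot\coefB(t),
\end{equation*}
using $\coefB(\tau)=(1+\eta\gamma)^\tau/\sqrt{2\eta\gamma}$ to evaluate the geometric sum. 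With the parameter choices in Eq.~\eqref{eq:para_proof}, one checks that $\eta\rho\uspace\utime\le 1/\logt$, so a sufficiently large $\logt$ forces $\norm{\qa(t)}\le \coefB(t)\eta r/(40\sqrt{d})$.

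For $\qb(t)$, the sequence $\{\dif\zeta_\tau\}$ is a martingale difference with respect to $\{\F_\tau\}$ since both $\zeta_\tau$ and $\modify\zeta_\tau$ have conditional mean zero. Without Assumption~\ref{assump:SG_Lip}, each $\dif\zeta_\tau$ is $\nSG(2\sigma)$, so a vector-valued sub-Gaussian martingale concentration (as in Lemma~\ref{lem:concen_sum}) yields $\norm{\qb(t)}\le c''\eta\sigma\sqrt{\logt}\cdot\coefB(t)$, which is bounded by $\coefB(t)\eta r/(40\sqrt{d})$ because in this parameter regime $r=\logt\,\epsilon\sqrt{\Ns}\approx\logt\,\sigma\sqrt{d}$. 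Under Assumption~\ref{assump:SG_Lip}, Lipschitzness of $\g(\cdot;\theta)$ together with the $\ell$-gradient-Lipschitz property of $f$ gives $\norm{\dif\zeta_\tau}\le(\tilde\ell+\ell)\norm{\dif\x_\tau}$, so the conditional second moment of each martingale increment is controlled by $\norm{\dif\x_\tau}$. Here I would use a dyadic peeling argument over the scales of $\sup_{\tau<t}\norm{\dif\x_\tau}/\coefB(\tau)$ from $\eta r/\sqrt d$ up to $\uspace$, applying a Bernstein-type martingale bound on each scale; this is exactly what produces the $\log(\uspace\sqrt d/(\eta r))$ factor in the failure probability stated in the lemma. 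The additional $\tilde\ell$ contribution is absorbed because $\Ns$ contains the term $\tilde\ell^2/(\ell\sqrt{\rho\epsilon})$, which makes $\eta$ small enough to tame $\tilde\ell^2\utime/\ell$.

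Summing the $\qa$ and $\qb$ bounds closes the induction at step $t$. Union-bounding over $t\le\utime$ and over the dyadic scales (each a set of size at most $O(\log(\uspace\sqrt d/(\eta r)))$), and combining with the localization failure probability from Lemma~\ref{lem:localization}, produces the advertised total failure probability $10 d\utime^2\log(\uspace\sqrt d/(\eta r))e^{-\logt}$. The main technical obstacle is the martingale concentration for $\qb(t)$ under Assumption~\ref{assump:SG_Lip}: because the increment bound depends on the random quantity $\norm{\dif\x_\tau}$, which itself grows geometrically along the escape direction, a naive deterministic bound would be too loose. The peeling-over-scales argument (with its associated $\log$-factor) is what makes this work; everything else---the Hessian term $\qa$, the pure Gaussian term $\qc$, and the geometric sums---is straightforward given Lemmas~\ref{lem:localization}, \ref{lem:escape_dynamic}, \ref{lem:escape_perturb_term}, and the parameter choices in Eq.~\eqref{eq:para_proof}.
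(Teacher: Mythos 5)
Your proposal is correct and follows essentially the same route as the paper: reduce to the localization event via Lemma~\ref{lem:localization}, then induct on $t\le\utime$ maintaining $\norm{\qa(\tau)+\qb(\tau)}\le\coefB(\tau)\eta r/(20\sqrt d)$, with $\qa$ controlled by $\eta\rho\uspace\utime\le 1/\logt$ and $\qb$ controlled by norm-subGaussian martingale concentration, case-split on Assumption~\ref{assump:SG_Lip}. The ``dyadic peeling over scales of $\norm{\dif\x_\tau}$'' that you propose to handle the random conditional variance under Assumption~\ref{assump:SG_Lip} is exactly what the paper packages as Lemma~\ref{lem:concen_sum_randB} (applied with $B=\coef^2(t)\eta^2\tilde\ell^2\uspace^2$ and $b=\coef^2(t)\eta^2\tilde\ell^2\eta^2r^2/d$), and that is where the $\log(\uspace\sqrt d/(\eta r))$ factor in the failure probability originates, as you correctly identify.
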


\begin{proof}
For simplicity we denote $\fE$ as the event $\{\forall \tau \le t:  \max\{\norm{\x_{\tau} - \x_0}^2, \norm{\modify{\x}_{\tau} - \x_0}^2\} \le \uspace^2\}$. We use induction to prove following claim for any $t\in [0, \utime]$:
\begin{align*}
\Pr(\fE ~\Rightarrow~ \forall \tau \le t: \norm{\qa(\tau) + \qb(\tau)} \le \frac{  \coefB(\tau) \eta r}{20\sqrt{ d}}) \ge 1-10 d \utime t \cdot \log(\frac{\uspace\sqrt{d}}{\eta r}) e^{-\logt}
\end{align*}
Then Lemma \ref{lem:escape_other_term} follows directly from combining Lemma \ref{lem:localization} and the induction claim.

Clearly for the base case $t=0$, the claim holds trivially, as $\qb(0) = \qa(0) = \mat{0}$. Suppose the claim holds for $t$, then by Lemma \ref{lem:escape_perturb_term}, with probability at least $1-2\utime e^{-\logt}$, we have for any $\tau \le t$: 
\begin{equation*}
\norm{\dif{\x}_{\tau}} 
\le \eta\norm{\qa(\tau) + \qb(\tau)} 
+ \eta\norm{\qc(\tau)}
\le \frac{c \coefB(\tau) \eta r}{\sqrt{ d}} \cdot \sqrt{\logt}.
\end{equation*}
Then, under the condition $\max\{\norm{\x_{\tau} - \x_0}^2, \norm{\modify{\x}_{\tau} - \x_0}^2\} \le \uspace^2$, 
by the Hessian Lipschitz property, we have $\norm{\Delta_{\tau}} = \norm{\int_0^1 \hess f(\psi \x_{\tau} + (1-\psi)\modify{\x_{\tau}}) \dd \psi - \H}
\le \rho \max\{\norm{\x_{\tau} - \x_0}, \norm{\modify{\x}_{\tau} - \x_0}\} \le \rho \uspace$. This gives bounds on $\qa(t+1)$ terms as:
\begin{equation*}
\norm{\qa(t+1)} \le \eta\sum_{\tau = 0}^{t} (1+\eta \gamma)^{t-\tau} \rho\uspace \norm{\dif{\x}_{\tau}}
\le \eta \rho \uspace \utime \frac{c \coefB(t) \eta r}{\sqrt{ d}} \le  \frac{ \coefB(t) \eta r}{40\sqrt{ d}},
\end{equation*}
where the last step is due to $\eta \rho \uspace \utime = 1/\logt$ by Eq.~\eqref{eq:para_proof}. By picking $\logt$ larger than the absolute constant $40 c$, we have $c\eta \rho \uspace \utime \le 1/40$.

Recall also that $\dif{\zeta}_{\tau}|\F_{\tau-1}$ is the summation of a $\nSG(\sigma)$ random vector and a $\nSG(c\cdot r)$ random vector.  By Lemma \ref{lem:concen_sum}, we know that with probability at least $1-4de^{-\logt}$:
\begin{equation*}
\norm{\qb(t+1)} \le c\coefB(t+1) \eta\sigma \sqrt{\logt}
\end{equation*}
On the other hand, when assumption \ref{assump:SG_Lip} is avaliable, we also have $\dif{\zeta}_{\tau}|\F_{\tau-1} \sim \nSG(\tilde{\ell}\norm{\dif{\x}_{\tau}})$, by applying Lemma \ref{lem:concen_sum_randB} with $B = \coef^2(t) \cdot \eta^2 \tilde{\ell}^2 \uspace^2; b = \coef^2(t)  \cdot \eta^2\tilde{\ell}^2 \cdot \eta^2 r^2 /d$, we know with probability at least $1-4d \cdot\log(\uspace \sqrt{d}/(\eta r))\cdot e^{-\logt}$:
\begin{equation} \label{eq:SGlipcoupling}
\norm{\qb(t+1)}
\le c \eta \tilde{\ell} \sqrt{\sum_{\tau = 0}^{t} (1+\eta \gamma)^{2(t- \tau)} \cdot
\max\{\norm{\dif{\x}_\tau}^2, \frac{\eta^2 r^2}{d}\} \logt } 
\le \eta \tilde{\ell}\sqrt{\utime} \cdot \frac{c \coefB(t) \eta r}{\sqrt{d}} \cdot  \sqrt{\logt}.
\end{equation}
Finally, combining both cases, and by our choice of step size $\eta, r$ as in Eq.~\eqref{eq:para_proof} with $\logt$ large enough:
\begin{equation*}
\norm{\qb(t+1)} \le c\frac{ \coefB(t) \eta r}{\sqrt{ d}}\cdot \min\{\eta\tilde{\ell}\sqrt{\utime\logt} , \frac{\sigma\sqrt{d \logt}}{r}\}
\le \frac{\coefB(t) r}{40\sqrt{d }}
\end{equation*}
and the induction follows by the triangle inequality and a union bound.
\end{proof}

We are ready to prove Lemma \ref{lem:escape_saddle}, which is the focus of this subsection.

\begin{proof}[Proof of Lemma \ref{lem:escape_saddle}]
We first prove the first claim $\Pr(f(\x_\utime) - f(\x_0) \le  0.1\ufun) \ge 1-4e^{-\logt}$.  Because of our choice of step size and Lemma \ref{lem:descent}, we have with probability $1-4e^{-\logt}$:
\begin{equation*}
f(\x_{\utime}) - f(\x_0) \le c\eta\tilde{\sigma}^2 (\eta \ell \utime + \logt) \le 0.1 \ufun,
\end{equation*}
where the last step is because our choice of parameters in Eq.~\eqref{eq:para_proof}implies $c\eta\tilde{\sigma}^2 (\eta \ell \utime + \logt)
\le 2c\ufun/\logt$ and we pick $\logt$ to be larger than an absolute constant $20c$.

For the second claim, $\Pr(f(\x_\utime) - f(\x_0) \le  -\ufun) \ge 1/3 - 5 d \utime^2 \cdot \log(\uspace\sqrt{d}/(\eta r)) e^{-\logt}$, we consider coupling sequences $\{\x_i\}$ and $\{\modify{\x}_i\}$ as defined in Definition \ref{def:IndCouple}. Given Lemma \ref{lem:escape_perturb_term} and Lemma \ref{lem:escape_other_term}, we know that with probability at least $2/3 - 10 d \utime^2 \cdot \log(\uspace\sqrt{d}/(\eta r)) e^{-\logt}$, if $\min\{f(\x_{\utime}) - f(\x_0), f(\modify{\x}_{\utime}) - f(\x_0)\} > -\ufun$---i.e., both sequences are stuck around the saddle point---we must have:
\begin{equation*}
\norm{\qc(\utime)} \ge   \frac{\coefB(\utime) \eta r}{10 \sqrt{d}}, 
\quad \norm{\qa(\utime) + \qb(\utime)} \le \frac{  \coefB(\utime) \eta r}{20\sqrt{ d}}.
\end{equation*}
By Lemma \ref{lem:escape_dynamic}, when $\logt \ge c \cdot \log (\ell \sqrt{d/(\rho\epsilon)})$ for a large absolute constant $c$, we have:
\begin{align*}
\max\{\norm{\x_\utime - \x_0}, \norm{\modify{\x}_\utime - \x_0}\} \ge &\frac{1}{2}\norm{\dif{\x}(\utime)}  
\ge  \frac{1}{2}[\norm{\qc(\utime)} -\norm{\qa(\utime) + \qb(\utime)}] \\
\ge&\frac{  \coefB(\utime) \eta r}{40\sqrt{ d}} 
= \frac{  (1+\eta \gamma)^\utime \eta r}{40\sqrt{2\eta\gamma d}} 
\le \frac{2^\logt \eta r}{80 \sqrt{\eta \ell d}},
> \uspace
\end{align*}
which contradicts with Lemma \ref{lem:localization}. Therefore, we can conclude that 
$\Pr(\min\{f(\x_{\utime}) - f(\x_0), f(\modify{\x}_{\utime}) - f(\x_0)\} \le -\ufun ) \ge 2/3 - 10 d \utime^2 \cdot \log(\uspace\sqrt{d}/(\eta r)) e^{-\logt}$.
We also know that the marginal distribution of $\x_{\utime}$ and $\modify{\x}_{\utime}$ is the same, thus they have same probability to escape the saddle point. That is:
\begin{align*}
\Pr(f(\x_\utime) - f(\x_0) \le  -\ufun) \ge & \frac{1}{2}\Pr(\min\{f(\x_{\utime}) - f(\x_0), f(\modify{\x}_{\utime}) - f(\x_0)\} \le -\ufun ) \\
\ge & 1/3 -5 d \utime^2 \cdot \log(\uspace\sqrt{d}/(\eta r)) e^{-\logt}.
\end{align*}
This finishes the proof.
\end{proof}

\subsection{Proof of Theorem \ref{thm:main_SGD}}

Lemma \ref{lem:descent} and Lemma \ref{lem:escape_saddle} describe the speed of decrease in the function values when either large gradients or strictly negative curvatures are present. Combining them gives the proof for our main theorem.

\begin{proof}[Proof of Theorem \ref{thm:main_SGD}]
First, we set the total number of iterations $T$ to be:
\begin{equation*}
T = 100\max\left\{ \frac{(f(x_0) - f^\star)\utime}{\ufun},  \frac{(f(x_0) - f^\star)}{\eta\epsilon^2}\right\}
= O\left( \frac{\ell(f(x_0) - f^\star)}{\epsilon^2}\cdot \Ns \cdot \logt^9 \right).
\end{equation*}
We will show that the following \textbf{two claims} hold simultaneously with probability $1-\delta$:
\begin{enumerate}
\item At most $T/4$ iterates have large gradient; i.e., $\norm{\grad f(\x_t)} \ge \epsilon$;
\item At most $T/4$ iterates are close to saddle points; i.e., $\norm{\grad f(\x_t)} \le \epsilon$ and $\lambda_{\min} (\hess f(\x_t)) \le -\sqrt{\rho \epsilon}$.
\end{enumerate}
Therefore, at least $T/2$ iterates are $\epsilon$-second order stationary point. We prove the two claims separately.

\paragraph{Claim 1.} Suppose that within $T$ steps, we have more than $T/4$ iterates for which gradient is large (i.e., $\norm{\grad f(\x_t)} \ge \epsilon$). Recall that by Lemma \ref{lem:descent} we have with probability $1-4e^{-\logt}$:
\begin{equation*}
f(\x_{T}) - f(\x_0) \le -\frac{\eta}{8}\sum_{i=0}^{T-1} \norm{\grad f(\x_i)}^2 + c\eta\tilde{\sigma}^2 (\eta \ell T + \logt) \\
\le - \eta \left[\frac{ T \epsilon^2}{32}  - \tilde{\sigma}^2 (\eta \ell T + \logt)\right].
\end{equation*}
Note that by our choice of $\eta, r, T$ and picking $\logt$ larger than some absolute constant, we have $ T \epsilon^2/32  - \tilde{\sigma}^2 (\eta \ell T + \logt) \ge  T \epsilon^2/64$, and thus 
$f(\x_T) \le f(x_0) - T\eta  \epsilon^2  / 64 < f^\star$ which is not possible.

\paragraph{Claim 2.} We first define the stopping times that allow us to invoke Lemma \ref{lem:escape_saddle}:
\begin{align*}
z_1 =& \inf\{\tau ~|~ \norm{\grad f(\x_{\tau})} \le \epsilon \text{~and~} \lambda_{\min} (f(\x_{\tau})) \le -\sqrt{\rho\epsilon}\}\\
z_i= &\inf\{\tau ~|~ \tau >z_{i-1} + \utime \text{~and~} \norm{\grad f(\x_{\tau})} \le \epsilon \text{~and~} \lambda_{\min} (f(\x_{\tau})) \le -\sqrt{\rho\epsilon}\}, \qquad \forall i>1.
\end{align*}
Clearly, $z_i$ is a stopping time, and it is the $i$th time in the sequence along which we can apply Lemma \ref{lem:escape_saddle}. We also let $M$ be the random variable $M = \max\{i|z_i + \utime \le T\}$. We can decompose the decrease $f(\x_T) - f(\x_0)$ as follows:
\begin{align*}
f(\x_{T}) - f(\x_0) =& \underbrace{\sum_{i=1}^{M} [f(\x_{z_i + \utime}) - f(\x_{z_i})]}_{T_1} \\
& + \underbrace{[f(\x_{T}) - f(\x_{z_M})] + [f(\x_{z_1}) - f(\x_0)] + \sum_{i=1}^{M-1}[f(\x_{z_{i+1}}) - f(\x_{z_i + \utime})] }_{T_2}.
\end{align*}
For the first term $T_1$, by Lemma \ref{lem:escape_saddle} and a supermartingale concentration inequality, for each fixed $m \le T$:
\begin{equation*}
\Pr\left(\sum_{i=1}^{m} [f(\x_{z_i + \utime}) - f(\x_{z_i})]\le -(0.9m - c\sqrt{m \cdot \logt})\ufun \right) \ge 1-5 d \utime^2 T \cdot \log(\uspace\sqrt{d}/(\eta r)) e^{-\logt}.
\end{equation*}
Since the random variable $M \le T/\utime \le T$, by a union bound, we know that with probability $1-5 d \utime^2 T^2 \cdot \log(\uspace\sqrt{d}/(\eta r)) e^{-\logt}$:
\begin{equation*}
T_1\le -(0.9M - c\sqrt{M \cdot \logt})\ufun. 
\end{equation*}
For the second term, by a union bound and Lemma \ref{lem:descent} for all $0\le t_1, t_2 \le T$, with probability $1 - 4T^2e^{-\logt}$:
\begin{equation*}
T_2 \le c\cdot\eta \tilde{\sigma}^2 (\eta \ell T + 2M\logt)
\end{equation*}
Therefore, if within $T$ steps we have more than $T/4$ saddle points, then $M \ge T/4\utime$, and with probaility $1-10 d \utime^2 T^2 \cdot \log(\uspace\sqrt{d}/(\eta r)) e^{-\logt}$:
\begin{equation*}
f(\x_{T}) - f(\x_0) \le  -(0.9M - c\sqrt{M \cdot \logt})\ufun  +c\cdot\eta \tilde{\sigma}^2 (\eta \ell T + 2M\logt)
\le -0.4 M \ufun \le -0.4 T\ufun/\utime.
\end{equation*}
This will gives $f(\x_T) \le f(x_0) -0.4 T\ufun/\utime < f^\star$ which is not possible.

Finally, it is not hard to verify, by choosing $\logt = c\cdot\log \left( \frac{ d \ell \Delta_f \Ns}{\rho\epsilon\delta}\right)$ for a large enough value of the absolute constant $c$, we can make both claims hold with probability $1-\delta$.
\end{proof}

\subsection{Proof of Theorem \ref{thm:main_SGD_minibatch}}

Our proofs for PSGD easily generalize to the mini-batch setting.

\begin{proof}[Proof of Theorem \ref{thm:main_SGD_minibatch}]
The proof is essentially the same as the proof of Theorem \ref{thm:main_SGD}. The only difference is that, up to a log factor, mini-batch PSGD reduces the variance $\sigma^2$ and $\tilde{\ell}^2\norm{\dif{\x}_{\tau}}^2$ in Eq.~\eqref{eq:SGlipcoupling} by a factor of $m$, where $m$ is the size of the mini-batch.
\end{proof}

\section{Concentration Inequalities}

In this section, we present the concentration inequalities required for this paper.
Please refer to the technical note \citep{jinshortnote} for the proofs of Lemmas~\ref{lem:examplesGnorm}, \ref{lem:nSGvariant}, \ref{lem:concen_sum} and \ref{lem:concen_sum_randB}.

Recall the definition of a norm-subGaussian random vector.
\begin{definition}\label{def:sGnorm}
A random vector $\X \in \R^d$ is \emph{norm-subGaussian} (or $\nSG(\sigma)$), if there exists $\sigma$ so that:
\begin{equation*}
\Pr\left(\norm{\X - \E \X} \ge t \right) \le 2 e^{-\frac{t^2}{2\sigma^2}}, \qquad \forall t \in \R.
\end{equation*}
\end{definition}

Note that a bounded random vector and a subGaussian random vector are two special cases of a norm-subGaussian random vector.
\begin{lemma}\label{lem:examplesGnorm}
There exists an absolute constant $c$ so that following random vectors are  $\nSG(c \cdot \sigma)$.
\begin{enumerate}
\item A bounded random vector $\X \in \R^d$ such that $\norm{\X} \le \sigma$.
\item A random vector $\X \in \R^d$, where $\X = \xi \e_1$ and the random variable $\xi \in \R$ is $\sigma$-subGaussian.
\item A random vector $\X \in \R^d$ that is $(\sigma/\sqrt{d})$-subGaussian.
\end{enumerate}
\end{lemma}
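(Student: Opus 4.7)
The plan is to verify the defining tail bound of Definition~\ref{def:sGnorm} case by case, and then take $c$ to be the largest of the three absolute constants that emerge. In every case the object to control is $\norm{\X - \E \X}$.

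For Case 1, Jensen's inequality gives $\norm{\E \X} \le \E \norm{\X} \le \sigma$, so the triangle inequality yields $\norm{\X - \E\X} \le 2\sigma$ almost surely. The probability in question therefore vanishes for $t \ge 2\sigma$, while for smaller $t$ it suffices to choose $c$ so that $2\exp(-t^2/(2c^2\sigma^2)) \ge 1$ on the range $t < 2\sigma$, which is guaranteed for any $c \ge \sqrt{2/\ln 2}$. For Case 2, the identity $\X - \E \X = (\xi - \E\xi)\e_1$ reduces the vector norm to the scalar $|\xi - \E\xi|$, and the scalar sub-Gaussian tail bound $\Pr(|\xi - \E\xi| \ge t) \le 2e^{-t^2/(2\sigma^2)}$ gives the claim directly with constant $1$.

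Case 3 is the substantive one. Written out, the hypothesis asserts $\E\exp(\langle \v, \X - \E\X\rangle) \le \exp(\sigma^2 \norm{\v}^2/d^2)$ for all $\v \in \R^d$. Setting $\v = \lambda \u$ for a unit vector $\u$ and applying the Chernoff bound shows that each marginal satisfies $\Pr(\langle \u, \X - \E\X\rangle \ge s) \le \exp(-s^2 d^2/(4\sigma^2))$. To pass from marginals to the norm I would use the variational representation $\norm{\X - \E\X} = \sup_{\u \in S^{d-1}} \langle \u, \X - \E\X\rangle$ together with a standard $1/4$-net $\mathcal{N}$ of $S^{d-1}$ of size $|\mathcal{N}| \le 9^d$, and the elementary approximation $\norm{\X - \E\X} \le \tfrac{4}{3}\sup_{\v \in \mathcal{N}} \langle \v, \X - \E\X\rangle$. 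A union bound over $\mathcal{N}$ then gives
\[
\Pr(\norm{\X - \E\X} \ge t) \;\le\; 9^{d} \exp\!\left(-\tfrac{9\,t^2 d^2}{64\,\sigma^2}\right).
\]
Once $t$ is of order at least $\sigma/\sqrt{d}$, the $d^2$ in the exponent dominates the entropy factor $d \ln 9$ from the net, and after adjusting constants this collapses to the target form $2\exp(-t^2/(2c^2\sigma^2))$; for the remaining small-$t$ regime the bound is vacuous exactly as in Case 1.

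The step requiring the most care is Case 3, where one must check that the $\sigma/\sqrt{d}$ deflation in the hypothesis is precisely what is needed to beat the $9^d$ volumetric factor from covering the sphere while leaving behind a dimension-free rate $\exp(-t^2/(c\sigma^2))$. This explains why the third example is phrased with the $1/\sqrt{d}$ scaling rather than with parameter $\sigma$. A single absolute $c$ valid in all three cases is then obtained by taking the maximum of the three constants above.
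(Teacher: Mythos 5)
The paper does not include a proof of this lemma; it defers the argument to the technical note \citep{jinshortnote}, so a line-by-line comparison is not possible. Your proof is nevertheless correct and uses the tools one would expect. Cases 1 and 2 are immediate: in Case 1 the almost-sure bound $\norm{\X - \E\X}\le 2\sigma$ via Jensen and the triangle inequality makes the tail vacuous once $c \ge \sqrt{2/\ln 2}$, and Case 2 reduces the vector norm to the scalar sub-Gaussian tail. For Case 3, the Chernoff-on-marginals plus $1/4$-net-on-the-sphere argument is the standard device: the $1/\sqrt d$ deflation of the sub-Gaussian parameter supplies an extra $\Theta(d)$ factor in the Chernoff exponent (or $\Theta(d^2)$ under the paper's particular normalization $\exp(\sigma^2\norm{\v}^2/d)$ of the directional MGF), which is exactly what is needed to absorb the $9^d$ covering number, and the remaining small-$t$ regime is covered by vacuity of the target bound, as you note. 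A single absolute constant (e.g.\ $c=4$) closes all three cases uniformly in $d$. There is no gap.
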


Second, we have that if $\X$ is norm-subGaussian, then its norm square is subExponential, and its component along a single direction is subGaussian.
\begin{lemma}\label{lem:nSGvariant}
There is an absolute constant $c$ so that if the random vector $\X \in R^d$ is zero-mean $\nSG(\sigma)$, then $\norm{\X}^2$ is $c\cdot\sigma^2$-subExponential, and for any fixed unit vector $\v \in \S^{d-1}$, $\la \v, \X\ra$ is $c\cdot\sigma$-subGaussian.
\end{lemma}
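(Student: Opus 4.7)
The plan is to handle the two assertions separately, both starting from the single hypothesis that $\Pr(\norm{\X} \ge t) \le 2 e^{-t^2/(2\sigma^2)}$ (using $\E \X = \zero$). The common tool is to pass from the tail bound to a moment generating function bound, which is then matched against the standard MGF-based definition of subExponential and subGaussian, respectively. Throughout the argument, ``absolute constant $c$'' will be allowed to change from line to line.

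For the first claim, that $\norm{\X}^2$ is $c \sigma^2$-subExponential, I would first note that $\Pr(\norm{\X}^2 \ge s) = \Pr(\norm{\X} \ge \sqrt{s}) \le 2 e^{-s/(2\sigma^2)}$, which is already the subExponential tail shape for the nonnegative random variable $\norm{\X}^2$. Integrating the tail gives $\E \norm{\X}^2 = \int_0^\infty \Pr(\norm{\X}^2 \ge s)\,ds \le 4\sigma^2$, so recentering at the mean shifts by at most a constant multiple of $\sigma^2$. For the MGF bound, for $0 \le \lambda < 1/(4\sigma^2)$ I would compute
\begin{equation*}
\E \exp(\lambda \norm{\X}^2) = 1 + \lambda \int_0^\infty e^{\lambda s} \Pr(\norm{\X}^2 \ge s)\,ds \le 1 + \frac{2\lambda}{1/(2\sigma^2) - \lambda},
\end{equation*}
which is $\le \exp(c \lambda^2 \sigma^4)$ for a suitable absolute constant $c$ in the stated range of $\lambda$. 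Combined with the bound on $\E \norm{\X}^2$, this yields the usual subExponential MGF bound on $\norm{\X}^2 - \E \norm{\X}^2$ for $|\lambda| \le 1/(c\sigma^2)$.

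For the second claim, that $\langle \v, \X \rangle$ is $c\sigma$-subGaussian, the key observation is Cauchy--Schwarz: for any unit vector $\v$,
\begin{equation*}
|\langle \v, \X \rangle| \le \norm{\v} \cdot \norm{\X} = \norm{\X},
\end{equation*}
so $\Pr(|\langle \v, \X \rangle| \ge t) \le \Pr(\norm{\X} \ge t) \le 2 e^{-t^2/(2\sigma^2)}$. Since $\E \X = \zero$ implies $\E \langle \v, \X \rangle = 0$, I would then invoke the standard equivalence between subGaussian tail bounds and subGaussian MGF bounds for centered random variables: bound the moments by $\E |\langle \v, \X \rangle|^k = \int_0^\infty k t^{k-1} \Pr(|\langle \v, \X \rangle| \ge t)\,dt \le (c\sigma)^k \cdot k \Gamma(k/2)$, and plug into the Taylor expansion of $\E \exp(\lambda \langle \v, \X \rangle)$ to deduce $\E \exp(\lambda \langle \v, \X \rangle) \le \exp(c \lambda^2 \sigma^2)$ for all $\lambda \in \R$.

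The main obstacle is really just keeping track of the absolute constants and, in the subExponential step, ensuring that the mean-shift and the MGF bound combine correctly so that $\norm{\X}^2 - \E \norm{\X}^2$ (not just $\norm{\X}^2$) satisfies the standard centered subExponential definition. Both parts are otherwise routine consequences of integrating the defining tail bound of $\nSG(\sigma)$, and no new probabilistic ideas are needed beyond Cauchy--Schwarz and the layer-cake formula.
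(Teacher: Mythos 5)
The paper does not actually prove this lemma---it defers to the technical note it cites---so your argument has to stand on its own. Your treatment of the second claim does: $|\la \v, \X\ra| \le \norm{\X}$ transfers the tail bound, $\E\la\v,\X\ra = 0$, and the standard moment-plus-Taylor argument (the linear term vanishing by centering) gives $\E\exp(\lambda\la\v,\X\ra)\le\exp(c\lambda^2\sigma^2)$ for all $\lambda$. That part is correct and routine.

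The first claim, however, contains a step that fails as written. You assert $\E\exp(\lambda\norm{\X}^2) \le 1 + \frac{2\lambda}{1/(2\sigma^2)-\lambda} \le \exp(c\lambda^2\sigma^4)$ on $0\le\lambda<1/(4\sigma^2)$. The second inequality is false for small $\lambda>0$: the middle expression is $1+4\lambda\sigma^2+O(\lambda^2\sigma^4)$, i.e.\ it carries a linear term of order $\lambda\sigma^2$ (coming from $\E\norm{\X}^2$, which you cannot assume is small relative to $\sigma^2$), and no choice of absolute constant makes $\exp(c\lambda^2\sigma^4)$ dominate it near $\lambda=0$. Your proposed fix---multiplying by $e^{-\lambda\E\norm{\X}^2}\le 1$---does not repair this, because discarding that factor is precisely what loses the cancellation of the linear term. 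The elementary repair is to keep the linear term explicit: from the tail bound, $\E\norm{\X}^{2k}\le 2\,k!\,(2\sigma^2)^k$, so for $0\le\lambda\le 1/(4\sigma^2)$ one gets $\E e^{\lambda\norm{\X}^2}\le 1+\lambda\E\norm{\X}^2+c\lambda^2\sigma^4$, and multiplying by $e^{-\lambda\E\norm{\X}^2}$ then yields $\E e^{\lambda(\norm{\X}^2-\E\norm{\X}^2)}\le e^{c\lambda^2\sigma^4}$; for $\lambda<0$ (the subExponential definition is two-sided, which your range omits) use $e^{x}\le 1+x+x^2/2$ for $x=\lambda\norm{\X}^2\le 0$ together with $\E\norm{\X}^4\le c\sigma^4$. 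Alternatively, if ``subExponential'' is read in the (equivalent up to constants) tail or Orlicz-norm sense, your very first line $\Pr(\norm{\X}^2\ge s)\le 2e^{-s/(2\sigma^2)}$ already finishes the claim and the MGF computation is unnecessary.
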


For concentration, we are interested in the properties of norm-subGaussian martingale difference sequences. Concretely, they are sequences satisfying the following conditions.

\begin{condition} \label{cond:subGmartingale}
Consider random vectors $\X_1, \ldots, \X_n \in \R^d$, and corresponding filtrations $\F_i = \sigma(\X_1, \ldots, \X_i)$ for $i\in [n]$, such that $\X_i |\F_{i-1}$ is zero-mean $\nSG(\sigma_i)$ with $\sigma_i \in \F_{i-1}$. That is:
\begin{equation*}
\E [\X_i |\F_{i-1}]  = 0, \quad \Pr\left(\norm{\X_i} \ge t | \F_{i-1}\right) \le 2 e^{-\frac{t^2}{2\sigma_i^2}}, \qquad \forall t \in \R, \forall i \in [n].
\end{equation*}
\end{condition}

Similar to subGaussian random variables, we can also prove a Hoeffding-type inequality for norm-subGaussian random vectors which is tight up to a $\log(d)$ factor.

\begin{lemma} [Hoeffding-type inequality for norm-subGaussian] \label{lem:concen_sum}
 Given $\X_1, \ldots, \X_n \in \R^d$ that satisfy condition \ref{cond:subGmartingale},  with fixed $\{\sigma_i\}$, then for any $\logt >0$, there exists an absolute constant $c$ such that, with probability at least $1-2d \cdot e^{-\logt}$:
\begin{equation*}
\norm{\sum_{i=1}^n \X_i}  \le c\cdot \sqrt{\sum_{i=1}^n \sigma_i^2 \cdot \logt}.
\end{equation*}
\end{lemma}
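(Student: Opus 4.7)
The plan is to reduce the vector-valued claim to a matrix-concentration statement via the standard symmetric dilation. Define
\[
\mathcal{M}_i \defeq \begin{pmatrix} 0 & \X_i^{\top} \\ \X_i & 0 \end{pmatrix} \in \R^{(d+1)\times(d+1)}.
\]
Then $\E[\mathcal{M}_i \mid \F_{i-1}] = 0$, the nonzero eigenvalues of $\mathcal{M}_i$ are $\pm\|\X_i\|$ (so $\|\mathcal{M}_i\|_{\mathrm{op}} = \|\X_i\|$), and critically $\lambda_{\max}\bigl(\sum_{i=1}^n \mathcal{M}_i\bigr) = \bigl\|\sum_{i=1}^n \X_i\bigr\|$. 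Hence it suffices to control $\lambda_{\max}$ of this symmetric matrix martingale.

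Next I would establish a conditional matrix-MGF bound of the form
\[
\E\bigl[\exp(\theta \mathcal{M}_i) \mid \F_{i-1}\bigr] \preceq \exp\bigl(c\,\theta^2 \sigma_i^2 \, I_{d+1}\bigr), \qquad \forall \theta \in \R,
\]
for some absolute constant $c$. Because $\mathcal{M}_i^{2k}$ has its only nonzero spectrum equal to $\|\X_i\|^{2k}$ on the two-dimensional subspace spanned by $\X_i$ and the extra coordinate, the Taylor expansion of $\exp(\theta \mathcal{M}_i)$ is governed by the scalar moments $\E[\|\X_i\|^{2k}\mid \F_{i-1}]$. By Lemma~\ref{lem:nSGvariant} these moments are bounded by $(ck\sigma_i^2)^k$, and summing the resulting series produces the claimed matrix-MGF domination.

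With the per-step MGF bound in hand, the plan is to invoke Tropp's master tail bound (or equivalently Lieb's concavity together with the tower property over the filtration $\F_i$) to iterate through the $n$ steps:
\[
\E\,\mathrm{tr}\exp\Bigl(\theta \sum_{i=1}^{n} \mathcal{M}_i\Bigr) \le (d+1)\,\exp\Bigl(c\,\theta^2 \sum_{i=1}^{n}\sigma_i^2\Bigr).
\]
Combining this with Markov's inequality in the form $\Pr(\lambda_{\max}(\Xi)\ge t) \le e^{-\theta t}\,\E\,\mathrm{tr}\exp(\theta \Xi)$ and optimizing $\theta = t/(2cV)$ with $V\defeq \sum_i \sigma_i^2$ yields
\[
\Pr\Bigl(\Bigl\|\sum_{i=1}^{n}\X_i\Bigr\| \ge t\Bigr) \le 2(d+1)\,\exp\Bigl(-\tfrac{t^2}{4cV}\Bigr).
\]
Setting the right-hand side equal to $2d\,e^{-\logt}$ (absorbing $(d+1)/d \le 2$ into the constant) and solving for $t$ gives $t = c'\sqrt{V\cdot \logt}$, which is exactly the claimed bound.

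The main obstacle is the matrix-MGF step, because sub-Gaussianity is assumed only for the scalar $\|\X_i\|$ rather than for the individual entries of $\X_i$. One must therefore verify that the \emph{matrix} exponential can be dominated by a scalar multiple of the identity, which comes down to a careful moment calculation on the two-dimensional spectral subspace of $\mathcal{M}_i$; once this is in place, everything else is a standard application of Tropp's matrix concentration framework. A naive alternative via a coordinate-wise Azuma union bound would only control $\|\cdot\|_\infty$ and lose a $\sqrt{d}$ factor when converting to $\|\cdot\|_2$, which is why the dilation route is needed in order to achieve the advertised $\log d$-type dimension dependence.
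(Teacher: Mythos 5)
The paper does not include its own proof of this lemma; it defers to the companion technical note \citep{jinshortnote}, so there is no in-paper argument to compare against. Judged on its own terms, your plan---symmetric dilation to a $(d{+}1)\times(d{+}1)$ Hermitian martingale, then a conditional matrix-MGF bound fed into Tropp's master tail bound via Lieb's concavity and the tower property---is a standard and workable route, and your identification of the matrix-MGF step as the crux is correct. The algebra $\lambda_{\max}\bigl(\sum_i \mathcal{M}_i\bigr)=\bigl\|\sum_i \X_i\bigr\|$, the Chernoff optimization, and the final bookkeeping are all fine (the factor $(d+1)$ versus $2d$ and the small-$\logt$ regime are harmless nitpicks absorbed by choosing $c$ larger and noting the claim is vacuous when $2de^{-\logt}\ge 1$).

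The one place your sketch leaves a real gap is the justification of $\E[\exp(\theta\mathcal{M}_i)\mid\F_{i-1}]\preceq \exp(c\theta^2\sigma_i^2 I)$. You write that the Taylor expansion of $\exp(\theta\mathcal{M}_i)$ ``is governed by the scalar moments $\E[\|\X_i\|^{2k}\mid\F_{i-1}]$,'' which implicitly treats only the even powers. But only the $j=1$ term is killed by the martingale condition $\E[\mathcal{M}_i\mid\F_{i-1}]=0$. For $j=2k+1\ge 3$ one has $\mathcal{M}_i^{2k+1}=\|\X_i\|^{2k}\mathcal{M}_i$, and $\E[\|\X_i\|^{2k}\mathcal{M}_i\mid\F_{i-1}]$ need \emph{not} vanish, since norm-subGaussianity gives a zero mean but no sign symmetry of the law of $\X_i$. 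These odd-power contributions to $\lambda_{\max}\bigl(\E[\exp(\theta\mathcal{M}_i)\mid\F_{i-1}]\bigr)$ must be bounded separately, e.g.\ by Jensen through $\|\E[\mathcal{M}_i^j\mid\F_{i-1}]\|\le\E[\|\X_i\|^j\mid\F_{i-1}]$ and then summing $\sum_{j\ge 2}\frac{|\theta|^j}{j!}\E\|\X_i\|^j\le e^{c\theta^2\sigma_i^2}-1$, or equivalently by the eigenvalue-wise bound $e^x\le 1+x+\tfrac{x^2}{2}e^{|x|}$, which reduces the problem to controlling $\E\bigl[\|\X_i\|^2 e^{|\theta|\|\X_i\|}\mid\F_{i-1}\bigr]$. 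With either of these fixes the matrix-MGF domination does hold for all $\theta$, so the proof goes through; but as written the sketch silently assumes the odd terms are negligible, and that step needs to be made explicit.
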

When $\{\sigma_i\}$ is also random, we have the following.

\begin{lemma} \label{lem:concen_sum_randB}
Given $\X_1, \ldots, \X_n \in \R^d$ that satisfy condition \ref{cond:subGmartingale}, then for any $\logt >0$, and $B > b > 0$, there exists an absolute constant $c$ such that, with probability at least $1-2d \log (B/b) \cdot e^{-\logt}$:
\begin{equation*}
\sum_{i=1}^n \sigma_i^2 \ge B \quad \text{or} \quad
\norm{\sum_{i=1}^n \X_i}  \le  c\cdot \sqrt{\max\{\sum_{i=1}^n \sigma_i^2, b\}\cdot \logt}.
\end{equation*}
\end{lemma}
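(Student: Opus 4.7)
The plan is to prove Lemma~\ref{lem:concen_sum_randB} from Lemma~\ref{lem:concen_sum} via a stratification (``peeling'') argument on dyadic levels of the cumulative variance proxy $S_n \defeq \sum_{i=1}^n \sigma_i^2$, combined with a stopping-time reduction that converts the random-$\sigma_i$ case into finitely many instances of the bounded-variance case.

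First I would set $K \defeq \lceil \log_2(B/b) \rceil$ and, for each $k = 0, 1, \ldots, K$, introduce the stopping time
\[
\tau_k \defeq \min\Big\{ i \in [n] : \sum_{j=1}^i \sigma_j^2 > 2^k b \Big\} \wedge n,
\]
which is a valid $\{\F_i\}$-stopping time since each $\sigma_i$ is $\F_{i-1}$-measurable. Define the stopped partial sums $S^{(k)} \defeq \sum_{i=1}^{\tau_k} \X_i$. By construction, $\sum_{j=1}^{\tau_k-1} \sigma_j^2 \le 2^k b$, so the only issue is the single ``overshoot'' term $\sigma_{\tau_k}^2$. I would absorb this by noting that the overshoot contributes at most a constant factor: either one bounds $\sum_{j=1}^{\tau_k} \sigma_j^2 \le 2^{k+1} b$ after a deterministic inflation of $b$, or one truncates the last increment. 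In either case, the conditional tail assumption $\Pr(\|\X_i\| \ge t \mid \F_{i-1}) \le 2\exp(-t^2/(2\sigma_i^2))$ with $\sigma_i \in \F_{i-1}$ is exactly the input needed for the martingale-MGF proof underlying Lemma~\ref{lem:concen_sum}; the lemma as stated fixes $\{\sigma_i\}$, but its proof (as given in \citet{jinshortnote}) carries over verbatim when $\sigma_i$ is $\F_{i-1}$-measurable and $\sum \sigma_i^2$ is deterministically bounded, which is the case for the stopped sequence.

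Applying this to each $k$ yields, with probability at least $1 - 2d\, e^{-\logt}$,
\[
\|S^{(k)}\| \le c \sqrt{2^{k+1} b \cdot \logt}.
\]
A union bound over $k = 0, 1, \ldots, K$ gives a failure probability of at most $2d(K+1) e^{-\logt} \le 2d\log(B/b)\, e^{-\logt}$ (after absorbing the $+1$ into the log). On the intersection of these good events, I would consider three cases on the (random) value of $S_n$:
\begin{itemize}
\item If $S_n \ge B$, the first alternative in the conclusion holds and we are done.
\item If $S_n < b$, then every $\tau_k = n$, so $S^{(0)} = \sum_i \X_i$ and the $k=0$ bound gives $\|\sum_i \X_i\| \le c\sqrt{b \cdot \logt} = c\sqrt{\max\{S_n, b\}\cdot \logt}$.
\item Otherwise $S_n \in [b, B)$, and there is a unique $k^\star \in \{0, \ldots, K-1\}$ with $S_n \in [2^{k^\star} b, 2^{k^\star+1} b)$. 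For this $k^\star$ we have $\tau_{k^\star+1} = n$, so $S^{(k^\star+1)} = \sum_i \X_i$, and the bound for level $k^\star+1$ gives $\|\sum_i \X_i\| \le c\sqrt{2^{k^\star+2} b \cdot \logt} \le c'\sqrt{S_n \cdot \logt}$.
\end{itemize}
In all three cases the second alternative in the conclusion holds, completing the proof.

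The main obstacle will be the technical subtlety in applying Lemma~\ref{lem:concen_sum} to stopped sequences with $\F_{i-1}$-measurable $\sigma_i$: I have to verify that the proof of the fixed-$\{\sigma_i\}$ version actually uses only the conditional tail bound and the deterministic ceiling on $\sum \sigma_i^2$, and to handle the single boundary increment cleanly (either via the factor-of-two inflation or via a direct one-term union bound using the $\nSG$ tail of $\X_{\tau_k}$). Once that martingale MGF/Chernoff computation is stated carefully, the rest is a clean dyadic union bound contributing the $\log(B/b)$ factor.
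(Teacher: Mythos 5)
The paper itself does not spell out a proof of this lemma---it defers it to the technical note \citep{jinshortnote}---so your proposal has to be judged on its own terms. Your overall route (dyadic peeling over levels $2^k b$ of the predictable variance $S_n=\sum_i\sigma_i^2$, a bounded-variance Hoeffding bound at each level, a union bound contributing the $\log(B/b)$ factor, and a case analysis on where $S_n$ lands) is the standard and correct way to deduce the random-$\sigma_i$ statement from the fixed-$\sigma_i$ one, and your case analysis at level $k^\star+1$ is sound: when $S_n<2^{k^\star+1}b$ the threshold is never crossed, the stopped sum is the full sum, and $2^{k^\star+2}b\le 4S_n$ gives the claimed bound up to the absolute constant. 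The only cosmetic discrepancy is that your union bound runs over $\lceil\log_2(B/b)\rceil+1$ levels rather than exactly $\log(B/b)$, which changes the failure probability by a benign constant factor.

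Two points need tightening. First, the ``factor-of-two inflation'' option for the overshoot is false as stated: a single predictable increment $\sigma_{\tau_k}^2$ can be as large as $B$, so nothing deterministic bounds $\sum_{j\le\tau_k}\sigma_j^2$ by $2^{k+1}b$. The clean implementation is the predictable truncation $\Y_i^{(k)}\defeq \X_i\,\mathbf{1}\{\sum_{j\le i}\sigma_j^2\le 2^k b\}$ (the indicator is $\F_{i-1}$-measurable because $\sigma_i\in\F_{i-1}$), whose conditional $\nSG$ parameters sum to at most $2^k b$ deterministically; note also that truncating ``up to $\tau_k-1$'' would be wrong at the level you actually use, since there $\tau_{k^\star+1}=n$ with no exceedance and you must recover the full sum, which the indicator form does automatically. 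Second, your argument genuinely requires a version of Lemma~\ref{lem:concen_sum} for $\F_{i-1}$-measurable $\sigma_i$ with an almost-sure bound on $\sum_i\sigma_i^2$; you flag this correctly, and it is true---the conditional-MGF/martingale telescoping behind the fixed-$\sigma$ bound only uses the conditional tail assumption and a deterministic ceiling on the total variance---but since the lemma as quoted assumes fixed $\{\sigma_i\}$, this extension must be stated and verified (or cited from \citealp{jinshortnote}) rather than invoked as ``verbatim.'' With those two repairs your proof is complete.
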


Finally, we can also provide concentration inequalities for the sum of norm squares of norm-subGaussian random vectors, and for the sum of inner products of norm-subGaussian random vectors with another set of random vectors.

\begin{lemma}\label{lem:concen_square}
Given $\X_1, \ldots, \X_n \in \R^d$ that satisfy Condition \ref{cond:subGmartingale} with fixed $\sigma_1 = \ldots = \sigma_n = \sigma$, then there exists an absolute constant $c$ such that, for any $\logt >0$, with probability at least $1-e^{-\logt}$:
\begin{equation*}
\sum_{i=1}^n\norm{\X_i}^2 \le c \cdot \sigma^2 \left(n + \logt\right).
\end{equation*}
\end{lemma}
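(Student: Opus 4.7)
The plan is to reduce the statement to a standard Chernoff/Markov-type argument by establishing a uniform conditional moment generating function (MGF) bound for $\|\X_i\|^2$. The main observation, available from Lemma~\ref{lem:nSGvariant}, is that each $\|\X_i\|^2 \mid \F_{i-1}$ is subExponential with parameter of order $\sigma^2$, which is precisely the regime in which one can obtain exponential tail bounds for the sum.

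First I would work out an explicit conditional MGF bound. Starting from the tail assumption
\begin{equation*}
\Pr\bigl(\|\X_i\|^2 \ge u \,\big|\, \F_{i-1}\bigr) \le 2\exp\!\bigl(-u/(2\sigma^2)\bigr),
\end{equation*}
I would integrate against $\exp(\lambda u)$ to obtain, for every $\lambda \in (0, 1/(2\sigma^2))$,
\begin{equation*}
\E\!\left[\exp(\lambda \|\X_i\|^2) \,\big|\, \F_{i-1}\right] \;\le\; 1 + \frac{4\sigma^2 \lambda}{1-2\sigma^2\lambda}.
\end{equation*}
In particular, choosing $\lambda = 1/(4\sigma^2)$ yields a universal bound $\E[\exp(\lambda \|\X_i\|^2) \mid \F_{i-1}] \le e^{C}$ for some absolute constant $C$. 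This is the step where I have to be slightly careful: the constant $c$ appearing in the subExponential claim of Lemma~\ref{lem:nSGvariant} must be tracked so that a valid range of $\lambda$ remains, but this is a routine computation, not a real obstacle.

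Next I would telescope via the tower property. Because the bound on the conditional MGF is deterministic (and uniform in $i$), iterating conditional expectations over $\F_{n-1}, \F_{n-2}, \dots, \F_0$ gives
\begin{equation*}
\E\!\left[\exp\!\Bigl(\lambda \sum_{i=1}^n \|\X_i\|^2\Bigr)\right] \;\le\; e^{Cn}.
\end{equation*}
Applying Markov's inequality with the same $\lambda = 1/(4\sigma^2)$ then yields
\begin{equation*}
\Pr\!\Bigl(\sum_{i=1}^n \|\X_i\|^2 \ge t\Bigr) \;\le\; \exp\!\bigl(Cn - t/(4\sigma^2)\bigr).
\end{equation*}
Setting $t = 4\sigma^2(Cn + \logt)$ and absorbing constants into a single absolute constant $c$ gives the desired tail probability $e^{-\logt}$.

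The proof is almost entirely mechanical once the conditional MGF bound is in hand; the only place that requires attention is the constants, since the Lemma statement insists on a single absolute constant $c$ multiplying $\sigma^2(n + \logt)$. Unlike Lemma~\ref{lem:concen_sum}, there is no $\log(d)$ factor here because we are bounding a scalar quantity ($\|\X_i\|^2$) rather than a vector norm, so no union bound over directions is required; this is the reason the stated bound is dimension-free.
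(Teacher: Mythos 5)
Your proposal is correct and is essentially a spelled-out version of the paper's one-line proof: the paper observes that $\norm{\X_i}^2\mid\F_{i-1}$ is $c\sigma^2$-subExponential (via Lemma~\ref{lem:nSGvariant}) and then invokes a ``standard Bernstein concentration inequality for subExponential random variables,'' which in the martingale setting is exactly the conditional-MGF-plus-tower-property-plus-Chernoff argument you carry out explicitly. Your derivation of the conditional MGF bound directly from the tail of Condition~\ref{cond:subGmartingale}, your choice $\lambda = 1/(4\sigma^2)$, the telescoping, and the final Markov step all check out, and your remark about why no $\log d$ union bound is needed (one is bounding a scalar, not a vector) is the right intuition.
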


\begin{proof} Note there exists an absolute constant $c$ such that $\E[\norm{\X_i}^2 |\F_{i-1}] \le c\cdot \sigma^2$, and $\norm{\X_i}^2 |\F_{i-1}$
is $c \cdot \sigma^2$-subExponential. This lemma directly follows from standard Bernstein concentration inequalities for subExponential random variables.
\end{proof}

\begin{lemma}\label{lem:concen_inner}
 Given $\X_1, \ldots, \X_n \in \R^d$ that satisfy Condition \ref{cond:subGmartingale} and random vectors $\{\u_i\}$ that satisfy $\u_i \in \F_{i-1}$ for all $i\in [n]$, then for any $\logt >0$, $\lambda >0$, there exists absolute constant $c$ such that, with probability at least $1-e^{-\logt}$:
\begin{equation*}
\sum_{i} \la \u_i, \X_i \ra \le c \cdot \lambda\sum_i \norm{\u_i}^2\sigma_i^2 + \frac{1}{\lambda}\cdot \logt.
\end{equation*}
\end{lemma}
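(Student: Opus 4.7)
The plan is to prove this self-normalized concentration bound via the classical Chernoff/Cramér supermartingale construction. Since $\u_i \in \F_{i-1}$, conditioning on $\F_{i-1}$ renders $\u_i$ deterministic. By Lemma \ref{lem:nSGvariant}, $\langle \u_i/\norm{\u_i}, \X_i \rangle \mid \F_{i-1}$ is $c\sigma_i$-subGaussian for an absolute constant $c$, so $\langle \u_i, \X_i \rangle \mid \F_{i-1}$ is $(c\norm{\u_i}\sigma_i)$-subGaussian and satisfies the MGF bound
\begin{equation*}
\E\!\left[\exp(\lambda \la \u_i, \X_i \ra) \mid \F_{i-1}\right] \le \exp\!\left(C \lambda^2 \norm{\u_i}^2 \sigma_i^2\right)
\end{equation*}
for some absolute constant $C$ (I will use $C$ in place of $c$ in the statement to avoid collision, and rename at the end).

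Next, I would define the partial sums $S_k \defeq \sum_{i=1}^{k} \la \u_i, \X_i \ra$ and $V_k \defeq \sum_{i=1}^{k}\norm{\u_i}^2 \sigma_i^2$, and introduce the process
\begin{equation*}
M_k \defeq \exp\!\left(\lambda S_k - C\lambda^2 V_k\right), \qquad M_0 \defeq 1.
\end{equation*}
Because $V_k - V_{k-1} = \norm{\u_k}^2 \sigma_k^2$ is $\F_{k-1}$-measurable, the conditional MGF bound above yields
\begin{equation*}
\E[M_k \mid \F_{k-1}] = M_{k-1} \cdot e^{-C\lambda^2 \norm{\u_k}^2 \sigma_k^2}\, \E\!\left[\exp(\lambda \la \u_k, \X_k \ra) \mid \F_{k-1}\right] \le M_{k-1},
\end{equation*}
so $\{M_k\}$ is a nonnegative supermartingale with $\E M_n \le M_0 = 1$.

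Finally, Markov's inequality gives $\Pr(M_n \ge e^{\logt}) \le e^{-\logt}$, i.e.\ with probability at least $1 - e^{-\logt}$,
\begin{equation*}
\lambda S_n - C\lambda^2 V_n \le \logt,
\end{equation*}
which upon dividing by $\lambda > 0$ becomes $S_n \le C\lambda V_n + \logt/\lambda$, matching the statement (with $c = C$). The main ``obstacle'' is really only bookkeeping: one must invoke Lemma \ref{lem:nSGvariant} with conditional subGaussianity (justified since $\u_i$ is $\F_{i-1}$-measurable and $\sigma_i \in \F_{i-1}$), and one must be careful that the randomness of both $\u_i$ and $\sigma_i$ is absorbed into the $\F_{i-1}$-measurable compensator $V_k$ so that the supermartingale property is preserved pathwise rather than only in expectation. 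No union bound or discretization over $\lambda$ is needed, since $\lambda$ is fixed in the statement.
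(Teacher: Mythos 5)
Your proposal is correct and is essentially identical to the paper's own argument: both condition on $\F_{i-1}$ to invoke Lemma \ref{lem:nSGvariant}, obtain the conditional MGF bound $\E[e^{\lambda \la \u_i, \X_i\ra} \mid \F_{i-1}] \le e^{c\lambda^2 \norm{\u_i}^2 \sigma_i^2}$, and then use the exponential supermartingale (iterated-expectation) construction followed by Markov's inequality. The only difference is presentational---you package the tower-property computation as an explicit supermartingale $M_k$, whereas the paper unrolls the same conditioning step by step.
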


\begin{proof}
For any $i \in [n]$ and fixed $\lambda > 0$, since $\u_i \in \F_{i-1}$, according to Lemma \ref{lem:nSGvariant} there exists a constant $c$ such that $\la \u_i, \X_i \ra |\F_{i-1}$ is $c\cdot \norm{\u_i}\sigma_i$-subGaussian. Thus:
\begin{equation*}
\E[ e^{\lambda \la \u_i, \X_i\ra} | \F_{i-1}] \le e^{c\cdot \lambda^2 \norm{\u_i}^2 \sigma_i^2}.
\end{equation*}
Therefore, consider the following quantity:
\begin{align*}
\E e^{\sum_{i=1}^t  (\lambda \la \u_i, \X_i\ra - c \cdot\lambda^2 \norm{\u_i}^2 \sigma_i^2)} 
&= \E \left[e^{\sum_{i=1}^{t-1} \lambda \la \u_i, \X_i\ra - c \cdot\sum_{i=1}^{t}\lambda^2 \norm{\u_i}^2 \sigma_i^2}   
\cdot \E\left( e^{\lambda \la \u_t, \X_t\ra} | \mathcal{F}_{t-1}\right) \right]\\
&\le  \E \left[e^{\sum_{i=1}^{t-1} \lambda \la \u_i, \X_i\ra - c \cdot\sum_{i=1}^{t}\lambda^2 \norm{\u_i}^2 \sigma_i^2}   
\cdot e^{c \cdot\lambda^2 \norm{\u_t}^2 \sigma_t^2} \right] \\
&= \E e^{\sum_{i=1}^{t-1}  (\lambda \la \u_i, \X_i\ra - c \cdot\lambda^2 \norm{\u_i}^2 \sigma_i^2)}  \le 1.
\end{align*}
By Markov's inequality, for any $t > 0$:
\begin{align*}
\Pr\left(\sum_{i=1}^t  (\lambda \la \u_i, \X_i\ra - c \cdot\lambda^2 \norm{\u_i}^2 \sigma_i^2) \ge t\right) &\le
\Pr\left( e^{\sum_{i=1}^t  (\lambda \la \u_i, \X_i\ra - c \cdot\lambda^2 \norm{\u_i}^2 \sigma_i^2)} \ge e^t\right) \\
&\le e^{-t} \E e^{\sum_{i=1}^t  (\lambda \la \u_i, \X_i\ra - c \cdot\lambda^2 \norm{\u_i}^2 \sigma_i^2)} \le e^{-t}.
\end{align*}
This finishes the proof.
\end{proof}

\end{document}